\newcommand{\mymatrix}{
    \left[\begin{gathered}
    \tikzpicture[every node/.style={anchor=south west}]
        \node[minimum width=4cm,minimum height=1.0cm] at (2.3,0) {$\bigzero_{(\text{len}(\mathbf{y}_{\left\langle t \right\rangle }) - \text{len}(\mathbf{y}))  \times \text{len}(\mathfrak{m}(\mathbf{y}_{\left\langle t-1 \right\rangle }, r_{\left\langle t \right\rangle}))}$};
        \node[minimum width=1.6cm,minimum height=1.0cm] at (0,1.2) {$\mathbf{0}_{\text{len}(\mathbf{y})}$};
        \node[minimum width=4.3cm,minimum height=1.0cm] at (0.72,1.2) {$\bigid_{\text{len}(\mathbf{y})}$};
        \node[minimum width=4.3cm,minimum height=1.0cm] at (4.4,1.2) {$\bigzero_{\text{len}(\mathbf{y}) \times (\text{len}(\mathfrak{m}(\mathbf{y}_{\left\langle t-1 \right\rangle }, r_{\left\langle t \right\rangle}))-\text{len}(\mathbf{y}) - 1)}$};
        \draw[dashed] (0,1.2) -- (9.5,1.2);
        \draw[dashed] (1.8,1.2) -- (1.8,2.4);
        \draw[dashed] (4.0,1.2) -- (4.0,2.4);
    \endtikzpicture
    \end{gathered}\right]
}
\newtheorem{defa}{Definition}
\newtheorem{prom}{Problem}
\newtheorem{ex}{Example}
\newtheorem{thm}{Theorem}
\newcommand{\bigzero}{\mbox{\normalfont\small\bfseries O}}
\newcommand{\bigid}{\mbox{\normalfont\small\bfseries I}}
\newcommand*\bigcdot{\mathpalette\bigcdot@{0.8}}
\newcommand*\bigcdot@[2]{\mathbin{\vcenter{\hbox{\scalebox{#2}{$\m@th#1\bullet$}}}}}
\small \title{Phy-Taylor: Physics-Model-Based Deep Neural Networks}}
\author[1,*]{Yanbing Mao}
\author[2]{Lui~Sha}
\author[3]{Huajie~Shao}
\author[4]{Yuliang~Gu}
\author[5]{Qixin~Wang}
\author[2]{Tarek Abdelzaher}
\affil[1]{Engineering Technology Division, Wayne State University, Detroit, MI 48201, USA}
\affil[2]{Department of Computer Science, University of Illinois at Urbana-Champaign, Urbana, IL 61801, USA}
\affil[3]{Department of Computer Science, College of William \& Mary, Williamsburg, VA 23185, USA}
\affil[4]{Department of Mechanical Engineering, University of Illinois at Urbana--Champaign, Urbana, IL 61801, USA}
\affil[5]{Department of Computing, Hong Kong Polytechnic University, Hong Kong SAR, China}
\affil[*]{corresponding author: Yanbing Mao (e-mail: hm9062@wayne.edu)}
\begin{abstract}
Purely data-driven deep neural networks (DNNs) applied to physical engineering systems can infer relations that violate physics laws, thus leading to unexpected consequences. To address this challenge, we propose a physics-model-based DNN framework, called Phy-Taylor, that accelerates learning compliant representations with physical knowledge. The Phy-Taylor framework makes two key contributions; it introduces a new architectural physics-compatible neural network (PhN), and features a novel compliance mechanism, we call {\em Physics-guided Neural Network Editing\/}. The PhN aims to directly capture nonlinearities inspired by physical quantities, such as kinetic energy, potential energy, electrical power, and aerodynamic drag force. To do so, the PhN augments neural network layers with two key components: (i) monomials of Taylor series expansion of nonlinear functions capturing physical knowledge, and (ii) a suppressor for mitigating the influence of noise. The neural-network editing mechanism further modifies network links and activation functions consistently with physical knowledge. As an extension, we also propose a self-correcting Phy-Taylor framework that introduces two additional capabilities: (i) physics-model-based safety relationship learning, and (ii) automatic output correction when violations of safety occur. Through experiments, we show that (by expressing hard-to-learn nonlinearities directly and by constraining dependencies) Phy-Taylor features considerably fewer parameters, and a remarkably accelerated training process, while offering enhanced model robustness and accuracy.
\end{abstract}
\begin{document}

\flushbottom
\maketitle
\thispagestyle{empty}

\section{Introduction}
The paper proposes a novel physics-model-based deep neural network framework, called Phy-Taylor, that addresses a critical flaw in purely data-driven neural networks, when used to model aspects of physical engineering systems. Namely, it addresses the potential lack of agreement between learned latent neural network representations and prior physical knowledge -- a flaw that sometimes leads to catastrophic consequences~\cite{brief2021ai}. As shown in Figure \ref{PhyArt}, the Phy-Taylor framework introduces two contributions: the deep physics-compatible neural networks and a physics-guided neural network editing mechanism, aiming at ensuring compliance with prior physical knowledge. 
%The Taylor neural operator consists of physics-sentient neural networks aims to well capture physics features (e.g., mechanical power, rolling resistance and aerodynamic drag force) that drive the state evolving of physical systems. Meanwhile, the physics-guided NN editing -- including link editing and activation editing -- aims to embed the physics knowledge into NN inside for guaranteeing the strict compliance with physics law.

\begin{figure*}[!t]
\centering
\includegraphics[scale=0.50]{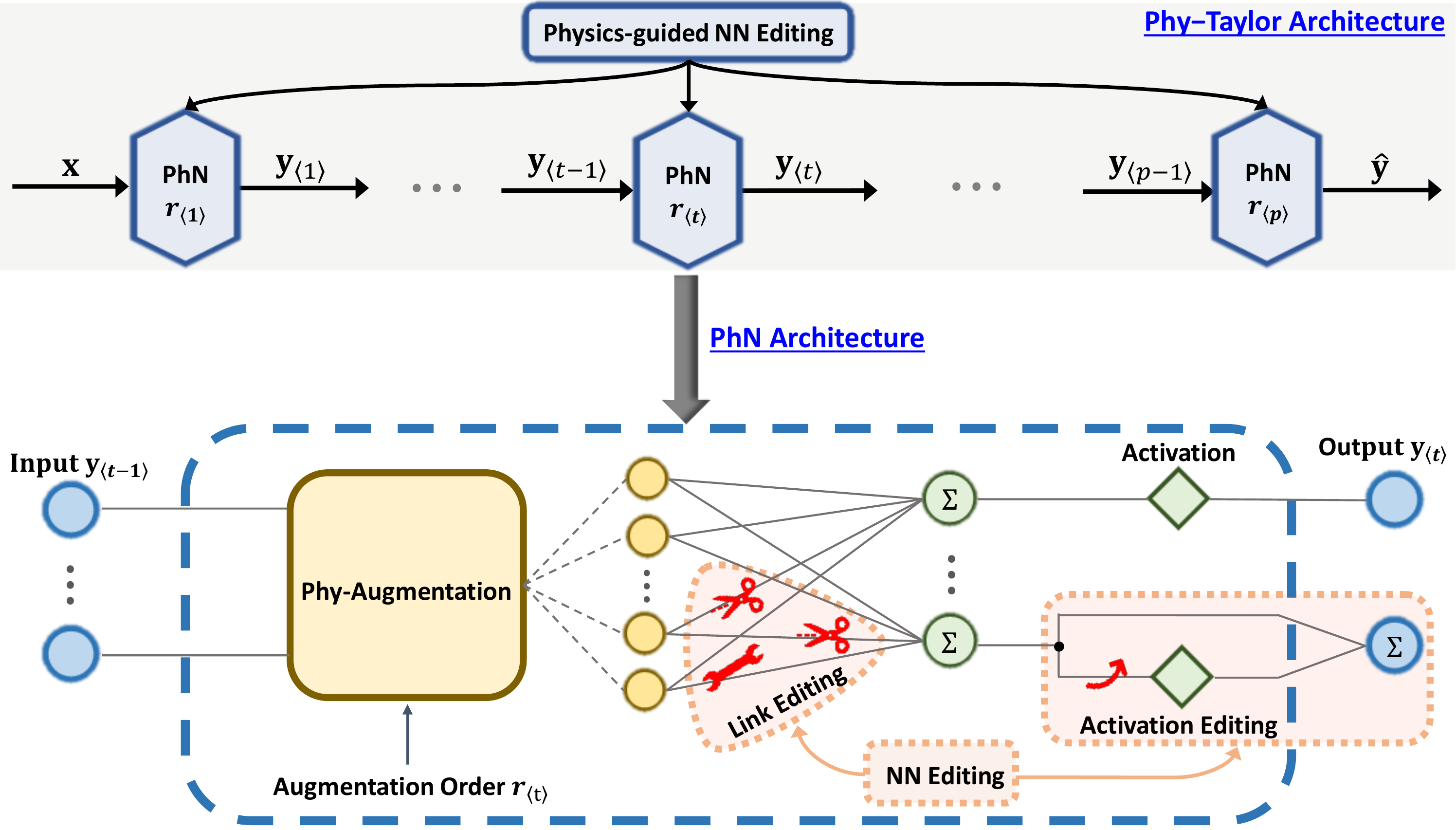}
\caption{Architectures of Phy-Taylor and physics-compatible neural network, having neural network (NN) editing including link editing and activation editing.}
\label{PhyArt}
\end{figure*}

The work contributes to emerging research on physics-enhanced deep neural networks. Current approaches include physics-informed neural networks~\cite{wang2021physics,willard2021integrating,jia2021physics,jia2019physics,wang2021deep,lu2021physics,chen2021theory,wang2020deep,xu2022physics,karniadakis2021physics,wang2020towards,daw2017physics,cranmer2020lagrangian,finzi2020simplifying,greydanus2019hamiltonian}, physics-guided neural-network architectures~\cite{muralidhar2020phynet,masci2015geodesic,monti2017geometric,horie2020isometric,wang2021incorporating,li2019learning} and physics-inspired neural operators~\cite{lusch2018deep,li2020fourier}. The physics-informed networks and physics-guided architectures use compact partial differential equations (PDEs) for formulating loss functions and/or architectural components. Physics-inspired neural operators, such as the Koopman neural operator~\cite{lusch2018deep} and the Fourier neural operator~\cite{li2020fourier}, on the other hand, map nonlinear functions into alternative domains, where it is easier to train their parameters from observational data and reason about convergence.
These frameworks improve consistency with prior analytical knowledge, but remain problematic in several respects. For example, (i) due to incomplete knowledge, the compact or precise PDEs may not always be available, and (ii) fully-connected neural networks can introduce spurious correlations that deviate from strict compliance with available well-validated physical knowledge. Instead, through the use of a Taylor-series expansion, the Phy-Taylor is able to leverage partial knowledge. Moreover, thanks to the neural editing mechanism, the framework removes links and reshapes activation functions not consistent with physics-based representations.

 The Phy-Taylor framework leverages the intuition that most physical relations live in low-dimensional manifolds, shaped by applicable physical laws. It's just that the estimation of key physical variables from high-dimensional system observations is often challenging. By expressing known knowledge as relations between yet-to-be-computed latent variables, we force representation learning to converge to a space, where these variables represent desired physical quantities, shaped by the applicable (expressed) physical laws. In effect, by shaping non-linear terms and relations in the latent space, we arrive at a desired physics-compliant latent representation. More specifically, Phy-Taylor offers the following two advantages:
\begin{itemize}
\vspace{-0.1in}
\item \textbf{\textit{Non-linear Physics Term Representation:}} Classical neural networks can learn arbitrary non-linear relations by unfolding them into layers of linear weighting functions and switch-like activations. This mechanism is akin to constructing nonlinearities by stitching together piecewise linear behaviors. Instead, by directly exploiting non-linear terms of the Taylor series expansion, we offer a set of features that express physical nonlinearities much more succinctly, thereby reducing the number of needed parameters and improving accuracy of representation. Monomials of the Taylor series can capture common nonliearities present in physics equations, such as kinetic energy, potential energy, rolling resistance and aerodynamic drag force. The (controllable) model error of the series drops significantly as the series order increases~\cite{konigsberger2013analysis}. 
The approach constructs input features that represent monomials of the Taylor series and adds a compressor for mitigating influence of noise on augmented inputs. 
\vspace{-0.1in}
\item \textbf{\textit{Removing Spurious Correlations:}} The general topology of neural networks allows for models that capture spurious correlations in training samples (overfitting)~\cite{yang2022understanding, sagawa2020investigation}. In contrast, we develop a neural network (topology) editing mechanism in the latent space that removes links among certain latent variables, when these links contradict their intended physical behaviors, thereby forcing the latent representation to converge to variables with the desired semantic interpretation that obey the desired physical relations. 
\vspace{-0.1in}
\end{itemize}

Through experiments with learning the dynamics of autonomous vehicles and other non-linear physical systems, we show that Phy-Taylor exhibits a considerable reduction in learning parameters, a remarkably accelerated training process, and greatly enhanced model robustness and accuracy (viewed from the perspective of long-horizon prediction of a trajectory).  Experiments with safe velocity regulation in autonomous vehicles further demonstrate that the self-correcting Phy-Taylor successfully addresses the dilemma of prediction horizon and computation time that nonlinear model-predictive control and control barrier function are facing in safety-critical control. 
%When no physics knowledge is available for embedding, the Phy-Taylor degrades to the Taylor neural operator. The corresponding experiment of an example of US Illinois climate demonstrated that compared with typical DNNs, the Taylor neural operator still has much faster convergence, and smaller training loss and stronger robustness.

\section{Problem Formulation} \label{sec:problem}
\begin{table}[ht] \footnotesize{
\centering
\caption{Table of Notation}
\begin{tabular}{|l|l|}
\hline
$\mathbb{R}^{n}$:~set of $\emph{n}$-dimensional real vectors  & $\mathbb{R}_{\ge 0}$:~ set of non-negative real numbers         \\ \hline
$\mathbb{N}$:~set of natural numbers & $[\mathbf{x}]_{i}$:~$i$-th entry of vector $\mathbf{x}$       \\ \hline
$[\mathbf{x}]_{i:j}$:~a sub-vector formed by the $i$-th to $j$-th entries of vector $\mathbf{x}$ & $[\mathbf{W}]_{i,j}$:~ element at row $i$ and column $j$ of matrix $\mathbf{W}$\\ \hline
$[\mathbf{W}]_{i,:}$:~ $i$-th row of matrix $\mathbf{W}$ &  $\left[\mathbf{x}~;~\mathbf{y}\right]$:~stacked (tall column) vector of vectors $\mathbf{x}$ and  $\mathbf{y}$  \\ \hline
$\mathbf{0}_{n}$:~ $n$-dimensional vector of all zeros  & $\mathbf{1}_{n}$:~ $n$-dimensional vector of all ones        \\ \hline
$\mathbf{O}_{m \times n}$:~  $m \times n$-dimensional zero matrix  & $\mathbf{I}_{n}$:~  $n \times n$-dimensional identity matrix   \\ \hline
$|| \cdot ||$:~ Euclidean norm of a vector or absolute value of a number & $\odot$:~ Hadamard product  \\ \hline
$\bigcdot$:~ multiplication operator & $\mathrm{len}(\mathbf{x})$:~ length of vector $\mathbf{x}$  \\ \hline
$\text{act}$:~ activation function & $\text{sus}$:~ suppressor function  \\ \hline
$\top$:~matrix or vector transposition & \text{ina}: a function that is inactive\\ \hline
$\boxplus$:~a known model-substructure parameter & $*$:~an unknown model-substructure parameter \\ \hline
\end{tabular}\label{notation}}
\end{table}

%\tarek{I tried to re-write this section based on what I think you are saying but please check if it is accurate.}

%\shao{write the problem statement first and then introduce the math formula. For instance, ``Given a set of input variables $\mathbf{x}$ and the relationship of a few known physical variables, our goal is to learn a physical model that can estimate the corresponding target value $\mathbf{y}$. Mathematically, we can express it with the following formula''}

Consider the problem of computing some output vectors, $\mathbf{y}$, from a set of observations, $\mathbf{x}$. The relation between $\mathbf{x}$ and $\mathbf{y}$ is partially determined by physical models of known structure (but possibly unknown parameter values) and partially unknown, thus calling for representation learning of the missing  substructures using neural network observables. For example, $\mathbf{y}$ might denote the estimated momentum and future position of a target as a function of a vector of $\mathbf{x}$, that include its position, velocity, and type. In this formulation, position and velocity might be directly related to output quantities via known physical relations, but type is represented only indirectly by an image that requires some representation learning in order to translate it into relevant parameters (such as mass and maneuverability) from which the outputs can be computed. We express the overall input/output relation by the function:
\begin{align}
\mathbf{y} = \underbrace{\mathbf{A}}_{\text{weight matrix}} \cdot \underbrace{\mathfrak{m}(\mathbf{x},r)}_{\text{node-representation vector}} + \underbrace{\mathbf{f}(\mathbf{x})}_{\text{model mismatch}} \triangleq \underbrace{\mathbf{g}(\mathbf{x})}_{\text{ground truth model}},
\label{eq:lobja}
\end{align}
\noindent
where $\mathbf{y}$ and $\mathbf{x}$ are the output and input vectors of overall system model, respectively, and the parameter $r \in \mathbb{N}$ controls model size. For convenience, Table~\ref{notation} summarizes the remaining notations used throughout the paper. 
Since Equation~\eqref{eq:lobja} combines known and unknown model substructures, we distinguish them according to the definition below.

\begin{defa}
For all $i \in \{1, 2, \ldots, \mathrm{len}(\mathbf{y})\}$, $j \in \{1, 2, \ldots, \mathrm{len}(\mathfrak{m}(\mathbf{x},r))\}$, element $[\mathbf{A}]_{i,j}$ is said to be a {\em known model-substructure parameter\/} in Equation~\eqref{eq:lobja} if and only if $\frac{{\partial [\mathbf{f}(\mathbf{x})]_i}}{{\partial {[\mathfrak{m}}(\mathbf{x},r)]_{j}}} \equiv 0$. Otherwise, it is called an {\em unknown model-substructure parameter\/}. 
\label{defj}
\end{defa}

\noindent
Definition~\ref{defj} indicates that the model-substructure knowledge includes:
\begin{itemize}
\vspace{-0.1in}
\item (i) \textit{Known parameter values but completely unknown model formula} (see e.g., the Example \ref{exap1}).
\vspace{-0.1in}
\item (ii) \textit{Partially-known model formula}. For example, in the (lateral) force balance equation of autonomous vehicle \cite{rajamani2011vehicle}: 
  \begin{align}
m\left( {\ddot y + \dot \psi {V_x}} \right) = {F_{\text{yf}}} + {F_{\text{fr}}} + {F_{\text{bank}}}, \label{physicalfea}
\end{align}
the force formula due to road bank angle $\phi$, i.e., ${F_{\text{bank}}} = mg\sin(\phi)$, is known while other force formulas are unknown because of complex and unforeseen driving environments.
\vspace{-0.1in}
\item (iii) \textit{Known model formula but unknown parameter values}.
\end{itemize}

\begin{ex} [Identify Known Model-Substructure Parameters] DNN Design Goal: Use the current position $p(k)$, velocity $v(k)$ and mass $m$ to estimate a vehicle's next velocity $v(k+1)$, sensed road friction coefficient $r(k+1)$ and safety metric $s(k+1)$ of velocity regulation, in the dynamic driving environments. With the knowledge of vehicle dynamics and control \cite{rajamani2011vehicle}, the problem can be mathematically described by
\begin{subequations}
\begin{align}
v\left( {k + 1} \right) &= {g_1}\left( {p\left( k \right),~v\left( k \right),~m} \right), \label{example2a}\\
r\left( {k + 1} \right) &= {g_2}\left( {m,~v\left( k \right)} \right), \label{example2b}\\
s\left( {k + 1} \right) &= {g_3}\left( {v\left( k \right)} \right), \label{example2c}
\end{align}\label{example2}
\end{subequations}
\noindent
\!We here assume the \underline{formulas of ${g_1}(\cdot)$--${g_3}(\cdot)$ are unknown}. While the Equations \eqref{example2b} and \eqref{example2c} indicate that given the inputs, the available knowledge are (i) the road friction coefficient varies with a vehicle's mass and real-time velocity only, and (ii) the considered safety metric depends on velocity only. We let $\mathbf{x} = [p(k);~ v(k);~m]$, $\mathbf{y} = [v(k+1);~r(k+1);~s(k+1)]$, $\mathbf{g}(\mathbf{x}) = [{g_1}({p(k),v(k),m});~ {g_2}( {m,v( k)});~ {g_3}( {v(k)})]$, and  $\mathfrak{m}(\mathbf{x},r) = [1;~ p(k);~ v(k);~ m;~ p^2(k);~ p(k)v(k);~ mp(k);~ v^2(k);~mv(k);~ m^2]$. The ground truth model \eqref{example2} is then equivalently rewritten in the form of \eqref{eq:lobja}:
\begin{align}
\mathbf{y} = \underbrace{\left[ {\begin{array}{*{20}{c}}
*&*&*&*&*&*&*&*&*&*\\
*&0&*&*&0&0&0&*&*&*\\
*&0&*&0&0&0&0&*&0&0
\end{array}} \right]}_{\mathbf{A}} \cdot \mathfrak{m}(\mathbf{x},r) + \underbrace{\mathbf{g}(\mathbf{x}) - \mathbf{A} \cdot \mathfrak{m}(\mathbf{x},r)}_{\mathbf{f}(\mathbf{x})} = \mathbf{g}(\mathbf{x}), \nonumber
\end{align}
which thus encodes the available knowledge points (i) and (ii) to the known model-substructure parameters (i.e., zeros) in system matrix $\mathbf{A}$.\label{exap1}
\end{ex}

%\noindent
%Definition~\ref{defj} indicates that a known model substructure does not mean that its parameter values are known. It merely means that the model formula that governs this substructure is known but the parameter values may be unknown. One example is the learning of a Lyapunov function, which appears in Supplementary Information (Section~\ref{toy}). 

\noindent Considering this definition, the problem addressed in this paper is formally stated below. 
\begin{prom}
Given a time-series of inputs, $\mathbf{x}$, the corresponding outputs, $\mathbf{y}$, and the known model substructures in Equation~\eqref{eq:lobja}, it is desired to develop an end-to-end neural network that directly estimates $\mathbf{y}$ (denoted by $\widehat{\mathbf{y}}$), given $\mathbf{x}$, consistently with all known model substructures. In other words, the model must satisfy the property that for each known model-substructure parameter, $[\mathbf{A}]_{i,j}$, the end-to-end model must ensure that $\frac{{\partial {{[\widehat{\mathbf{y}}}]_i}}}{{\partial {[\mathfrak{m}}\left( {\mathbf{x},r} \right)]_{j}}} \equiv [\mathbf{A}]_{i,j}$ for any $\mathfrak{m}(\mathbf{x}, r)$. 
\label{problem}
\end{prom}

%\begin{asm}
%The physical system \eqref{eq:lobja} has a partially well-known and well-validated physics knowledge about its key properties, but remarkably has high-order unmodeled correlations.
%\label{assumption}
%\end{asm}

\noindent
The above definition allows the system described by Equation~\eqref{eq:lobja} to have an end-to-end model that intertwines well-known substructure properties with high-order unmodeled correlations of unknown nonlinear structure. In this, our problem differs from past seminal frameworks of physics-enhanced DNNs~\cite{wang2021physics,willard2021integrating,jia2021physics,jia2019physics,wang2021deep,lu2021physics,chen2021theory,wang2020deep,xu2022physics,karniadakis2021physics,wang2020towards,daw2017physics,cranmer2020lagrangian,finzi2020simplifying,greydanus2019hamiltonian, muralidhar2020phynet,masci2015geodesic,monti2017geometric,horie2020isometric,wang2021incorporating,li2019learning,lusch2018deep,kani2017dr,belbute2020combining,wu2021deepgleam,guen2020disentangling,garcia2019combining,long2018hybridnet,yin2021augmenting}, that use a compact partial differential equations (PDEs) for formulating the PDEs-regulated loss function and/or DNN architectures to count the degree mean of consistency with PDEs. The proposed solution to Problem~\ref{problem} is the Phy-Taylor framework, which will rely on two building blocks: a deep physics-compatible neural network (PhN) and a physics-guided neural network editing mechanism, presented in the next section.

%\noindent The investigation of Problem~\ref{problem} relies on an assumption, which is formally stated below. 
%\begin{asm}
%The output $\mathbf{y}$ of ground-truth model \eqref{eq:lobja} is continuous with respect to the node-representation vector $\mathfrak{m}(\mathbf{x},r)$. \label{assumption1}
%\end{asm}
%\noindent Under Assumption \ref{assumption1}, 

\section{Phy-Taylor Framework}
The proposed Phy-Taylor for addressing Problem~\ref{problem} is shown in Figure \ref{PhyArt}, which is built on the conjunctive deep physics-compatible neural network (PhN) and physics-guided neural network (NN) editing. In other words, implementing NN editing according to Taylor's theorem for embedding available physical knowledge into deep PhN yields the Phy-Taylor. The PhN is a neural network layer with a key component: a physics-inspired augmentation (called Phy-Augmentation) for generating monomials in Equation~\eqref{eq:lobja} of Taylor series expansion of nonlinear functions capturing physical knowledge. The physics-guided NN editing -- including link editing and activation editing -- further modifies network links and activation functions consistently with physical knowledge. Specifically, the link editing performs removing and preserving links according to the consistency with physical knowledge. Meanwhile, the activation editing performs the physics-knowledge-preserving computing in output channel of each PhN. Collaboratively through link and activation editing, the input/output of Phy-Taylor strictly complies with the available physical knowledge, which is a desired solution to Problem~\ref{problem}. Next, we detail the two components.

\subsection{The Physics-compatible Neural Network (PhN)}\label{sec:TNO}
In order to capture non-linear features of physical functions, we introduce a new type of network layer that is augmented with terms derived from Taylor series expansion. 
The Taylor's Theorem offers a series expansion of arbitrary nonlinear functions, as shown below.
\definecolor{cccolor}{rgb}{.67,.7,.67}
\begin{mdframed}
  \textbf{Taylor's Theorem (Chapter 2.4 \cite{konigsberger2013analysis}):} Let $\mathbf{g}\!:~ \mathbb{R}^{n} \to \mathbb{R}$  be a $r$-times continuously differentiable function at the point $\mathbf{o} \in \mathbb{R}^{n}$. Then there exists $\mathbf{h}_{\alpha}\!:~ \mathbb{R}^{n} \to \mathbb{R}$, where $\left| \alpha  \right| = r$, such that
  \begin{align}
&\mathbf{g}( \mathbf{x} ) = \sum\limits_{\left| \alpha  \right| \le r} {\frac{{{\partial^\alpha }\mathbf{g}( \mathbf{o} )}}{{\alpha !}}} {\left( {\mathbf{x} - \mathbf{o}} \right)^\alpha } + \sum\limits_{\left| \alpha  \right| = r} {{\mathbf{h}_\alpha }( \mathbf{x} ){{( {\mathbf{x} - \mathbf{o}} )^\alpha} }}, \hspace{0.2cm}\text{and}~\mathop {\lim }\limits_{\mathbf{x} \to \mathbf{o}} {\mathbf{h}_\alpha}\left( \mathbf{x} \right) = \mathbf{0}, \label{taylortheorem}
\end{align}
where $\alpha  = \left[ {{\alpha _1};~{\alpha _2};~ \ldots;~{\alpha _n}} \right]$, $\left| \alpha  \right| = \sum\limits_{i = 1}^n {{\alpha _i}}$, ~$\alpha ! = \prod\limits_{i = 1}^n {{\alpha _i}}!$, ~${\mathbf{x}^\alpha } = \prod\limits_{i = 1}^n {\mathbf{x}_i^{{\alpha _i}}}$, ~and ${\partial^\alpha }\mathbf{g} = \frac{{{\partial ^{\left| \alpha  \right|}} \mathbf{g} }}{{\partial \mathbf{x}_1^{{\alpha _1}} \cdot  \ldots  \cdot \partial \mathbf{x}_n^{{\alpha _n}}}}$.
\end{mdframed}
\noindent
The Taylor's theorem has several desirable properties:
\begin{itemize}
\vspace{-0.1in}
  \item \textit{Non-linear Physics Term Representation:} The high-order monomials (i.e., the ones included in $\left( {\mathbf{x} - \mathbf{o}} \right)^\alpha$ with $|\alpha| \ge 2$) of the Taylor series  (i.e., $\sum\limits_{\left| \alpha  \right| \le r} {\frac{{{D^\alpha }\mathbf{g}( \mathbf{o} )}}{{\alpha !}}} {\left( {\mathbf{x} - \mathbf{o}} \right)^\alpha }$) capture core nonlinearities of physical quantities such as kinetic energy ($\triangleq \frac{1}{2}m{v^2}$), potential energy ($\triangleq \frac{1}{2}k{x^2}$), electrical power ($\triangleq V \cdot I$)  and aerodynamic drag force ($\triangleq \frac{1}{2}\rho {v^2}{C_D}A$), that drive the state dynamics of physical systems.
  \vspace{-0.00in}
  \item \textit{Controllable Model Accuracy:} Given ${\mathbf{h}_\alpha}( \mathbf{x} )$ is finite and $\left\| {\mathbf{x} - \mathbf{o}} \right\| < 1$, the error $\sum\limits_{\left| \alpha  \right| = r} {{\mathbf{h}_\alpha }( \mathbf{x} ){{( {\mathbf{x} - \mathbf{o}} )^\alpha} }}$ for approximating the ground truth $\mathbf{g}(\mathbf{x})$ will drop significantly as the order $r = |\alpha|$ increases and $\mathop {\lim }\limits_{|\alpha| = r \to \infty } {\mathbf{h}_\alpha}(\mathbf{x}){\left( {\mathbf{x} - \mathbf{o}} \right)^\alpha} = \mathbf{0}$. This allows for controllable model accuracy via controlling order $r$.
  \vspace{-0.12in}
  \item \textit{Knowledge Embedding:} The Taylor series can directly project the known model substructure parameters of the ground-truth model \eqref{eq:lobja} into neural network parameters including the weight matrix (${\frac{{{D^\alpha }\mathbf{g}( \mathbf{o} )}}{{\alpha !}}}$ with $|\alpha| > 0$)  and bias (${\frac{{{D^\alpha }\mathbf{g}( \mathbf{o} )}}{{\alpha !}}}$ with $|\alpha| = 0$), thus paving the way to embed the available physical knowledge in the form of an appropriately weighted neural network layer. 
\end{itemize}

%\begin{figure*}[!t]
%\centering
%\includegraphics[scale=0.42]{}
%\includegraphics[scale=0.44]{figures/FnB.pdf}
%\caption{(a): Phy-Augmentation architecture. (b): An example of Algorithm~\ref{ALG1} in TensorFlow framework, where input $\tilde{\mathbf{x}} \in \mathbb{R}^3$ is from Line \ref{ALG1-102} of Algorithm \ref{ALG1}.}
%\label{PhyA}
%\end{figure*}

\begin{wrapfigure}{r}{0.56\textwidth}
\vspace{-0.9cm}
  \begin{center}
    \includegraphics[width=0.58\textwidth]{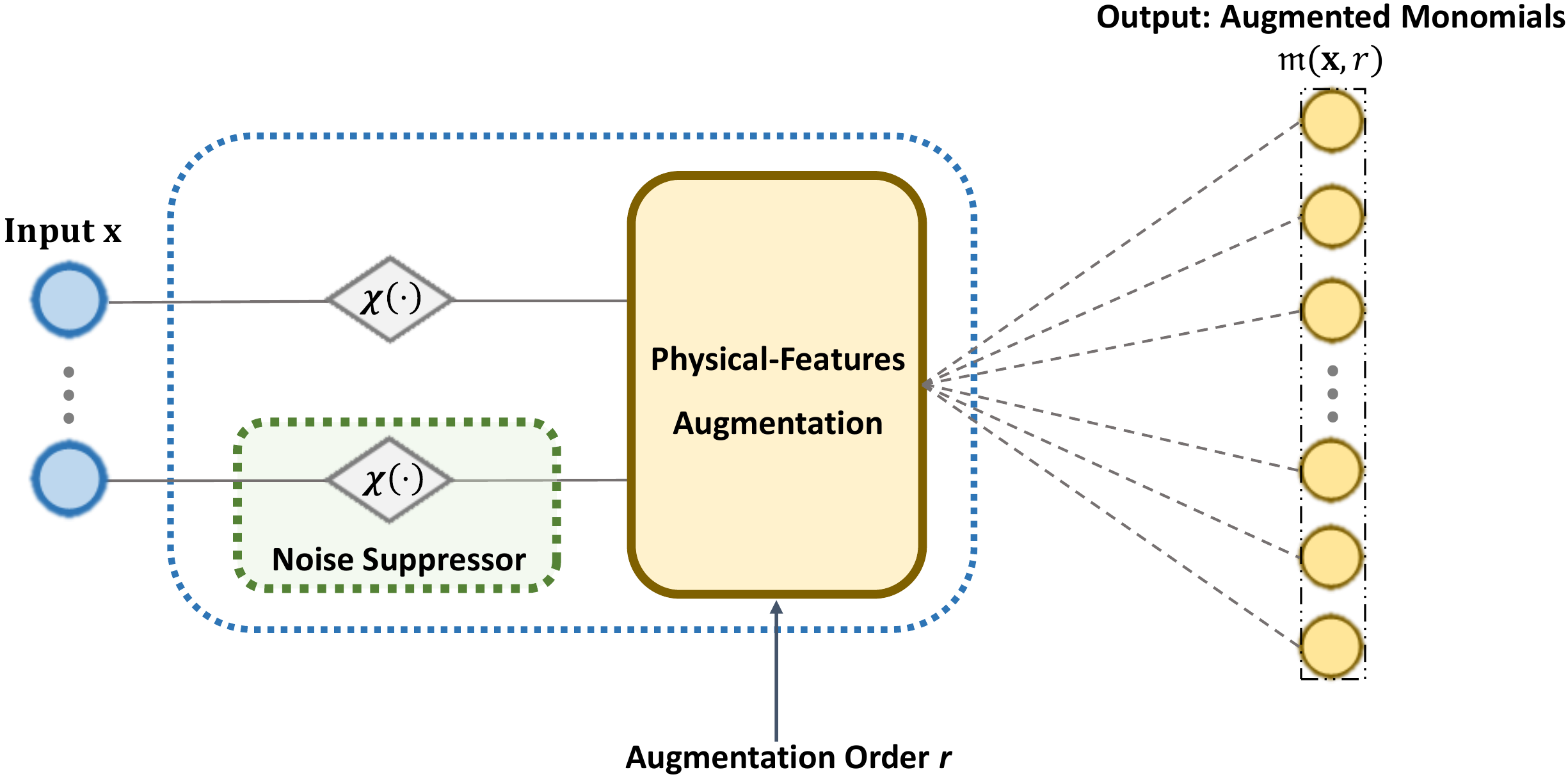}
  \end{center}
  \vspace{-0.64cm}
  \caption{Phy-Augmentation architecture.}
  \vspace{-0.5cm}
  \label{PhyA}
\end{wrapfigure}
We note the Taylor theorem relies on an assumption that the ground truth $\mathbf{g}(\mathbf{x})$ is a $r$-times continuously differentiable function at the point $\mathbf{o}$. If the assumption does not hold, what the Taylor series will approximate is a proximity of ground truth that is $r$-times continuously differentiable. For continuous functions, this is often a sufficient approximation. Next, we describe how PhNs embed the Taylor series expansion into neural network layers.
The resulting architecture (of a single PhN layer) is shown in Figure \ref{PhyArt}.  Compared with a classical neural network layer, we introduce the Phy-Augmentation, whose architecture is shown in Figure \ref{PhyA}. The Phy-Augmentation has two components: (i) augmented inputs that represent monomials of a Taylor series expansion, and (ii) a suppressor for mitigating the influence of noise on such augmented inputs (i.e., high-order monomials). Next, we detail them.

\subsubsection{Phy-Augmentation: Taylor Series Monomials}
\begin{algorithm} \footnotesize{ 
  \caption{Phy-Augmentation Procedure} \label{ALG1}
  \KwIn{augmentation order $r$, input $\mathbf{x}$, point $\mathbf{o}$, suppressor mapping $\chi(\cdot)$.}
   Suppress input: $[\mathbf{x}]_{i} \leftarrow \begin{cases}
		  \chi([{\mathbf{x}}]_{i}), &\text{if suppressor is active}\\
		  [{\mathbf{x}}]_{i}, &\text{otherwise}
	          \end{cases}$, ~~~~~~~~$i \in \{1,2,\ldots, \text{len}(\mathbf{x})\}$;\label{ALG1-102}\\
   Generate index vector of input entries: $\mathbf{i} \leftarrow [1;~2;~\ldots;~\mathrm{len}({\mathbf{x}})]$;\label{ALG1-1}\\
   Generate augmentations: ${\mathfrak{m}}({\mathbf{x}},r)  \leftarrow {\mathbf{x}}$; \label{ALG1-1}\\
   \For{$ \_\ = 2$ to $r$}{ \label{alg1-3}
       \For{$i=1$ to $\mathrm{len}({\mathbf{x}})$ \label{alg1-4}} 
           {Compute temporaries: ${\mathbf{{t}}}_a$ $\leftarrow [{\mathbf{x}}]_{i} \cdot [{\mathbf{x}}]_{\left[ {[\mathbf{i}]_{i}~:~\mathrm{len}({\mathbf{x}})} \right]}$; \label{alg1-5}\\
           \eIf{$i==1$ \label{alg1-6}} 
               {Generate temporaries: $\widetilde{\mathbf{{t}}}_b \leftarrow \widetilde{\mathbf{{t}}}_a$; \label{alg1-7}\\}
               {Generate temporaries: $\widetilde{\mathbf{{t}}}_{b} \leftarrow \left[ \widetilde{\mathbf{{t}}}_{b};~\widetilde{\mathbf{{t}}}_{a}  \right]$; \label{alg1-9}\\}
        Update index entry: $[\mathbf{i}]_i \leftarrow \mathrm{len}({\mathbf{x}})$; \label{alg1-11}\\
        Update augmentations: ${\mathfrak{m}}({\mathbf{x}},r)  \leftarrow \left[ {\mathfrak{m}}({\mathbf{x}},r);~{{\mathbf{{t}}}_{b}}  \right]$; \label{alg1-12}\\
       }
   \label{alg1-13} }\label{alg1-14}
Output vector of augmented monomials: ${\mathfrak{m}}({\mathbf{x}},r)  \leftarrow \left[1; ~{\mathfrak{m}}({\mathbf{x}},r) \right]$. \label{alg1-16}}
\end{algorithm}

\begin{wrapfigure}{r}{0.471\textwidth}
\vspace{-0.9cm}
  \begin{center}
    \includegraphics[width=0.471\textwidth]{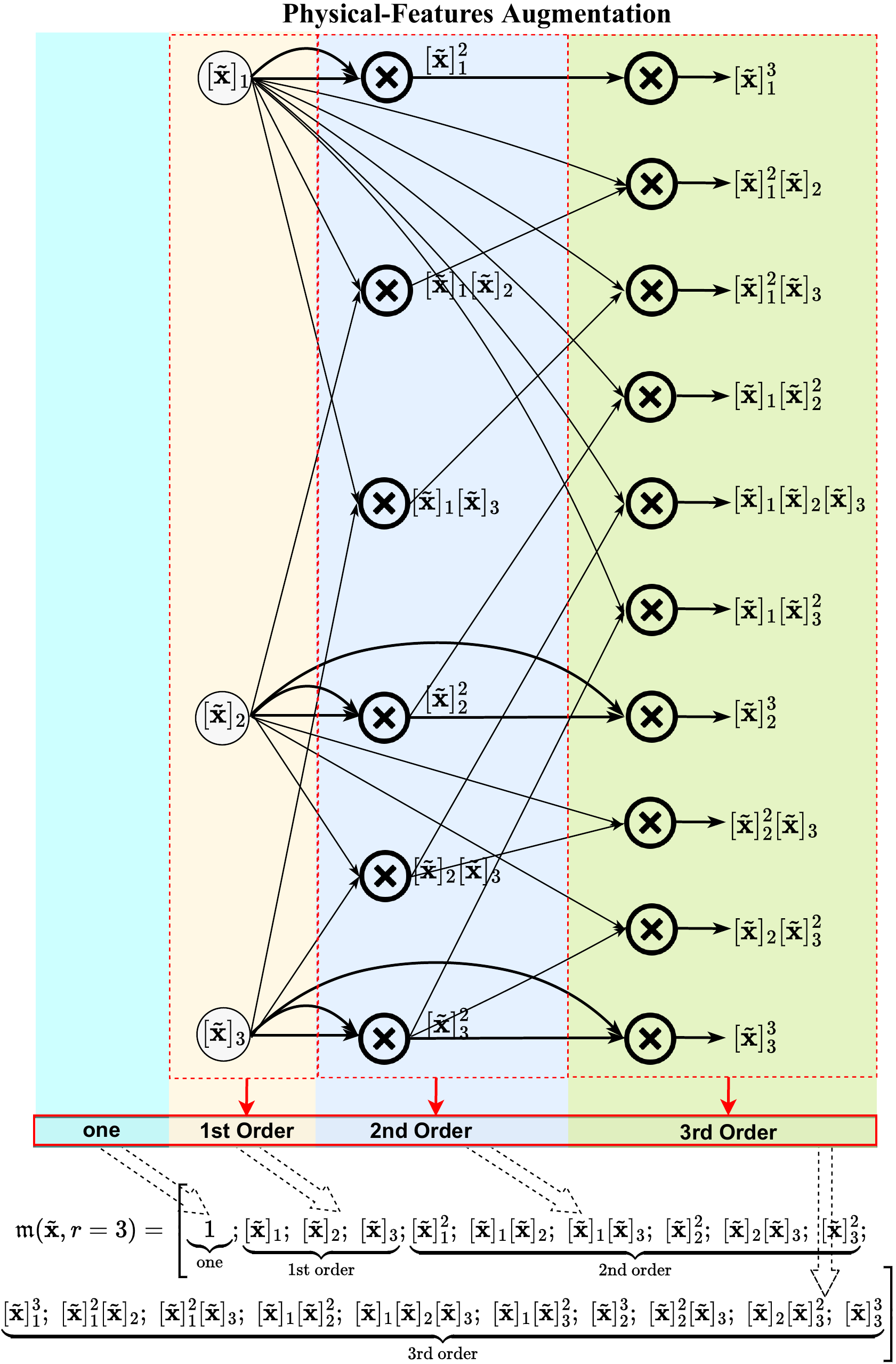}
  \end{center}
  \vspace{-0.5cm}
  \caption{ An example of Algorithm~\ref{ALG1} in TensorFlow framework, where input $\tilde{\mathbf{x}} \in \mathbb{R}^3$ is from Line \ref{ALG1-102} of Algorithm \ref{ALG1}.}
  \vspace{-0.7cm}
  \label{PhyBB}
\end{wrapfigure}
The function of physical-features augmentation in Figure \ref{PhyA} is to generate the vector of physical features (i.e., node representations) in form of Taylor series monomials, which is formally described by Algorithm \ref{ALG1}. The Lines \ref{alg1-5}--\ref{alg1-12} of Algorithm \ref{ALG1} guarantee that the generated node-representation vector embraces all the non-missing and non-redundant monomials of Taylor series. The Line \ref{alg1-16} shows that Algorithm \ref{ALG1} finally stacks vector with one.  This operation means a PhN node will be assigned to be one, and the bias (corresponding to ${\frac{{{D^\alpha }\mathbf{g}( \mathbf{o} )}}{{\alpha !}}}$ with $|\alpha| = 0$ in Taylor series) will be thus treated as link weights in PhN layers. As an example shown in Figure \ref{PhyBB}, the Phy-Augmentation empowers PhN to well capture core nonlinearities of physical quantities (see e.g., kinetic energy, potential energy, electrical power and aerodynamic drag force) that drive the state dynamics of physical systems, and then represent or approximate physical knowledge in form of the Taylor series. 

We note the Line \ref{ALG1-102} of Algorithm \ref{ALG1} means the noise suppressor is not applied to all the input elements. The critical reason is the extra mapping induced by suppressor on inputs can destroy the compliance with physical knowledge, when the available model-substructure knowledge do not include the mapping of suppressor. We next is going to present the function of suppressor. 

\subsubsection{Phy-Augmentation: Noise Suppressor}
The suppressor in Figure \ref{PhyA} is to mitigate the influence of noise on the augmented high-order monomials. Before proceeding on the working mechanism, we present a metric pertaining to noise and true data. 
\vspace{-0.00in}
\begin{defa}
Consider the noisy data and define the data-to-noise ratio ($\mathrm{DNR}$):
\begin{align}
[\bar{\mathbf{x}}]_i = \underbrace{[\mathbf{h}]_i}_{\text{true data}} + \underbrace{[{\mathbf{w}}]_i}_{\text{noise}} \in \mathbb{R}, ~~~~~~~~~~~~~~~\mathrm{DNR}_i \triangleq \frac{[\mathbf{h}]_i}{[\mathbf{w}]_i}.
\label{nd}
\end{align}
\label{def1}
\end{defa}

\vspace{-0.45cm}
The auxiliary Theorem \ref{colthm} presented in Supplementary Information \ref{Aux} implies that the high-order monomials can shrink their DNRs due to nonlinear mapping.  This means the PhN can be vulnerable to the noisy inputs, owning to Phy-Augmentation for generating high-order monomials. Hence, mitigating the influence of noise is vital for enhancing the robustness of PhNs, consequently the Phy-Taylor. As shown in Figure \ref{PhyA}, we incorporate a suppressor into PhN to process the raw input data, such that the high-order monomial from Phy-Augmentation can enlarge their DNRs. Building on Definition \ref{def1},  the proposed noise suppressor mapping is 
\begin{align}
\chi([\bar{\mathbf{x}}]_i) = \chi([\mathbf{h}]_i + [\mathbf{w}]_i) = \begin{cases}
		0, & \text{if}~[\mathbf{h}]_i + [\mathbf{w}]_i < 0\\
		[\mathbf{h}]_i + [\mathbf{w}]_i, & \text{if}~[\mathbf{h}]_i + [\mathbf{w}]_i \ge 0 ~\text{and}~[\mathbf{w}]_i < 0 \\
		([\mathbf{h}]_i + [\mathbf{w}]_i) \cdot \kappa_i + \rho_i, & \text{if}~[\mathbf{h}]_i + [\mathbf{w}]_i \ge 0 ~\text{and}~[\mathbf{w}]_i > 0
	\end{cases}, \label{compbadd} 
\end{align}
where the parameters $\rho_i$ and $\kappa_i$ satisfy 
\begin{align}
|\rho_i|  \ge |[\mathbf{h}]_i + [\mathbf{w}]_i | \cdot |\kappa_i|. \label{compb} 
\end{align}

We next present the suppressor properties in the following theorem, whose proof appears in Supplementary Information \ref{SI02}.
\begin{thm}
Consider the noisy data $[\bar{\mathbf{x}}]_i$ and the suppressor described in Equations \eqref{nd} and \eqref{compbadd}, respectively. Under the condition \eqref{compb}, the suppressor output, denoted by $[\widehat{\mathbf{x}}]_{i} = \chi([\bar{\mathbf{x}}]_i)$, has the properties: 
\begin{align}
&\hspace{-0.90cm}\text{The DNR magnitude of high-order monomial $[\widehat{\mathbf{x}}]_i^p[\widehat{\mathbf{x}}]_j^q$ ($p+q \ge 2$) is strictly increasing with respect to} \nonumber\\
&\hspace{-0.90cm}\hspace{9.5cm}\text{DNR magnitudes of $[\widehat{\mathbf{x}}]_i$ and $[\widehat{\mathbf{x}}]_j$.} \label{ckko}\\
&\hspace{-0.90cm}\text{The true data and the noise of suppressor output $[\widehat{\mathbf{x}}]_i$ are} \nonumber\\
&\hspace{-0.90cm}[\widetilde{\mathbf{h}}]_i = \begin{cases}
        \![\mathbf{h}]_i \cdot \kappa_i + \rho_i, \!\!&[\mathbf{h}]_i \!+\! [\mathbf{w}]_i \!\ge\! 0 ~\text{and}~[\mathbf{w}]_i \!>\! 0 \\
		\![\mathbf{h}]_i, \!\!& \text{otherwise}\\
	\end{cases}, ~~~~[\widetilde{\mathbf{w}}]_i = \begin{cases}
		\!-[\mathbf{h}]_i, \!\!&[\mathbf{h}]_i \!+\! [\mathbf{w}]_i \!<\! 0\\
		\![\mathbf{w}]_i, \!\!&[\mathbf{h}]_i \!+\! [\mathbf{w}]_i \!\ge\! 0 ~\text{and}~[\mathbf{w}]_i \!<\! 0 \\
		\![\mathbf{w}]_i \cdot \kappa_i, \!\!&[\mathbf{h}]_i \!+\! [\mathbf{w}]_i \!\ge\! 0 ~\text{and}~[\mathbf{w}]_i \!>\! 0
	\end{cases},
\label{cohgq3}
\end{align}
such that $[\widehat{\mathbf{x}}]_i = [\widetilde{\mathbf{h}}]_i + [\widetilde{\mathbf{w}}]_i,~ i \in \{1,2,\dots, \mathrm{len}(\widehat{\mathbf{x}})\}$.  \label{th2}
\end{thm}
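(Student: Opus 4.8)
The plan is to prove the two asserted properties separately, handling the signal/noise decomposition \eqref{cohgq3} first since the monotonicity claim \eqref{ckko} builds on it. For the decomposition, I would verify, case by case against the three branches of the suppressor \eqref{compbadd}, that the proposed $[\widetilde{\mathbf{h}}]_i$ and $[\widetilde{\mathbf{w}}]_i$ reconstruct the output, i.e. $[\widetilde{\mathbf{h}}]_i + [\widetilde{\mathbf{w}}]_i = [\widehat{\mathbf{x}}]_i = \chi([\bar{\mathbf{x}}]_i)$. In the branch $[\mathbf{h}]_i + [\mathbf{w}]_i < 0$ the two proposed components are $[\mathbf{h}]_i$ and $-[\mathbf{h}]_i$, which cancel to the output $0$; in the branch with $[\mathbf{w}]_i < 0$ they are the unaltered $[\mathbf{h}]_i$ and $[\mathbf{w}]_i$; and in the branch with $[\mathbf{w}]_i > 0$ they are $[\mathbf{h}]_i \kappa_i + \rho_i$ and $[\mathbf{w}]_i \kappa_i$, whose sum is $([\mathbf{h}]_i + [\mathbf{w}]_i)\kappa_i + \rho_i$, matching \eqref{compbadd}. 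The only conceptual point is the assignment rule: the true-data part is taken to be the image of the clean signal $[\mathbf{h}]_i$ under the affine map active in that branch, and the noise part is the residual. This is a direct substitution with no real obstacle.

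For the monotonicity claim \eqref{ckko}, I would next read off the suppressor-output DNR in each branch from \eqref{cohgq3}: writing $u \triangleq \widetilde{\mathrm{DNR}}_i = [\widetilde{\mathbf{h}}]_i/[\widetilde{\mathbf{w}}]_i$ and $v \triangleq \widetilde{\mathrm{DNR}}_j$, I then define the true-data part of the monomial as $[\widetilde{\mathbf{h}}]_i^p[\widetilde{\mathbf{h}}]_j^q$ and its noise as the residual $[\widehat{\mathbf{x}}]_i^p[\widehat{\mathbf{x}}]_j^q - [\widetilde{\mathbf{h}}]_i^p[\widetilde{\mathbf{h}}]_j^q$, consistent with the convention of Definition~\ref{def1}. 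Substituting $[\widehat{\mathbf{x}}]_i = [\widetilde{\mathbf{h}}]_i + [\widetilde{\mathbf{w}}]_i$ and dividing through by $[\widetilde{\mathbf{h}}]_i^p[\widetilde{\mathbf{h}}]_j^q$, the reciprocal of the monomial data-to-noise ratio $\mathrm{DNR}^{p,q}_{i,j}$ collapses to the clean closed form
\begin{align}
\frac{1}{\mathrm{DNR}^{p,q}_{i,j}} = \left(1 + \frac{1}{u}\right)^{\!p}\!\left(1 + \frac{1}{v}\right)^{\!q} - 1, \nonumber
\end{align}
so it suffices to show that the magnitude of the left side is strictly decreasing in $|u|$ and $|v|$.

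The crux --- and the step I expect to be the main obstacle --- is controlling the sign of $1 + \tfrac{1}{u} = [\widehat{\mathbf{x}}]_i/[\widetilde{\mathbf{h}}]_i$. I would prove the key lemma that the suppressor forces $[\widehat{\mathbf{x}}]_i$ to share the sign of its true-data part $[\widetilde{\mathbf{h}}]_i$ with no larger magnitude, equivalently $1 + \tfrac{1}{u} \in [0,1)$ (equivalently $u \le -1$), and likewise for $v$. For the $[\mathbf{w}]_i < 0$ branch this is immediate from $0 \le [\mathbf{h}]_i + [\mathbf{w}]_i < [\mathbf{h}]_i$. For the $[\mathbf{w}]_i > 0$ branch it is exactly where condition \eqref{compb} enters: since $|\rho_i| \ge |[\mathbf{h}]_i + [\mathbf{w}]_i|\,|\kappa_i|$, the additive term $\rho_i$ dominates both $[\mathbf{h}]_i\kappa_i$ and $([\mathbf{h}]_i+[\mathbf{w}]_i)\kappa_i$, pinning $[\widetilde{\mathbf{h}}]_i$ and $[\widehat{\mathbf{x}}]_i$ to the sign of $\rho_i$ and (with $\rho_i,\kappa_i$ of opposite sign) shrinking the output toward zero relative to $[\widetilde{\mathbf{h}}]_i$; this is precisely the DNR-enlarging role flagged after Definition~\ref{def1}. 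Once $1+\tfrac{1}{u},\,1+\tfrac{1}{v} \in [0,1)$ is established, the product lies in $[0,1)$, so $1/\mathrm{DNR}^{p,q}_{i,j} = 1 - (1+\tfrac{1}{u})^p(1+\tfrac{1}{v})^q \in (0,1]$, which strictly decreases as either factor rises toward $1$ --- i.e. as $|u|$ or $|v|$ grows --- giving the claimed strict monotonicity of $|\mathrm{DNR}^{p,q}_{i,j}|$. I would flag the tacit assumption $[\mathbf{h}]_i \ge 0$ (natural for the physical quantities targeted, and consistent with the zeroing branch of \eqref{compbadd}) that is needed to keep $[\mathbf{h}]_i\kappa_i$ dominated by $\rho_i$ in the general case.
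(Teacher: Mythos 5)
Your proposal is correct and follows essentially the same route as the paper's own proof: a case-by-case verification of the decomposition \eqref{cohgq3}, then the key lemma that the suppressed data-to-noise ratio satisfies $[\widetilde{\mathbf{h}}]_i/[\widetilde{\mathbf{w}}]_i \in (-\infty,-1]$ (your $1+\tfrac{1}{u}\in[0,1)$ is the same statement), followed by exactly the reciprocal monomial-DNR formula and monotonicity argument that the paper factors out into its auxiliary Theorem \ref{colthm}. The only difference is organizational --- you inline and specialize that auxiliary computation to the single relevant range --- and your explicit flagging of the required opposite signs of $\rho_i$ and $\kappa_i$ makes visible a condition the paper's proof leaves implicit in the equivalences \eqref{pu1}--\eqref{pu2}.
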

\noindent 
The result \eqref{cohgq3} implies the parameters $\kappa$ and $\rho$ control the DNRs of suppressed data, consequently the high-order monomials. Furthermore, the result \eqref{ckko} suggests that through designing parameters $\kappa_i$, $\rho_i$, $\kappa_j$ and $\rho_j$ for increasing the DNR magnitudes of data $[\widehat{\mathbf{x}}]_i$ and $[\widehat{\mathbf{x}}]_j$, the DNR of
high-order monomial $[\widehat{\mathbf{x}}]_i^p[\widehat{\mathbf{x}}]_j^q$ can be enlarged consequently, such that the influence of noise is mitigated. 

\subsection{Physics-guided Neural Network Editing}\label{sec:Neu}
\begin{algorithm} \caption{Physics-guided NN Editing}  \label{ALG2}
  \KwIn{Available knowledge included in system matrix $\mathbf{A}$ of ground-truth model \eqref{eq:lobja}, terminal output dimension $\text{len}(\mathbf{y})$, number $p$ of PhNs, activation functions $\text{act}(\cdot)$, $\mathbf{y}_{\left\langle 0 \right\rangle} = \mathbf{x}$ and $r_{\left\langle 1 \right\rangle} = r$.} 
  \SetKwInOut{Output}{Output}
       \For{$t = 1$ to $p$}
           {
            \eIf{$t==1$ \label{alg2-2}}
            {Deactivate noise suppressor; \label{alg2-001a}\\
            Generate node-representation vector $\mathfrak{m}(\mathbf{y}_{\left\langle t-1 \right\rangle}, r_{\left\langle t \right\rangle})$ via Algorithm~\ref{ALG1}; \label{alg2-002a}\\
            Generate knowledge matrix $\mathbf{K}_{\left\langle t \right\rangle }$: ~$[\mathbf{K}_{\left\langle t \right\rangle }]_{i,j} \leftarrow \begin{cases}
		  [\mathbf{A}_{\left\langle t \right\rangle }]_{i,j}, &[\mathbf{A}_{\left\langle t \right\rangle }]_{i,j} = \boxplus\\
		  0, &\text{otherwise}
	          \end{cases}$; \label{alg2-3}\\
	              Generate weight-masking matrix $\mathbf{M}_{\left\langle t \right\rangle }$: ~$[\mathbf{M}_{\left\langle t \right\rangle }]_{i,j} \leftarrow \begin{cases}
		  0, &[\mathbf{A}_{\left\langle t \right\rangle }]_{i,j} = \boxplus\\
		  1, &\text{otherwise}
	          \end{cases}$; \label{alg2-4}\\
	          Generate activation-masking vector $\mathbf{a}_{\left\langle t \right\rangle }$: \!~$[\mathbf{a}_{\left\langle t \right\rangle }]_{i} \!\leftarrow\! \begin{cases}
		  0, \!\!\!&[\mathbf{M}_{\left\langle t \right\rangle }]_{i,j} \!=\! 0, \forall  j \!\in\! \{1,\ldots, \text{len}(\mathfrak{m}(\mathbf{y}_{\left\langle t-1 \right\rangle}, r_{\left\langle t \right\rangle})) \}\\
		  1, \!\!\!&\text{otherwise}
	          \end{cases}$; \label{alg2-5}\\
                  }
               {
               Generate node-representation vector $\mathfrak{m}(\mathbf{y}_{\left\langle t-1 \right\rangle}, r_{\left\langle t \right\rangle})$ via Algorithm~\ref{ALG1}; \label{alg2-003a}\\ 
              Generate knowledge matrix $\mathbf{K}_{\left\langle t \right\rangle }$: 
               \begin{align}
               \mathbf{K}_{\left\langle t \right\rangle } \leftarrow \mymatrix; \nonumber
               \end{align} \label{alg2-7}\\
                 Generate weight-masking matrix $\mathbf{M}_{\left\langle t \right\rangle }$: ~$[\mathbf{M}_{\left\langle t \right\rangle }]_{i,j} \leftarrow \begin{cases}
		  0, &\frac{{\partial [\mathfrak{m}(\mathbf{y}_{\left\langle t \right\rangle}, r_{\left\langle t \right\rangle})]_{j}}}{{\partial [\mathfrak{m}(\mathbf{x}, r_{\left\langle 1 \right\rangle})]_{v}}} \ne 0 ~\text{and}~[\mathbf{M}_{\left\langle 1 \right\rangle }]_{i,v} = 0, ~~v \in \{1,2, \dots, \text{len}(\mathfrak{m}(\mathbf{x}, r_{\left\langle 1 \right\rangle}))\} \\
		  1, &\text{otherwise}
	          \end{cases}$; \label{alg2-8}\\
Generate activation-masking vector $\mathbf{a}_{\left\langle t \right\rangle } \leftarrow  \left[ \mathbf{a}_{\left\langle 1 \right\rangle };~ \mathbf{1}_{\text{len}(\mathbf{y}_{\left\langle t \right\rangle }) - \text{len}(\mathbf{y})} \right]$; \label{alg2-9}\\
                      }
                  Generate weight matrix: ${\mathbf{W}_{\left\langle t \right\rangle }}$; \label{alg2-004a}\\
                  Generate uncertainty matrix $\mathbf{U}_{\left\langle t \right\rangle } \leftarrow  \mathbf{M}_{\left\langle t \right\rangle } \odot {\mathbf{W}_{\left\langle t \right\rangle }}$; \label{alg2-11}\\
                  Compute output: $\mathbf{y}_{\left\langle t \right\rangle } \leftarrow \mathbf{K}_{\left\langle t \right\rangle } \cdot \mathfrak{m}(\mathbf{y}_{\left\langle t-1 \right\rangle }, r_{\left\langle t \right\rangle}) +  \mathbf{a}_{\left\langle t \right\rangle } \odot \text{act}\left( {\mathbf{U}_{\left\langle t   
                  \right\rangle } \cdot \mathfrak{m}\left( {\mathbf{y}_{\left\langle t-1 \right\rangle },  r_{\left\langle t \right\rangle } } \right)} \right)$ \label{alg2-12};
       }
\Output{terminal output: $\widehat{\mathbf{y}} \leftarrow  \mathbf{y}_{\left\langle p \right\rangle}$}.
\end{algorithm}

Building on the deep PhN, this section presents the neural network (NN) editing for embedding and preserving the available physical knowledge, through the physics-guided link editing and activation editing. Specifically, the link editing centers around removing and preserving the links according to the consistency with physical knowledge. Meanwhile, the activation editing performs the physical-knowledge-preserving computing in the output channels of PhNs. Thanks to the concurrent link and activation editing, the input/output of Phy-Taylor can strictly comply with the available physical knowledge. 
Using the notation `$\boxplus$' defined in the Table \ref{notation}, the procedure of physics-guided NN editing is described in Algorithm \ref{ALG2}. 

For the edited weight matrix $\mathbf{W}_{\left\langle t \right\rangle }$, if its entries in the same row are all known model-substructure parameters, the associated activation should be inactivate. Otherwise, the Phy-Taylor cannot strictly preserve the available physical knowledge due to the extra nonlinear mappings induced by the activation functions. This thus motivates the physics-knowledge preserving computing, i.e., the Line \ref{alg2-12} of Algorithm \ref{ALG2}. Figure \ref{flowchartonelayer} summarizes the flowchart of NN editing in a single PhN layer: 
\begin{itemize}
\vspace{-0.10in}
\item (i) Given the node-representation vector from Algorithm \ref{ALG1}, the original (fully-connected) weight matrix is edited via link editing to embed assigned physical knowledge, resulting in $\mathbf{W}_{\left\langle t \right\rangle }$. 
\vspace{-0.10in}
\item (ii) The edited weight matrix ${\mathbf{W}_{\left\langle t \right\rangle }}$ is separated into knowledge matrix $\mathbf{K}_{\left\langle t \right\rangle}$ and uncertainty matrix $\mathbf{U}_{\left\langle t \right\rangle }$, such that ${\mathbf{W}_{\left\langle t \right\rangle }} = \mathbf{K}_{\left\langle t \right\rangle} + \mathbf{U}_{\left\langle t \right\rangle }$. Specifically, the $\mathbf{K}_{\left\langle t \right\rangle}$,  generated in Lines \ref{alg2-3} and \ref{alg2-7}, includes all the known model-substructure parameters. While the $\mathbf{M}_{\left\langle t \right\rangle }$, generated in Lines \ref{alg2-4} and \ref{alg2-8}, is used to generate uncertainty matrix $\mathbf{U}_{\left\langle t \right\rangle }$ (see Line \ref{alg2-11}) to include all the unknown model-substructure parameters, through freezing the known model-substructure parameters of $\mathbf{W}_{\left\langle t \right\rangle}$ to zeros. 
\vspace{-0.10in}
\item (iii) The $\mathbf{K}_{\left\langle t \right\rangle}$, $\mathbf{M}_{\left\langle t  \right\rangle }$ and activation-masking vector $\mathbf{a}_{\left\langle t  \right\rangle }$ (generated in Lines \ref{alg2-5} and \ref{alg2-9}) are used by activation editing for the physical-knowledge-preserving computing of output in each PhN layer. The function of $\mathbf{a}_{\left\langle t  \right\rangle }$ is to avoid the extra mapping (induced by activation) that prior physical knowledge does not include. 
\end{itemize}

\begin{figure*}[!t]
\centering
\includegraphics[scale=0.53]{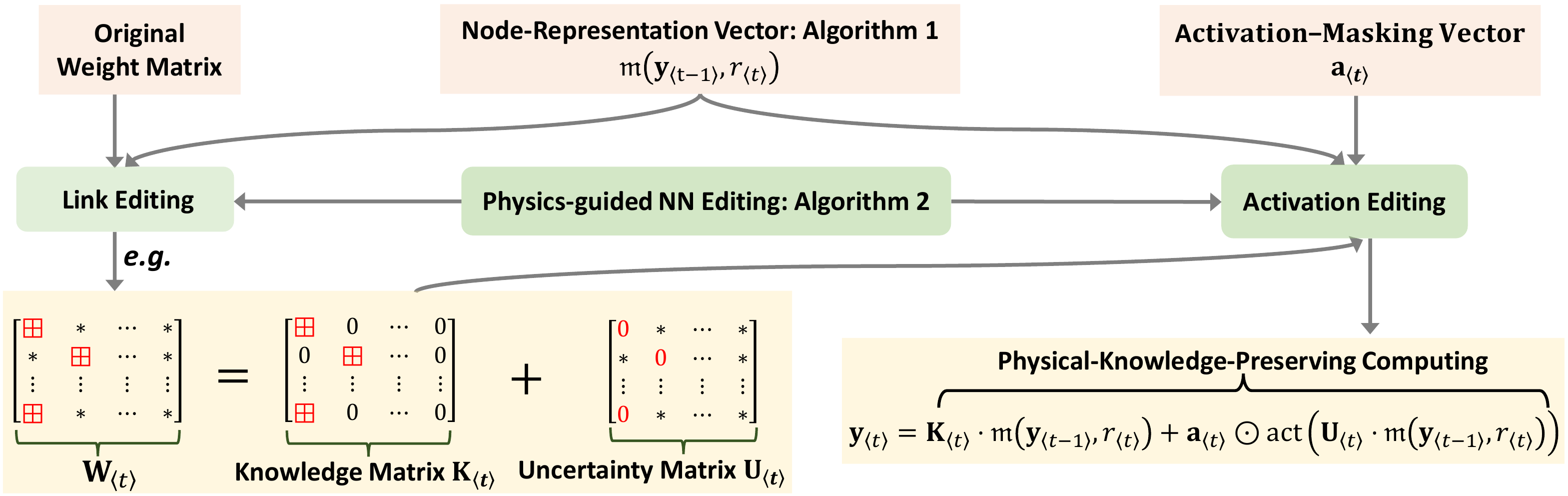}
\caption{Flowchart of NN editing in single PhN layer.}
\label{flowchartonelayer}
\end{figure*}
The flowchart of NN editing operating in cascade PhN is depicted in Figure \ref{exapplot}. The Lines \ref{alg2-2}-\ref{alg2-4} of Algorithm~\ref{ALG2} means that  
$\mathbf{A} = \mathbf{K}_{\left\langle 1 \right\rangle} + \mathbf{M}_{\left\langle 1   
                  \right\rangle } \odot \mathbf{A}$, leveraging which and the setting $r_{\left\langle 1 \right\rangle} = r$, the ground-truth model \eqref{eq:lobja} is rewritten as 
\begin{align}
\mathbf{y} = (\mathbf{K}_{\left\langle 1 \right\rangle} + \mathbf{M}_{\left\langle 1   
                  \right\rangle } \odot \mathbf{A}) \cdot \mathfrak{m}(\mathbf{x},r) + \mathbf{f}(\mathbf{x}) 
= \mathbf{K}_{\left\langle 1 \right\rangle} \cdot \mathfrak{m}(\mathbf{x}, r_{\left\langle 1 \right\rangle}) + (\mathbf{M}_{\left\langle 1   
                  \right\rangle } \odot \mathbf{A}) \cdot \mathfrak{m}(\mathbf{x},r_{\left\langle 1 \right\rangle}) + \mathbf{f}(\mathbf{x}). \label{exppf1}
\end{align}
\begin{figure*}[!t]
\centering
\subfigure{\includegraphics[scale=0.53]{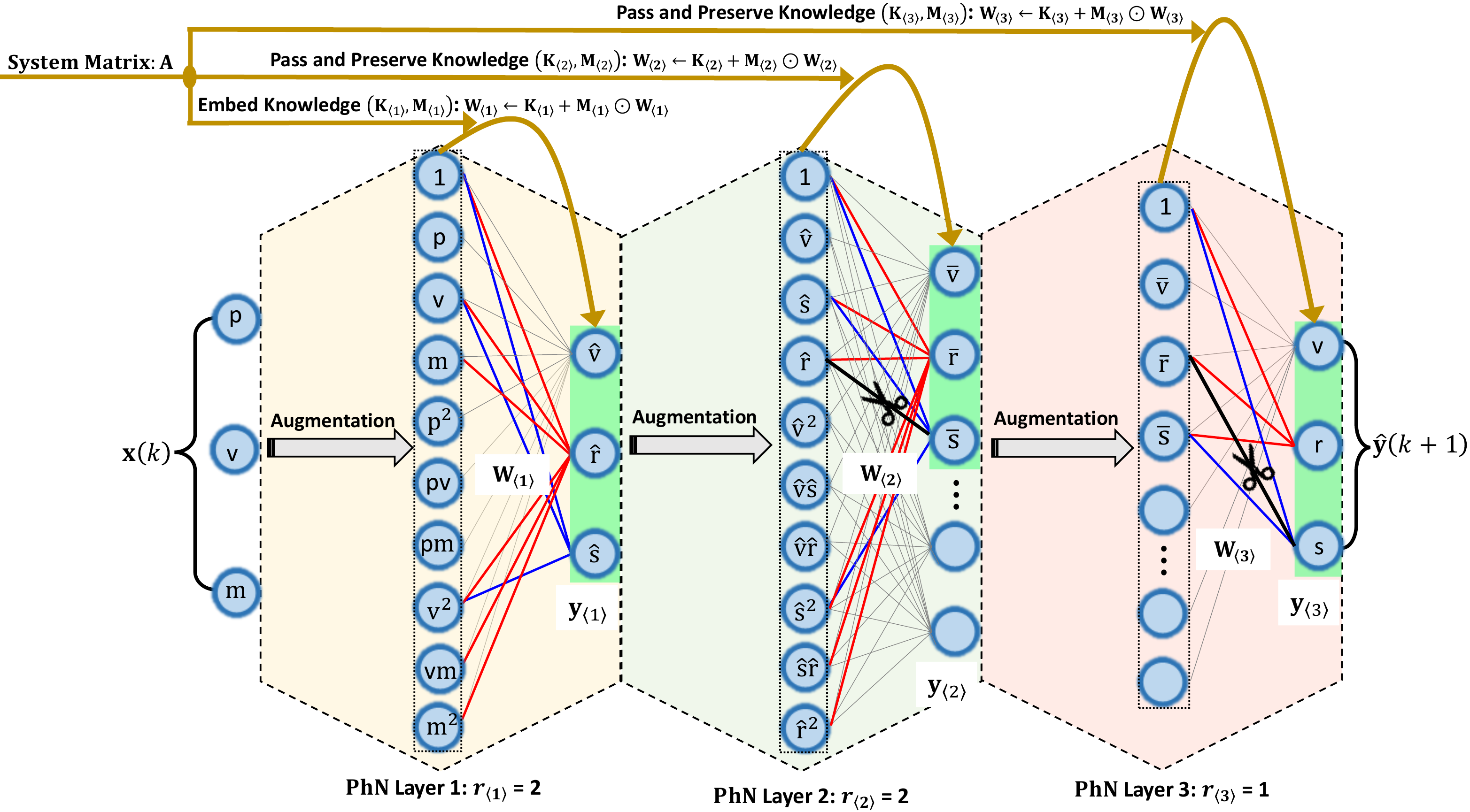}}
\caption{\textbf{Implementing NN editing, i.e., Algorithm~\ref{ALG2}, in the Example \ref{exap1}}. (i) Unknown substructures are formed by the grey links, the known substructures are formed by the red and blue links. (ii) Cutting black links to avoid spurious correlations, otherwise, the links introduce the dependence of $s\left( {k + 1} \right)$ on mass m, thus contradicting physical knowledge.}
\label{exapplot}
\end{figure*}
We obtain from the Line \ref{alg2-12} of Algorithm~\ref{ALG2} that the output of the first PhN layer is 
\begin{align}
\mathbf{y}_{\left\langle 1 \right\rangle } = \mathbf{K}_{\left\langle 1 \right\rangle } \cdot \mathfrak{m}(\mathbf{x}, r_{\left\langle 1 \right\rangle}) +  \mathbf{a}_{\left\langle 1 \right\rangle } \odot \text{act}\left( {\mathbf{U}_{\left\langle 1   
                  \right\rangle } \cdot {\mathfrak{m}} \left( {\mathbf{x},  r_{\left\langle 1 \right\rangle } } \right)} \right). \label{exppf2}
\end{align}
Recalling that $\mathbf{K}_{\left\langle 1 \right\rangle}$ includes all the known model-substructure parameters of $\mathbf{A}$ while the $\mathbf{U}_{\left\langle 1 \right\rangle }$ includes remainders, we conclude from \eqref{exppf1} and \eqref{exppf2} that the available physical knowledge pertaining to the ground-truth model \eqref{eq:lobja} has been embedded to the first PhN layer. As Figure \ref{exapplot} shows the embedded knowledge shall be passed down to the remaining cascade PhNs and preserved therein, such that the end-to-end Phy-Taylor model can strictly with the prior physical knowledge. This knowledge passing is achieved by the block matrix $\mathbf{K}_{\left\langle p \right\rangle}$ generated in Line \ref{alg2-7}, due to which, the output of $t$-th PhN layer satisfies 
 \begin{align}
[\mathbf{y}_{\left\langle t \right\rangle }]_{1:\text{len}(\mathbf{y})} = \underbrace{\mathbf{K}_{\left\langle 1 \right\rangle } \cdot \mathfrak{m}(\mathbf{x}, r_{\left\langle 1 \right\rangle})}_{\text{knowledge passing}} +  \underbrace{[\mathbf{a}_{\left\langle t \right\rangle } \odot \text{act}\!\left( {\mathbf{U}_{\left\langle t   
                  \right\rangle } \cdot {\mathfrak{m}}( {\mathbf{y}_{\left\langle t-1 \right\rangle },  r_{\left\langle t \right\rangle } })}  \right)]_{1:\text{len}(\mathbf{y})}}_{\text{knowledge preserving}}, ~~~\forall t \in \{2,3,\ldots,p\}. \label{exppf3}
 \end{align}
Meanwhile, the $\mathbf{U}_{\left\langle t \right\rangle } = \mathbf{M}_{\left\langle t \right\rangle } \odot {\mathbf{W}_{\left\langle t \right\rangle }}$ means the masking matrix $\mathbf{M}_{\left\langle t \right\rangle}$ generated in Line \ref{alg2-8} is to remove the spurious correlations in the cascade PhN, which is depicted by the cutting link operation in Figure \ref{exapplot}.

\subsection{The Solution to Problem \ref{problem}: Phy-Taylor} \label{sec:pintaylor}
Described in Figure \ref{PhyArt}, with the guidance of Taylor series, implementing the physics-guided NN editing in the deep PhN yields the \underline{Phy-Taylor}. The Phy-Taylor embeds the available physical knowledge into each PhN, such that its input/output strictly complies with the physical knowledge, which is formally stated in the following theorem. The theorem proof appears in Supplementary Information \ref{SI03}. 

\begin{thm} 
Consider the Phy-Taylor described by Figure \ref{PhyArt}. The input/output (i.e., ${\mathbf{x}}$/$\widehat{\mathbf{y}}$) of Phy-Taylor strictly complies with the available knowledge pertaining to the physics model \eqref{eq:lobja} of ground truth, i.e., if the $[\mathbf{A}]_{i,j}$ is a known model-substructure parameter, then $\frac{{\partial {{[\widehat{\mathbf{y}}}]_i}}}{{\partial {[\mathfrak{m}}\left( {\mathbf{x},r} \right)]_{j}}} \equiv \frac{{\partial {[\mathbf{y}]_i}}}{{\partial {[\mathfrak{m}}\left( {\mathbf{x},r} \right)]_{j}}} \equiv [\mathbf{A}]_{i,j}$ always holds. \label{thmmm2}
\end{thm}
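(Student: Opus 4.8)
The plan is to establish both partial-derivative identities by reducing everything to the layerwise decomposition already derived in \eqref{exppf3}. The ground-truth identity is immediate: differentiating \eqref{eq:lobja} gives $\frac{\partial [\mathbf{y}]_i}{\partial [\mathfrak{m}(\mathbf{x},r)]_j} = [\mathbf{A}]_{i,j} + \frac{\partial [\mathbf{f}(\mathbf{x})]_i}{\partial [\mathfrak{m}(\mathbf{x},r)]_j}$, and by Definition \ref{defj} a known model-substructure parameter forces the residual term to vanish, leaving $[\mathbf{A}]_{i,j}$. It then suffices to prove the same value on the network side, namely $\frac{\partial [\widehat{\mathbf{y}}]_i}{\partial [\mathfrak{m}(\mathbf{x},r)]_j} \equiv [\mathbf{A}]_{i,j}$, after which the chained equivalence follows.

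For the network side I would instantiate \eqref{exppf3} at the terminal layer $t = p$, writing $[\widehat{\mathbf{y}}]_i$ as a knowledge-passing term $[\mathbf{K}_{\langle 1 \rangle} \cdot \mathfrak{m}(\mathbf{x}, r_{\langle 1 \rangle})]_i$ plus a knowledge-preserving term routed through $\mathbf{a}_{\langle p \rangle} \odot \text{act}(\mathbf{U}_{\langle p \rangle} \cdot \mathfrak{m}(\mathbf{y}_{\langle p-1 \rangle}, r_{\langle p \rangle}))$. Because the knowledge-passing term is linear in $\mathfrak{m}(\mathbf{x}, r_{\langle 1 \rangle})$, its partial derivative with respect to $[\mathfrak{m}(\mathbf{x},r)]_j$ is exactly $[\mathbf{K}_{\langle 1 \rangle}]_{i,j}$; and since Line \ref{alg2-3} copies every known entry of $\mathbf{A}$ into $\mathbf{K}_{\langle 1 \rangle}$ unchanged, this equals $[\mathbf{A}]_{i,j}$ for a known parameter. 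The theorem therefore collapses to showing that the knowledge-preserving term has identically zero derivative at each known pair $(i,j)$.

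To kill the activation term I would split on the structure of row $i$. If every entry of row $i$ of $\mathbf{A}$ is known, Line \ref{alg2-5} sets $[\mathbf{a}_{\langle 1 \rangle}]_i = 0$ and Line \ref{alg2-9} carries this forward to $[\mathbf{a}_{\langle p \rangle}]_i = 0$, so the activation output in row $i$ is masked to zero and contributes nothing. Otherwise $[\mathbf{a}_{\langle p \rangle}]_i = 1$, and the key is that the pre-activation $[\mathbf{U}_{\langle p \rangle}]_{i,:} \cdot \mathfrak{m}(\mathbf{y}_{\langle p-1 \rangle}, r_{\langle p \rangle})$ carries no dependence on the coordinate $[\mathfrak{m}(\mathbf{x},r)]_j$ whatsoever, so its formal partial derivative vanishes irrespective of the smoothness of $\text{act}$. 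Concretely, Line \ref{alg2-4} gives $[\mathbf{M}_{\langle 1 \rangle}]_{i,j} = 0$ for the known parameter, so the rule in Line \ref{alg2-8} forces $[\mathbf{M}_{\langle p \rangle}]_{i,j'} = 0$, hence $[\mathbf{U}_{\langle p \rangle}]_{i,j'} = [\mathbf{M}_{\langle p \rangle} \odot \mathbf{W}_{\langle p \rangle}]_{i,j'} = 0$, for precisely those columns $j'$ whose monomial $[\mathfrak{m}(\mathbf{y}_{\langle p-1 \rangle}, r_{\langle p \rangle})]_{j'}$ depends on feature $j$; every surviving column is $j$-independent, so the chain-rule sum vanishes term by term.

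The hard part will be justifying that the masking of Line \ref{alg2-8}, phrased through the total derivative with respect to the original coordinates $\mathfrak{m}(\mathbf{x}, r_{\langle 1 \rangle})$, captures every route by which the known feature $j$ can reach row $i$ — including the feedthrough that the knowledge-passing term injects into $\mathbf{y}_{\langle p-1 \rangle}$, not merely the activation routes. I expect to make this rigorous by induction over the cascade: the base case is layer $1$, where $\mathbf{U}_{\langle 1 \rangle} = \mathbf{M}_{\langle 1 \rangle} \odot \mathbf{W}_{\langle 1 \rangle}$ already zeroes column $j$ in row $i$; the inductive step assumes the row-$i$ activation outputs of layers $1, \ldots, p-1$ are $j$-independent, so that the only $j$-dependence entering $\mathfrak{m}(\mathbf{y}_{\langle p-1 \rangle}, r_{\langle p \rangle})$ is exactly what the nonzero total derivative flags and the masking then excises. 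A secondary care point is the index alignment between the columns of $\mathbf{M}_{\langle t \rangle}$, which act on $\mathfrak{m}(\mathbf{y}_{\langle t-1 \rangle}, r_{\langle t \rangle})$, and the monomials whose total derivatives the rule inspects, together with reading the partials in \eqref{eq:lobja} and Problem \ref{problem} as formal derivatives in the monomial coordinates.
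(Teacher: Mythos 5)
Your proposal is correct and takes essentially the same route as the paper's proof in Supplementary Information \ref{SI03}: the paper likewise splits $\widehat{\mathbf{y}}$ into the knowledge-passing term $\mathbf{K}_{\left\langle 1 \right\rangle} \cdot \mathfrak{m}(\mathbf{x}, r_{\left\langle 1 \right\rangle})$ (whose derivative gives $[\mathbf{A}]_{i,j}$ via Line \ref{alg2-3}) plus the activation term, which it eliminates using the activation-masking vector of Lines \ref{alg2-5}/\ref{alg2-9} together with the Line-\ref{alg2-8} weight masking, and its Equations \eqref{pf41}--\eqref{pf46} are exactly your instantiation of \eqref{exppf3} at $t = p$. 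The only substantive difference is that the dependence-tracking issue you flag as the ``hard part'' (and propose to settle by induction over the cascade) is dispatched by the paper in a single sentence asserting that Line \ref{alg2-8} removes every connection to monomials that depend on known features, so your treatment is, if anything, more careful on that point.
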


\section{Phy-Taylor Properties} \label{sec:analysis}
\begin{figure*}[!t]
\centering
\subfigure{\includegraphics[scale=0.52]{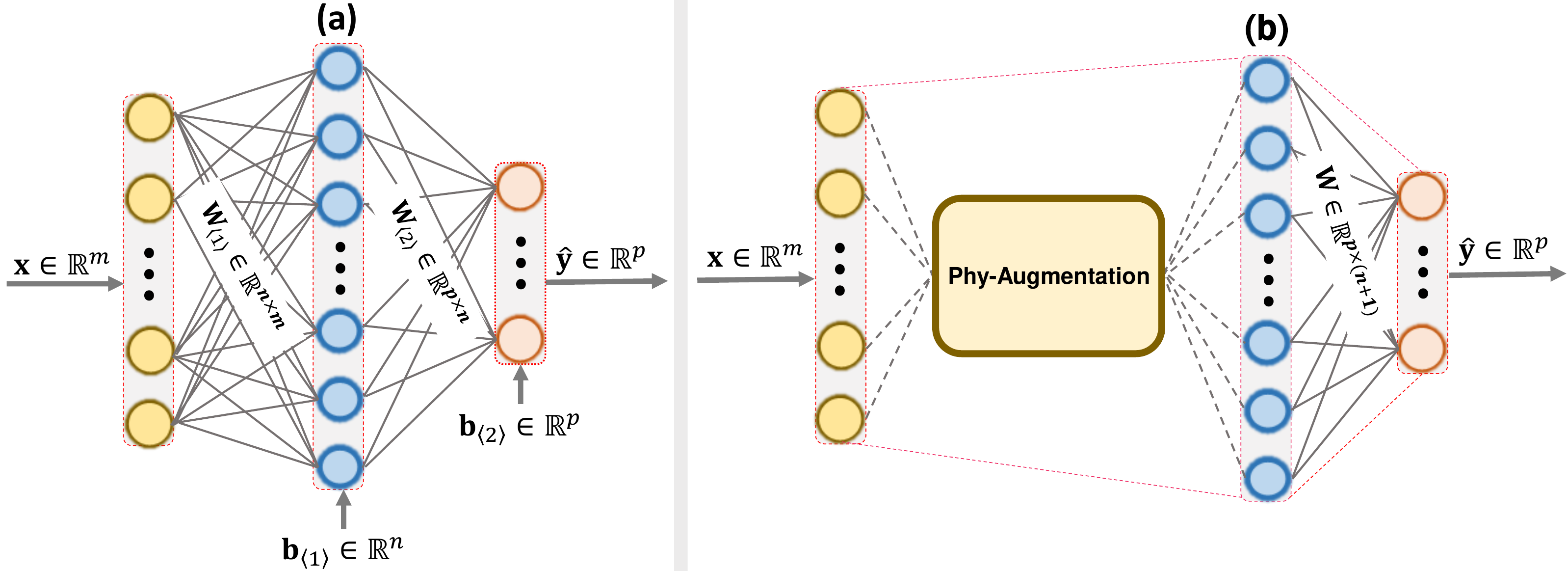}}
\caption{(a): Two Fully-connected NN layers. (b): A single PhN layer.}
\label{pint}
\end{figure*}
 
Moving forward, this section focuses on the property analysis of Phy-Taylor.
     
\subsection{Parameter Quantity Reduction}
The Figures \ref{PhyA} and \ref{PhyBB} show that the Phy-Augmentation, i.e., the Algorithm~\ref{ALG1}, expands the input without involving weight-matrix multiplication. This trait can be leveraged to significantly reduce the quantity of learning parameters including weights and bias. For the demonstration, we consider the two network models in Figure \ref{pint} (a) and (b), where the (a) describes a fully-connected two-layer network, while the (b) describes a single PhN. Observing them, we obtain that given the same dimensions of input and terminal output, the number of learning parameters of Figure \ref{pint} (a) is $(m+1)n + (n+1)p$ (including $(m + p)n$ weights and $n+p$ bias), while the number of learning parameters of Figure \ref{pint} (b) is $(n+1)p$ (including $n \times p$ weights and $p$ bias). The number difference of learning parameters is thus 
\begin{align}
(m+1)n + (n+1)p - (n+1)p = (m+1)n. \label{rs}
\end{align}  
We note the quantity of reduced parameters \eqref{rs} is the lower bound of PhN in Phy-Taylor framework, since it is obtained without considering physics-guided NN editing for removing and freezing links and bias according to the available physical knowledge. 

\subsection{Single PhN v.s. Cascade PhN}
We next investigate if the space complexity (i.e., the quantity of augmented monomials) of Phy-Augmentation of a single PhN with a large augmentation order can be reduced via cascade PhN with relatively small orders. To simplify the presentation, a single PhN and cascade PhN are respectively represented in the following equations.
\begin{align}
&\widehat{\mathbf{y}} = \mathrm{PhN}( {\left. {\mathbf{x}} \in \mathbb{R}^{n} \right| r}) \in \mathbb{R}^{m},\label{sTNO} \\
&{\mathbf{x}} \in {{\mathbb{R}}^n} ~~\longmapsto~~ {\mathbf{y}_{\left\langle 1\right\rangle }} = \mathrm{PhN}( {\left. {\mathbf{x}} \right|{r_{\left\langle 1 \right\rangle }}}) \in {{\mathbb{R}}^{{n_{\left\langle 1 \right\rangle }}}} ~~\longmapsto~~ \ldots ~~\longmapsto~~ {\mathbf{y}_{{\left\langle d-1\right\rangle }}} = \mathrm{PhN}( {\left. {{\mathbf{y}_{{\left\langle d-2 \right\rangle }}}} \right|{r_{{\left\langle d-1 \right\rangle }}}}) \in {{\mathbb{R}}^{{n_{{\left\langle d-1 \right\rangle }}}}} \nonumber\\
&\hspace{8.99cm} \longmapsto~~ \widehat{\mathbf{y}} = \mathrm{PhN}( {\left. {{\mathbf{y}_{{\left\langle d-1 \right\rangle }}}} \right|{r_{\left\langle d \right\rangle }}}) \in \mathbb{R}^{m},\label{cko}  
\end{align}
where the cascade architecture consists of $d$ PhNs. To guarantee the cascade PhN \eqref{cko} and the single PhN \eqref{sTNO} have the same monomials, their augmentation orders are required to satisfy
\begin{align}
\prod\limits_{v = 1}^d {{r_{{\left\langle v \right\rangle}}}} = r, ~~~~~~ \forall r_{{\left\langle v \right\rangle }}, ~~r \in \mathbb{N}. \label{mcc}
\end{align}

The space complexity difference of Phy-Augmentation is formally presented in the following theorem, whose proof is presented in Supplementary Information \ref{SI06}.
\begin{thm}
Under the condition \eqref{mcc}, the space complexity difference between single PhN \eqref{sTNO} and cascade PhN \eqref{cko}, due to Phy-Augmentation, is 
\begin{align}
\hspace{-0.5cm}\mathrm{len}(\mathfrak{m}(\mathbf{x},r)) - \sum\limits_{p = 1}^d {\mathrm{len}(\mathfrak{m}(\mathbf{x},r_{{\left\langle p \right\rangle }}))} = \sum\limits_{s = {r_{\left\langle 1 \right\rangle }} + 1}^r {\frac{{\left( {n + s - 1} \right)!}}{{\left( {n - 1} \right)!s!}}} - \sum\limits_{v = 1}^{d - 1} {\sum\limits_{s = 1}^{{r_{{\left\langle v+1 \right\rangle }}}} {\frac{{\left( {{n_{\left\langle v \right\rangle }} + s - 1} \right)!}}{{\left( {{n_{\left\langle v \right\rangle }} - 1} \right)!s!}}} } + 1 - d. \label{cffc}
\end{align}\label{cor}
\end{thm}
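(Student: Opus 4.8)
The plan is to reduce the whole statement to one fact about the output length of Algorithm~\ref{ALG1} and then finish by elementary bookkeeping. Writing $n = \mathrm{len}(\mathbf{z})$ for a generic input $\mathbf{z}$, the loop in Lines~\ref{alg1-3}--\ref{alg1-12} emits, for each degree $s \in \{1,\ldots,r\}$, every distinct monomial of that degree exactly once (the ``non-missing and non-redundant'' property already stated for Algorithm~\ref{ALG1}), and Line~\ref{alg1-16} prepends the degree-$0$ monomial $1$. Since the distinct degree-$s$ monomials in $n$ variables are in bijection with the nonnegative integer solutions of $\alpha_1 + \ldots + \alpha_n = s$, a stars-and-bars count gives $\binom{n+s-1}{s} = \frac{(n+s-1)!}{(n-1)!\,s!}$ of them, whence
\begin{align}
\mathrm{len}(\mathfrak{m}(\mathbf{z}, r)) = 1 + \sum_{s=1}^{r} \frac{(n+s-1)!}{(n-1)!\,s!}. \label{lenform}
\end{align}

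With \eqref{lenform} in hand, I would instantiate it on both architectures. For the single PhN~\eqref{sTNO} the input dimension is $n$ and the order is $r$, so $\mathrm{len}(\mathfrak{m}(\mathbf{x},r))$ is \eqref{lenform} read verbatim. For the cascade~\eqref{cko}, the first layer uses dimension $n$ and order $r_{\left\langle 1 \right\rangle}$, while for $v \in \{1,\ldots,d-1\}$ the $(v{+}1)$-th layer uses the intermediate dimension $n_{\left\langle v \right\rangle}$ (the output size of layer $v$) and order $r_{\left\langle v+1 \right\rangle}$; the sum $\sum_{p=1}^{d}\mathrm{len}(\mathfrak{m}(\mathbf{x},r_{\left\langle p \right\rangle}))$ in the statement is read as shorthand for these per-layer augmentation lengths. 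Applying \eqref{lenform} layer by layer and summing, the $d$ prepended constants contribute a total of $d$, the first layer contributes $\sum_{s=1}^{r_{\left\langle 1 \right\rangle}}\frac{(n+s-1)!}{(n-1)!\,s!}$, and the remaining layers contribute the double sum $\sum_{v=1}^{d-1}\sum_{s=1}^{r_{\left\langle v+1 \right\rangle}}\frac{(n_{\left\langle v \right\rangle}+s-1)!}{(n_{\left\langle v \right\rangle}-1)!\,s!}$.

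Subtracting the cascade total from the single-PhN length finishes the argument: the two constant contributions $1$ and $d$ leave $1-d$; splitting $\sum_{s=1}^{r} = \sum_{s=1}^{r_{\left\langle 1 \right\rangle}} + \sum_{s=r_{\left\langle 1 \right\rangle}+1}^{r}$ cancels the first-layer term against its single-PhN counterpart and leaves the tail $\sum_{s=r_{\left\langle 1 \right\rangle}+1}^{r}\frac{(n+s-1)!}{(n-1)!\,s!}$; and the remaining-layer double sum survives with a minus sign. Collecting these three pieces reproduces the right-hand side of~\eqref{cffc} exactly.

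I expect the only real care to be in the bookkeeping rather than in any deep step: the combinatorial count is the standard stars-and-bars identity, so the main obstacle is tracking the $d$ independent constant-$1$ terms (they are precisely what produce the $1-d$) and choosing the split point $r_{\left\langle 1 \right\rangle}$ so that the first-layer block cancels cleanly, avoiding off-by-one errors in the summation ranges. It is worth noting that the product condition~\eqref{mcc} plays no role in this length computation; it is needed only to ensure that the single and cascade PhNs realize the same highest-degree monomials in $\mathbf{x}$, making the comparison meaningful, whereas the length difference~\eqref{cffc} depends solely on the per-layer dimensions $n, n_{\left\langle 1 \right\rangle},\ldots,n_{\left\langle d-1 \right\rangle}$ and orders $r, r_{\left\langle 1 \right\rangle},\ldots,r_{\left\langle d \right\rangle}$.
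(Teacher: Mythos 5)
Your proposal is correct and follows essentially the same route as the paper: the paper's proof likewise invokes the length formula $\mathrm{len}(\mathfrak{m}(\mathbf{x},r)) = \sum_{s=1}^{r}\frac{(n+s-1)!}{(n-1)!\,s!}+1$ (established as an auxiliary theorem via the same stars-and-bars count you give inline), applies it per layer to get the cascade total with its $d$ constant terms, splits the single-PhN sum at $r_{\left\langle 1 \right\rangle}$, and subtracts. One small correction: the paper does use condition \eqref{mcc} in the length computation itself, namely to conclude $r > r_{\left\langle 1 \right\rangle}$ so that the split $\sum_{s=1}^{r} = \sum_{s=1}^{r_{\left\langle 1 \right\rangle}} + \sum_{s=r_{\left\langle 1 \right\rangle}+1}^{r}$ is well defined, so it is not quite true that \eqref{mcc} plays no role beyond making the comparison meaningful.
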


The Theorem \ref{cor} implies that the output dimensions and the augmentation orders of intermediate PhNs are critical in the significant reduction of space complexity via cascade PhN. 
However, an intuitive question arises: \textit{Does the cascade PhN reduce the complexity at the cost of model accuracy?} Without loss of generality, we use the following example to answer the question. 
\begin{ex}
For simplicity in explanation, we ignore the bias and consider the scenario that both the activation and the suppressor are inactive. For the single PhN \eqref{sTNO}, we let $\mathbf{x} \in \mathbb{R}^2$ and $\widehat{\mathbf{y}} \in \mathbb{R}$ and $r= 4$. Its output is then computed according to 
\begin{align}
&\widehat{\mathbf{y}} = {w_1}{[\mathbf{x}]_1} + {w_2}{[\mathbf{x}]_2} + {w_3}[\mathbf{x}]_1^2 + {w_4}{[\mathbf{x}]_1}{[\mathbf{x}]_2} + {w_5}[\mathbf{x}]_2^2 + {w_6}[\mathbf{x}]_1^3 + {w_7}[\mathbf{x}]_1^2{[\mathbf{x}]_2} + {w_8}{[\mathbf{x}]_1}[\mathbf{x}]_2^2 + {w_9}[\mathbf{x}]_2^3  \nonumber\\
&\hspace{4.83cm}+ {w_{10}}[\mathbf{x}]_1^4  + {w_{11}}[\mathbf{x}]_1^3{[\mathbf{x}]_2} + {w_{12}}[\mathbf{x}]_1^2[\mathbf{x}]_2^2 + {w_{13}}{[\mathbf{x}]_1}[\mathbf{x}]_2^3 + {w_{14}}[\mathbf{x}]_2^4, \label{cx1}
\end{align}
For the corresponding cascade PhN \eqref{cko}, we let $r_{{\left\langle 1 \right\rangle }} = r_{{\left\langle 2 \right\rangle }} = 2$, the output dimension of first PhN is 2 while dimension of terminal output is 1. We thus have 
\begin{align}
\widehat{\mathbf{y}} &= {{\hat w}_6}{\mathbf{y}_{\left\langle 1 \right\rangle }} + {{\hat w}_7}\mathbf{y}^2_{\left\langle 1 \right\rangle } = {{\hat w}_6}( {{{\tilde w}_1}{[\mathbf{x}]_1} + {{\tilde w}_2}{[\mathbf{x}]_2} + {{\tilde w}_3}[\mathbf{x}]_1^2 + {{\tilde w}_4}{[\mathbf{x}]_1}{[\mathbf{x}]_2} + {{\tilde w}_5}[\mathbf{x}]_2^2}) \nonumber\\
&\hspace{5.35cm} + {{\hat w}_7}{\left( {{{\tilde w}_1}{[\mathbf{x}]_1} + {{\tilde w}_2}{[\mathbf{x}]_2} + {{\tilde w}_3}[\mathbf{x}]_1^2 + {{\tilde w}_4}{[\mathbf{x}]_1}{[\mathbf{x}]_2} + {{\tilde w}_5}[\mathbf{x}]_2^2} \right)^2}. \label{cx2}
\end{align}
Observing \eqref{cx1} and \eqref{cx2}, we discover that (i) both the single and cascade architectures have the same monomials due to the satisfying condition \eqref{mcc}, and (ii) the single PhN layer has  
14 weight parameters (i.e., the ${w_1}$, ${w_2}$, $\dots$, ${w_{14}}$ in Equation \eqref{cx1}), while the cascade layers have only 7 weight parameters (i.e., the ${\hat{w}_1}$, $\ldots$, and $\hat{w}_7$ in Equation \eqref{cx2}) in total. Intuitively, we can conclude that \textit{if the reduced weights are associated with the links that contradict with physical knowledge, the cascade PhN can further increase model accuracy, otherwise, it can reduce the space complexity at the cost of model accuracy}. 
\end{ex}

\section{Extension: Self-Correcting Phy-Taylor}
\begin{wrapfigure}{r}{0.70\textwidth}
\vspace{-0.9cm}
  \begin{center}
    \includegraphics[width=0.70\textwidth]{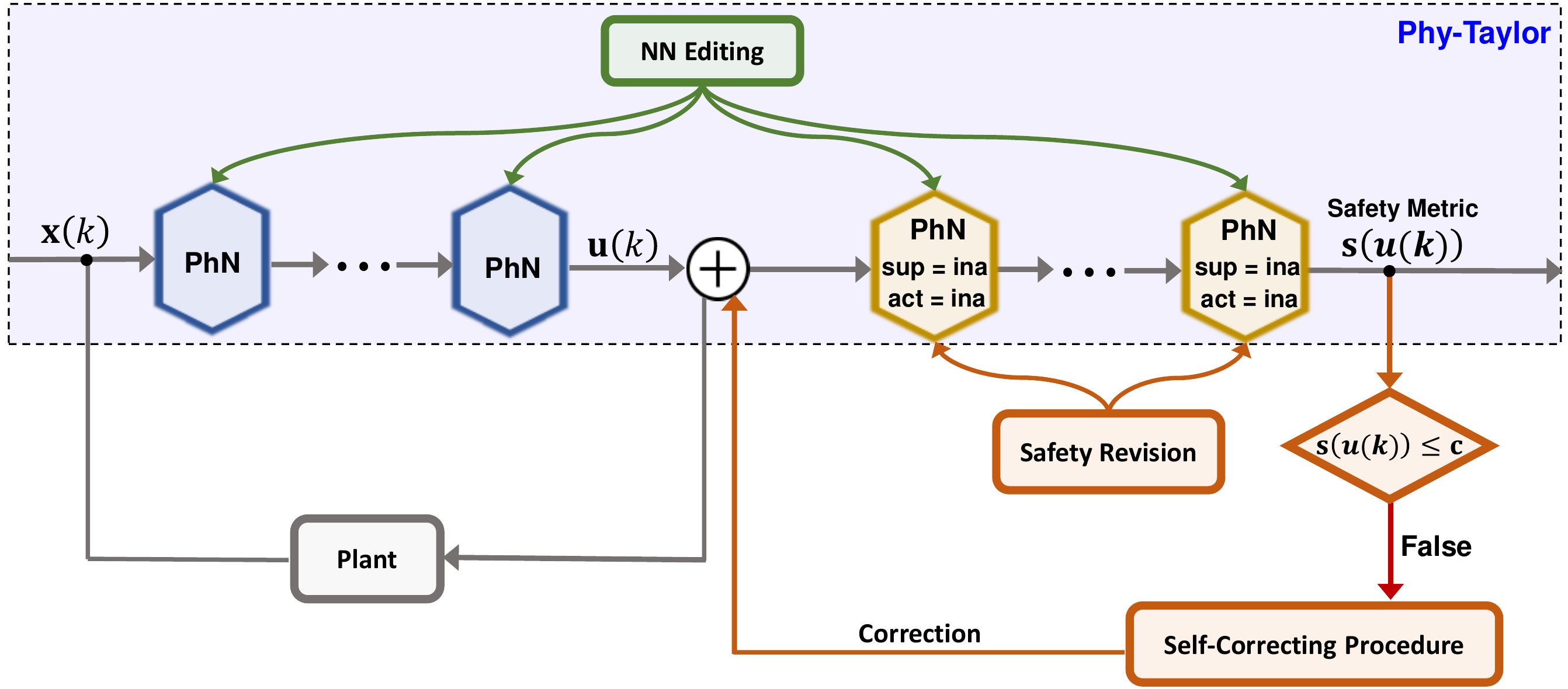}
  \end{center}
  \vspace{-0.5cm}
  \caption{Self-correcting Phy-Taylor Architecture: $\mathbf{u}(k)$ denotes the vector of real-time decisions, ${\bf{s}}(u(k))$ denotes the vector of real-time safety metrics, $\bf{c}$ is the vector of safety bounds.}
  \vspace{-0.5cm}
  \label{ghf}
\end{wrapfigure}
The recent incidents due to deployment of DNNs overshadow the revolutionizing potential of AI in the physical engineering systems \cite{arxr3, brief2021ai}, especially the safety-critical systems, whose unintended behavior results in death or serious injury to people, severe damage to equipment or environments \cite{arxr1,arxr2}. The safe control and planning is a fundamental solution for enhancing safety assurance of the AI-assisted physical systems often operating in environments where time and safety are critical, such as the airplanes, medical drones and autonomous vehicles. To comply with safety constraints in face of potential conflicts from control objectives, the framework of control barrier function (CBF) has been proposed for the computation of real-time safety-critical control commands \cite{ames2019control,singletary2022onboard}. The CBFs however use only current state information without prediction, whose control policy is thus greedy and challenging for proactive safe control. It is well known that model predictive control (MPC) yields a less greedy safe control policy, since it takes future state information into account \cite{williams2018information,falcone2007predictive}. Motivated by the observations, MPC with incorporation of CBF, i.e., MPC-CBF, was proposed \cite{zeng2021safety}. Due to the nonlinear dynamics, the MPC-CBF however faces a dilemma of prediction horizon and computation time of safe control commands, which induces considerable feedback delays and thus leads to failures in the time- and safety-critical operating environments. To address the dilemma, we propose the self-correcting Phy-Taylor, whose architecture is shown in Figure \ref{ghf}. Its one mission is learning the safety relationship between the real-time decisions and the safety metrics, with consideration of future information: 
\begin{align}
\mathbf{s}(\mathbf{x}(k),\mathbf{u}(k),\tau) = \sum\limits_{t = k}^{k + \tau - 1} {\tilde{\mathbf{f}}(\mathbf{x}(t),\mathbf{u}(t))} , \label{mkbzkm}
\end{align}
where $\tilde{\mathbf{f}}(\mathbf{x}(t),\mathbf{u}(t))$ is the predefined vector of safety metrics at time $t$, and $\tau$ denotes the horizon of future information of safety metrics. 

Inside the self-correcting Phy-Taylor, the learned safety relationship for approximating \eqref{mkbzkm} will first be subject to the off-line verification of available physical knowledge, based on which, the necessary revisions can be needed. According to the off-line verified and revised (if needed) safety relationship, the correcting of real-time decision $\mathbf{u}(k)$ will be triggered if any safety metric $[\mathbf{s}(\mathbf{x}(k),\mathbf{u}(k),\tau)]_{i}$, $i \in \{1,2,\ldots,h\}$, exceeds (or leaves) the preset safety bounds (or safety envelopes). However, the current learned formula corresponding to \eqref{mkbzkm} is not ready (if not impossible) for delivering the procedure, owning to the complicated dependence of $[\mathbf{s}(\mathbf{x}(k),\mathbf{u}(k),\tau)]_{i}$ on both system state $\mathbf{x}(k)$ and decision $\mathbf{u}(k)$. To address this problem, as shown in Figure \ref{ghf}, we decouple the real-time decisions from the real-time system states. Specifically, 
\begin{itemize}
\vspace{-0.1in}
\item Given the real-time system state $\mathbf{x}(k)$ as the origin input, the first Phy-Taylor outputs the real-time decision $\mathbf{u}(k)$, which is motivated by the fact that the state-feedback control is used most commonly in physical engineering systems \cite{doyle2013feedback}. In other words, the computation of raw $\mathbf{u}(k)$ directly depends on real-time system state $\mathbf{x}(k)$.
\vspace{-0.1in}
\item Given the real-time decision $\mathbf{u}(k)$ (i.e., the output of the first Phy-Taylor) as the input of the second Phy-Taylor, the terminal output is the real-time safety metric $\mathbf{s}(\mathbf{u}(k))$, which is motivated by the fact that the decision $\mathbf{u}(k)$ manipulates system state. In other words, the safety metric $\mathbf{s}(\mathbf{u}(k))$ directly depends on decision $\mathbf{u}(k)$ and indirectly depends on system state $\mathbf{x}(k)$.
\vspace{-0.1in}
\item The two Phy-Taylors are trained simultaneously according to the training loss function:
\begin{align}
\mathcal{L} = \alpha \left\| {{\bf{s}}(\mathbf{u}(k)) - {\bf{s}}({\bf{x}}(k),{\bf{u}}(k),\tau )} \right\| + \beta \left\| {\breve{\bf{u}}(k) - \bf{u}(k)} \right\|, \label{tranloss}
\end{align}
where the ${\bf{s}}({\bf{x}}(k),{\bf{u}}(k),\tau )$ given in Equation \eqref{mkbzkm} is ground truth of safety-metric vector,  the $\breve{\bf{u}}(k)$ is ground truth of decision vector, the $\alpha$ and $\beta$ are hyperparameters. The two cascade Phy-Taylors thus depend on each other.
\vspace{-0.1in}
\item To render the learned safety relationship  ${\bf{s}}(\mathbf{u}(k))$ tractable, the activation and compressor inside the second Phy-Taylor are inactive, such that the ${\bf{s}}(\mathbf{u}(k))$ is expressed in the form of Taylor series.
\end{itemize}

Given the verified and revised (if needed) relationship, the self-correcting procedure will be triggered (if a safety metric exceeds the safety bound $\mathbf{c}$) for correcting decision according to  
\begin{align}
\mathbf{u}(k) \leftarrow \mathop {\arg \min }\limits_{\widetilde{\mathbf{u}}(k) \in {\mathbb{R}^m}} \left\{ {\left. {\left\| {\widetilde{\mathbf{u}}(k) - \mathbf{u}(k)} \right\|} \right| [\bf{s}(\widetilde{\mathbf{u}}(k)]_i < [\mathbf{c}]_i,  ~~i \in \{1, 2, \ldots, \text{len}({\bf{s}}(\mathbf{u}(k)))\}} \right\}. \label{correctdecision}
\end{align}
We note the self-correcting mechanism and the safety revision of relationship between ${\bf{s}}(\mathbf{u}(k))$ and $\mathbf{u}(k)$  for delivering \eqref{correctdecision} vary with safety problems and physical systems. An example in this paper is the safe control of autonomous vehicles: Algorithm~\ref{ALG4} in the Experiments section.

\section{Experiments}\label{sec:exp}
The demonstrations are performed on three different systems with different degrees of available physical knowledge: autonomous vehicles, coupled pendulums, and a subsystem of US Illinois climate. 

\subsection{Autonomous Vehicles} \label{expav}
\begin{wrapfigure}{r}{0.50\textwidth}
\vspace{-0.9cm}
  \begin{center}
    \includegraphics[width=0.50\textwidth]{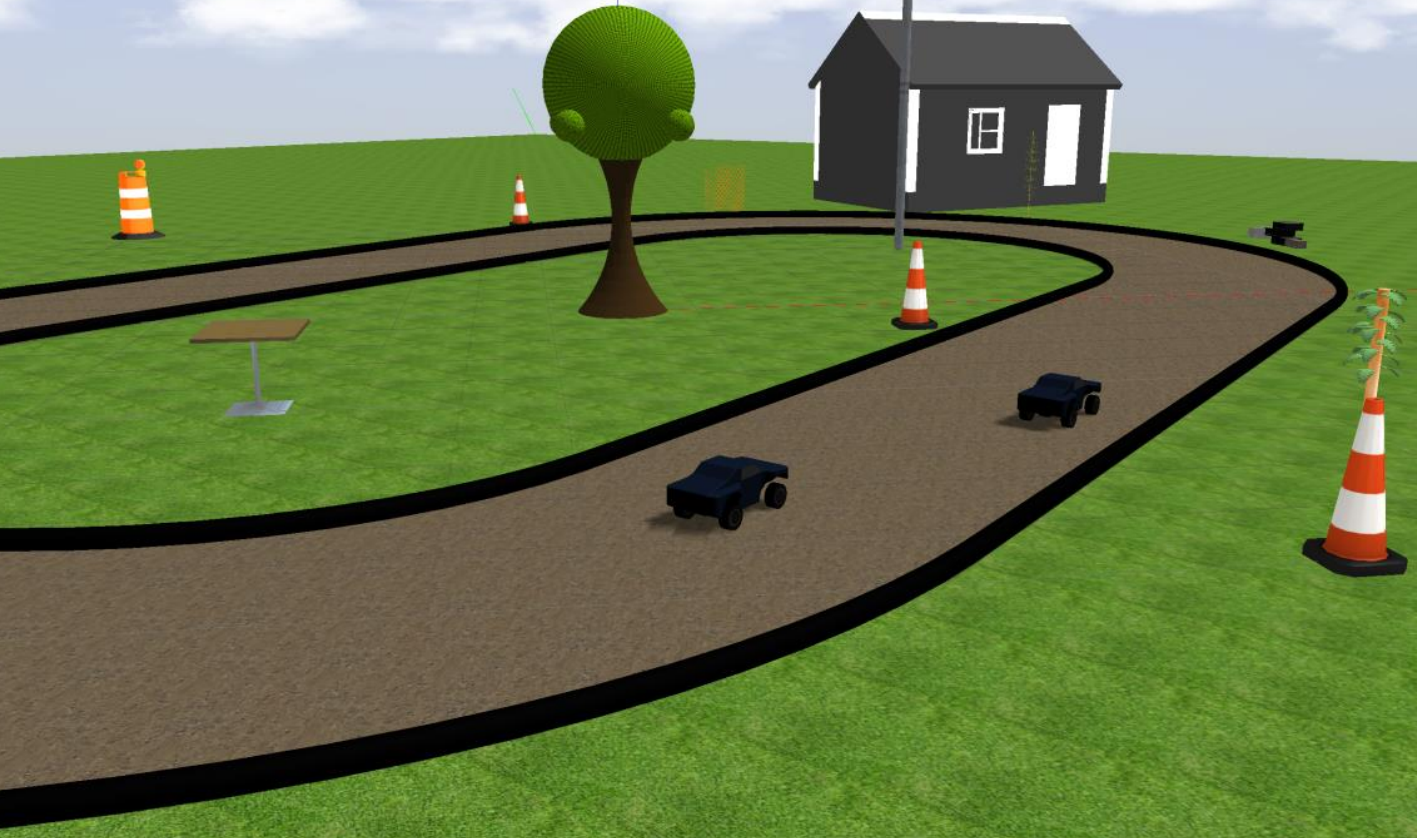}
  \end{center}
  \vspace{-0.5cm}
  \caption{Vehicle's driving environment.}
  \vspace{-0.5cm}
  \label{env}
\end{wrapfigure}
The first experiment performs the demonstration of two functions: (i) the learning of vehicle’s conjunctive lateral and longitudinal dynamics via Phy-Taylor, and (ii) the safe velocity regulation via self-correcting Phy-Taylor. The vehicle's driving environment is shown in Figure~\ref{env}, operating in the AutoRally platform \cite{goldfain2019autorally}.

\subsubsection{Vehicle Dynamics Learning} \label{expppgf}
We first leverage our available physical knowledge to identify the known model-substructure parameters for the physics-guided NN editing. According to the Newton’s second law for motion along longitudinal and lateral axes \cite{rajamani2011vehicle}, we have the following governing equations:
% \begin{equation}
\begin{align}
\bar{m}\ddot {\mathrm{p}} = {F_{\mathrm{p}f}} + {F_{\mathrm{p}r}} - {F_{\text{aero}}} - {R_{\mathrm{p}f}} - {R_{\mathrm{p}r}}, \quad \quad \bar{m} (\ddot {\mathrm{y}} + \dot{\psi} {v_{\mathrm{p}}}) = {F_{\mathrm{y}f}} + {F_{\mathrm{y}r}}, \label{balancequation}
\end{align}
% \end{equation}
where $\mathrm{p}$ is the longitudinal position, $\mathrm{y}$ is the lateral position, $\psi$ is the vehicle yaw angle, $\bar{m}$ is the vehicle mass, $v_{\mathrm{p}} \triangleq \dot {\mathrm{p}}$ 
is the longitudinal velocity, ${F_{\mathrm{p}f}}$ and ${F_{\mathrm{p}r}}$ denote the longitudinal tire force at the front and rear tires, respectively, ${R_{\mathrm{p}f}}$ and ${R_{\mathrm{p}r}}$ denote the rolling resistance at the front and rear tires, respectively, ${F_{\text{aero}}}$ represents the longitudinal aerodynamic drag force, ${F_{\mathrm{y}f}}$ and ${F_{\mathrm{y}r}}$ are the lateral tire forces of the front and rear wheels, respectively. With the notations of lateral velocity $v_{\mathrm{y}} \triangleq \dot {\mathrm{y}}$ and yaw velocity $v_{\psi} \triangleq \dot{\psi}$, the following state space model is derived from the force balance equation \eqref{balancequation} in the literature \cite{rajamani2011vehicle}. 
\begin{align}
\frac{\mathrm{d}}{{\mathrm{d}t}} \left[ \begin{gathered}
\mathrm{p} \hfill \\
\mathrm{y} \hfill \\
\psi \hfill \\
{v_{\mathrm{p}}} \hfill \\
{v_{\mathrm{y}}} \hfill \\
{v_\psi } \hfill \\
\end{gathered} \right] = \left[ {\begin{array}{*{20}{c}}
0&0&0&1&0&0 \\
0&0&0&0&1&0 \\
0&0&0&0&0&1 \\
0&0&0&*&0&0 \\
0&0&0&0&*&* \\
0&0&0&0&*&*
\end{array}} \right]\underbrace{\left[ \begin{gathered}
\mathrm{p} \hfill \\
\mathrm{y} \hfill \\
\psi \hfill \\
{v_{\mathrm{p}}} \hfill \\
{v_{\mathrm{y}}} \hfill \\
{v_\psi } \hfill \\
\end{gathered} \right]}_{\triangleq {\mathbf{x}}} + \left[ \begin{gathered}
0 \hfill \\
0 \hfill \\
0 \hfill \\
* \hfill \\
0 \hfill \\
0 \hfill \\
\end{gathered} \right]\theta + \left[ \begin{gathered}
0 \hfill \\
0 \hfill \\
0 \hfill \\
0 \hfill \\
* \hfill \\
* \hfill \\
\end{gathered} \right]\delta, \label{statmodel}
\end{align}
where `*' can represent a state-dependent or time-dependent function or mixed of them or just a scalar, \underline{but is unknown to us}, and $\theta$ and $\delta$ denote throttle and steering, respectively. Given the practical physical knowledge that \textit{the throttle computation depends on the longitudinal velocity and position only, while the dependencies of steering are unknown}, the state space model \eqref{statmodel} updates with 
\begin{align}
\dot {{\mathbf{x}}} = \left[ {\begin{array}{*{20}{c}}
0&0&0&1&0&0 \\
0&0&0&0&1&0 \\
0&0&0&0&0&1 \\
*&0&0&*&0&0 \\
*&*&*&*&*&* \\
*&*&*&*&*&*
\end{array}} \right]{\mathbf{x}}. \nonumber
\end{align}
The sampling technique, with sampling period denoted by $T$, converts the continuous-time state-space model above to the following discrete-time one: 
\begin{align}
{\mathbf{x}}\left( {k + 1} \right) = \left[ {\begin{array}{*{20}{c}}
1&0&0&T&0&0 \\
0&1&0&0&T&0 \\
0&0&1&0&0&T \\
*&0&0&*&0&0 \\
*&*&*&*&*&* \\
*&*&*&*&*&*
\end{array}} \right]{\mathbf{x}}\left( k \right).\label{discar}
\end{align}

\begin{wrapfigure}{r}{0.55\textwidth}
\vspace{-0.9cm}
  \begin{center}
    \includegraphics[width=0.55\textwidth]{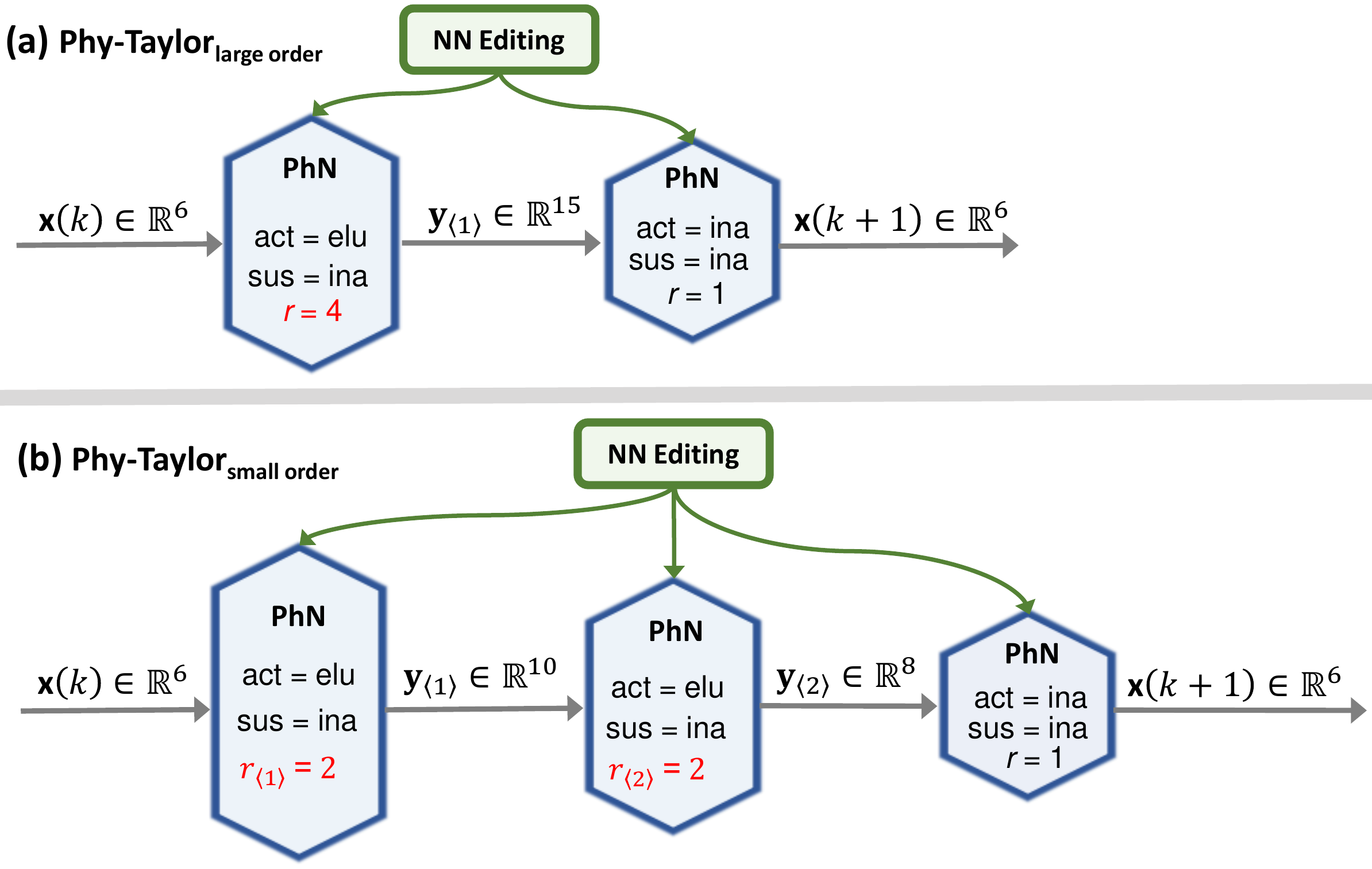}
  \end{center}
  \vspace{-0.5cm}
  \caption{(a): Phy-Taylor$_{\text{large order}}$ has a PhN with large order $r = 4$. (b): Phy-Taylor$_{\text{small order}}$ has cascading PhNs with relatively small orders satisfying $r_{\left\langle 1 \right\rangle} \cdot r_{\left\langle 2 \right\rangle} = 2 \cdot 2 = 4 = r$.}
  \vspace{-0.3cm}
  \label{capspoo}
\end{wrapfigure}
We first consider two Phy-Taylor models, named `Phy-Taylor$_{\text{large order}}$' and `Phy-Taylor$_{\text{small order}}$', which can embed the available knowledge (i.e., the known parameters included in system matrix of model \eqref{discar}). Their architectures are shown in Figure \ref{capspoo} (a) and (b). The Phy-Taylor$_{\text{large order}}$ has one PhN with a large augmentation order while the Phy-Taylor$_{\text{small order}}$ has two cascade PhN layers with two relatively small augmentation orders. Meanwhile, the three orders satisfy the condition \eqref{mcc} for having the same monomials of Taylor series. We also consider the corresponding models without NN editing (i.e., without physical knowledge embedding), which degrades the Phy-Taylor to the deep PhN (DPhN). The two DPhN models are named `DPhN$_{\text{large order}}$' and `DPhN$_{\text{small order}}$'. The final model we considered is the seminal Deep Koopman \cite{lusch2018deep}, following the same model configurations therein. The configurations of five models are summarized in Table \ref{taboo}.

\begin{table*}\footnotesize{
\caption{Model Configurations}
\centering
\begin{tabular}{l cc cc cc c c}
\toprule
 & \multicolumn{2}{c}{Layer 1}  & \multicolumn{2}{c}{Layer 2} & \multicolumn{2}{c}{Layer 3}\\
\cmidrule(lr){2-3} \cmidrule(lr){4-5} \cmidrule(lr){6-7}
Model ID     & \#weights    & \#bias  & \#Weights & \#bias  & \#weights  & \#bias & \#parameter sum & prediction error $e$\\
\midrule
DPhN$_{\text{large order}}$    &  $3135$   &    $15$   &   $90$      &     $6$    &   $-$    &   $-$   &   $3246$    &   nan\\
DPhN$_{\text{small order}}$   &  $270$     &    $10$   &   $520$    &     $8$    &   $48$   &   $6$   &   $862$     &   $45.57277$ \\
Phy-Taylor$_{\text{large order}}$   &  $2313$    &    $12$   &   $30$     &     $3$    &   $-$     &   $-$   &   $2358$   &   $0.047758$ \\
Phy-Taylor$_{\text{small order}}$  &  $167$     &    $7$    &   $265$   &     $5$    &   $18$   &   $3$    &    $465$     &   $0.003605$\\
\end{tabular}
\begin{tabular}{l cc cc cc c c}
\toprule
 & \multicolumn{2}{c}{Encoder}  & \multicolumn{2}{c}{Decoder} & \multicolumn{2}{c}{Auxiliary Networks}\\
\cmidrule(lr){2-3} \cmidrule(lr){4-5} \cmidrule(lr){6-7}
Model ID     & \#weights    & \#bias  & \#weights & \#bias  & \#weights  & \#bias & \#parameter sum & prediction error\\
\midrule
Deep Koopman   &  $2040$   &    $176$   &   $2040$      &     $176$    &   $19920$    &   $486$   &   $24838$    &  $0.232236$\\
\bottomrule
\end{tabular}\label{taboo}}
\end{table*}

\begin{figure*}[!t]
\centering
\subfigure{\includegraphics[scale=0.143]{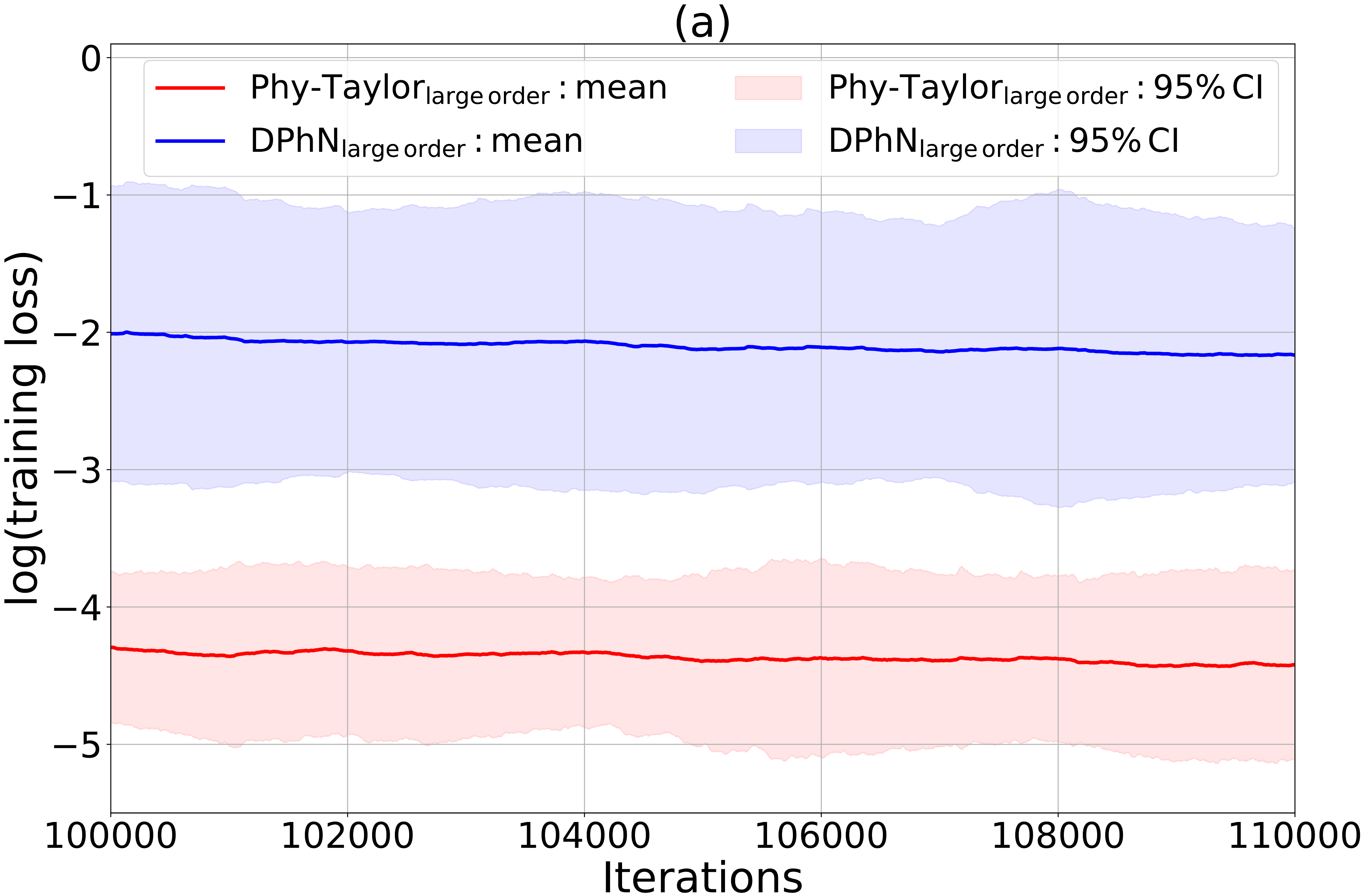}}
\subfigure{\includegraphics[scale=0.143]{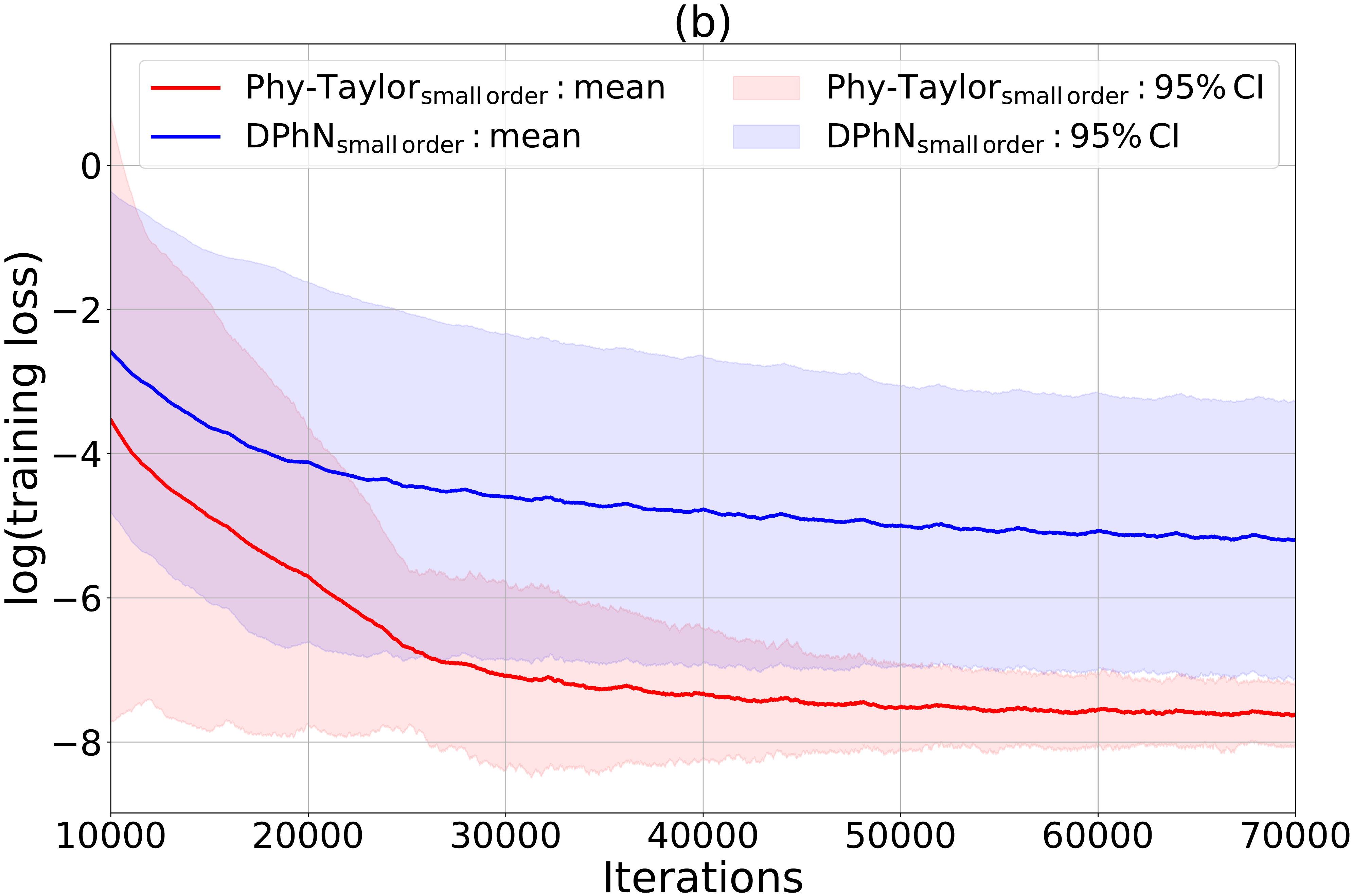}}
\subfigure{\includegraphics[scale=0.143]{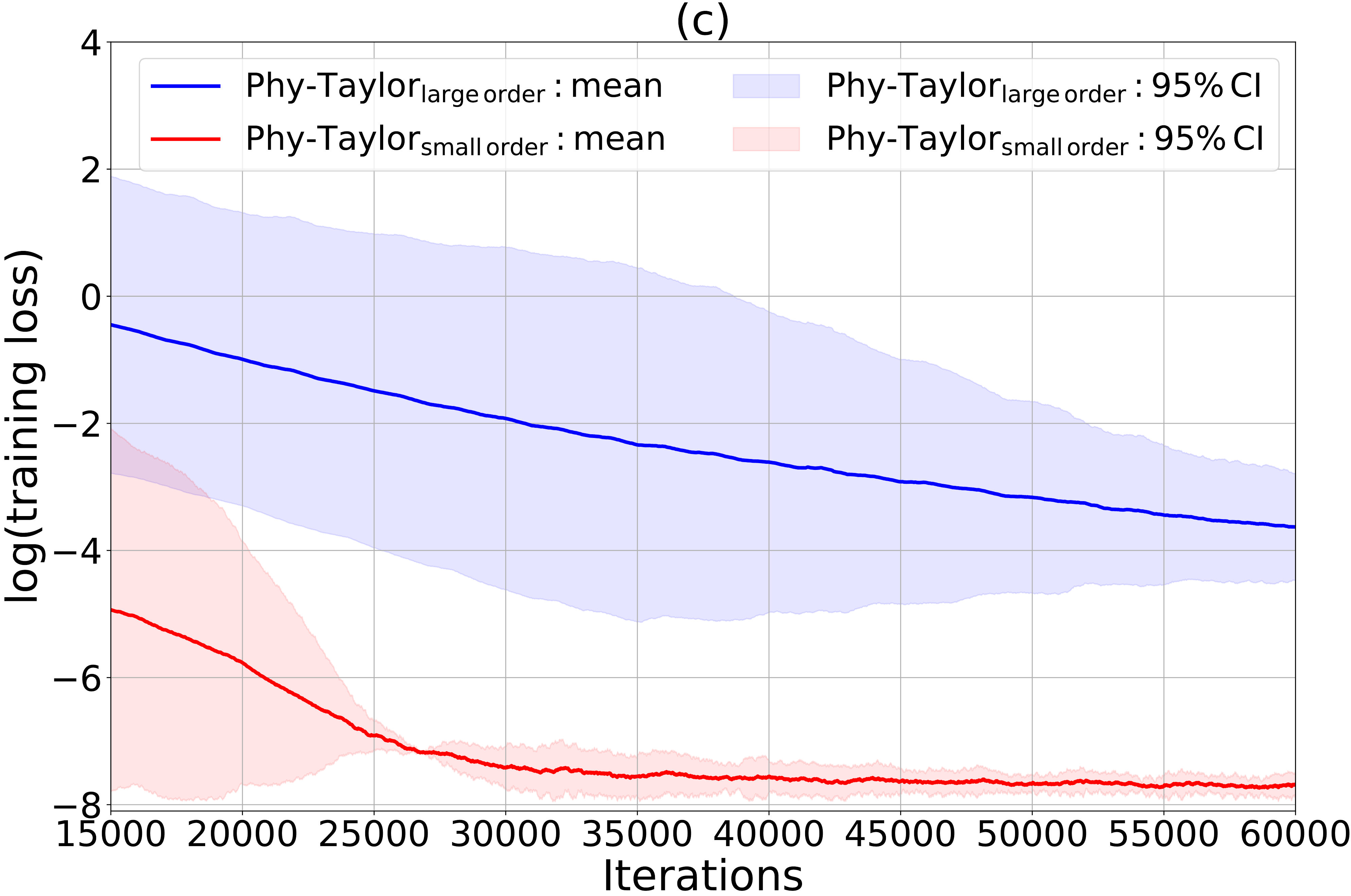}}
\subfigure{\includegraphics[scale=0.178]{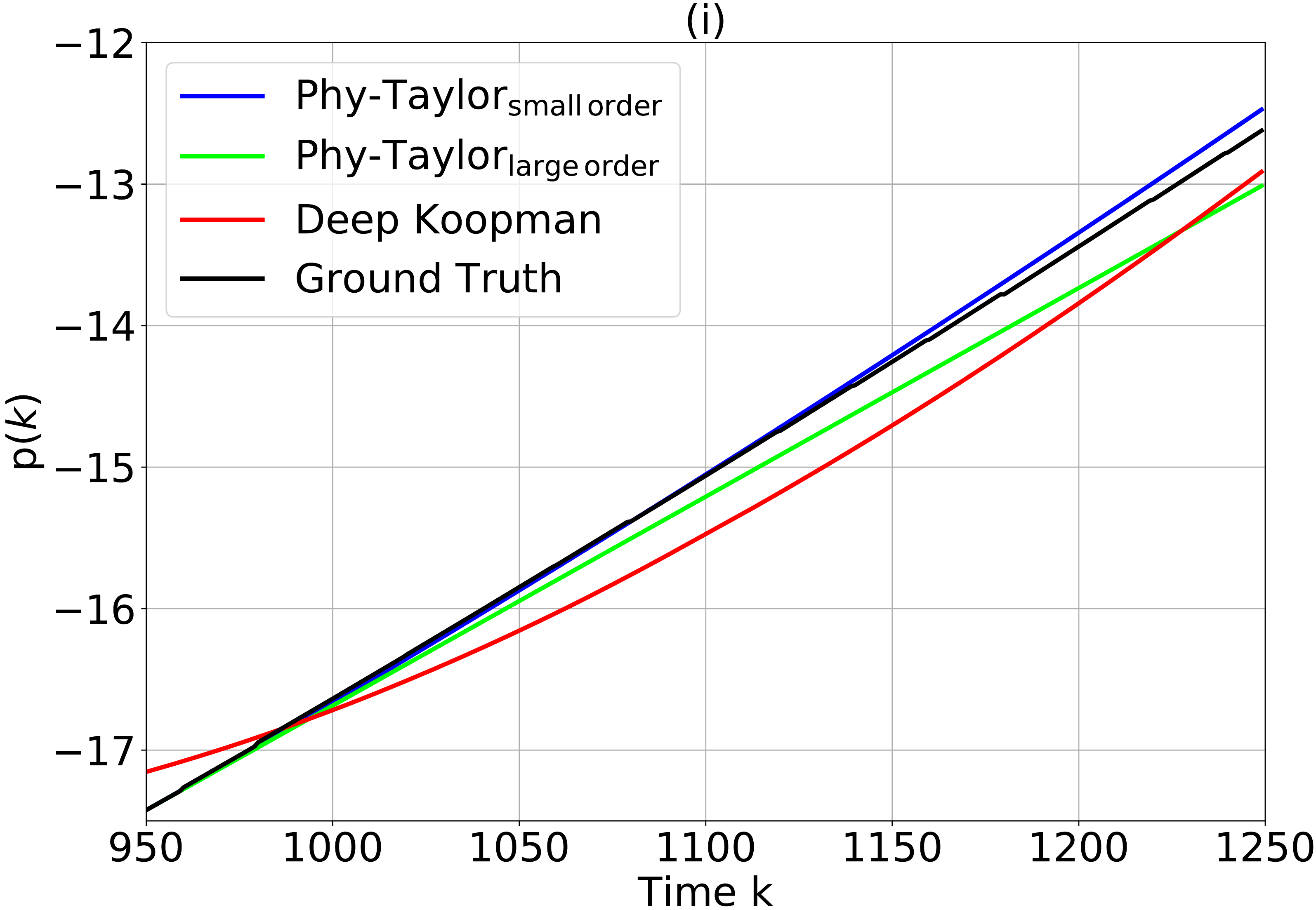}}
\subfigure{\includegraphics[scale=0.178]{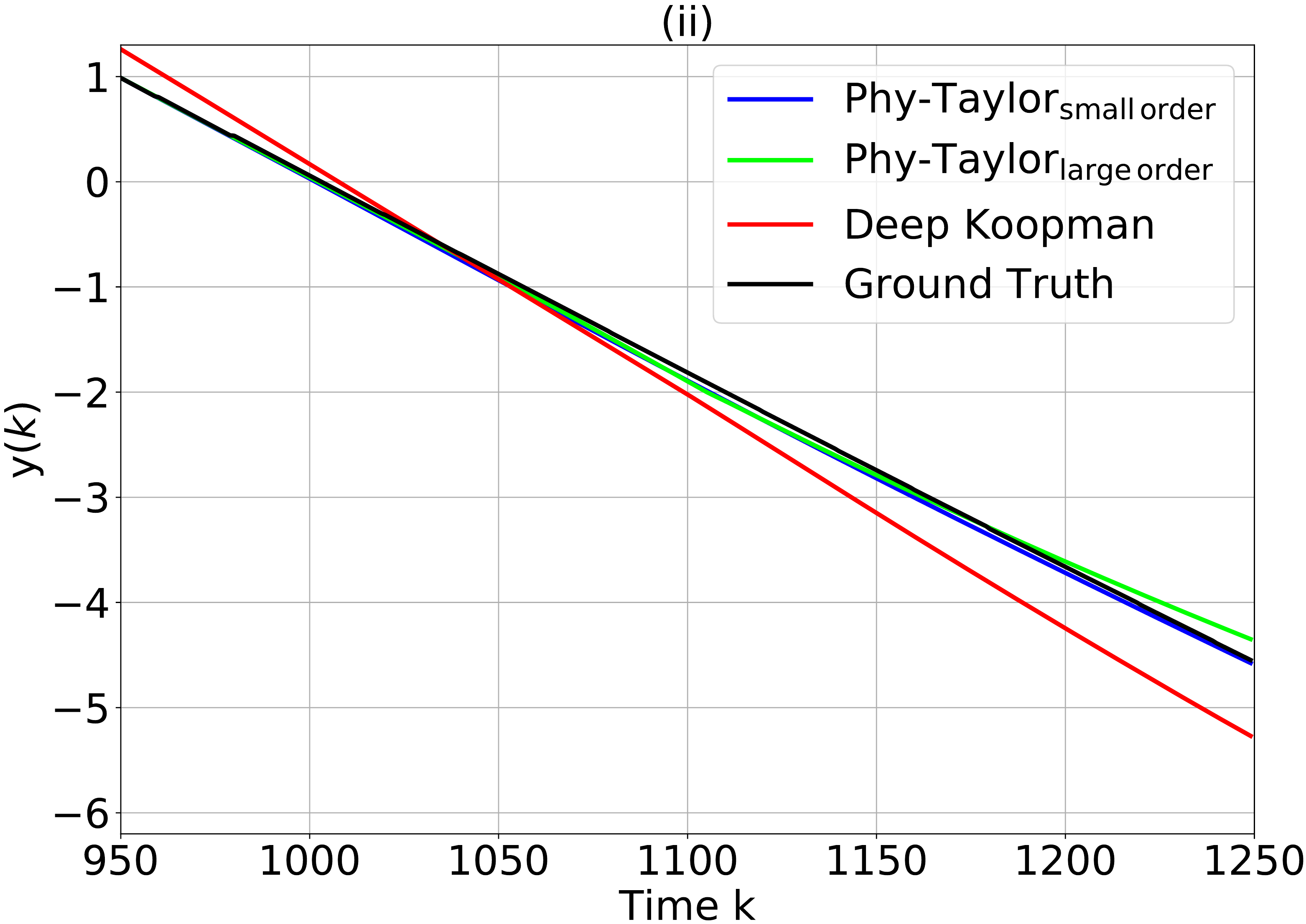}}
\subfigure{\includegraphics[scale=0.178]{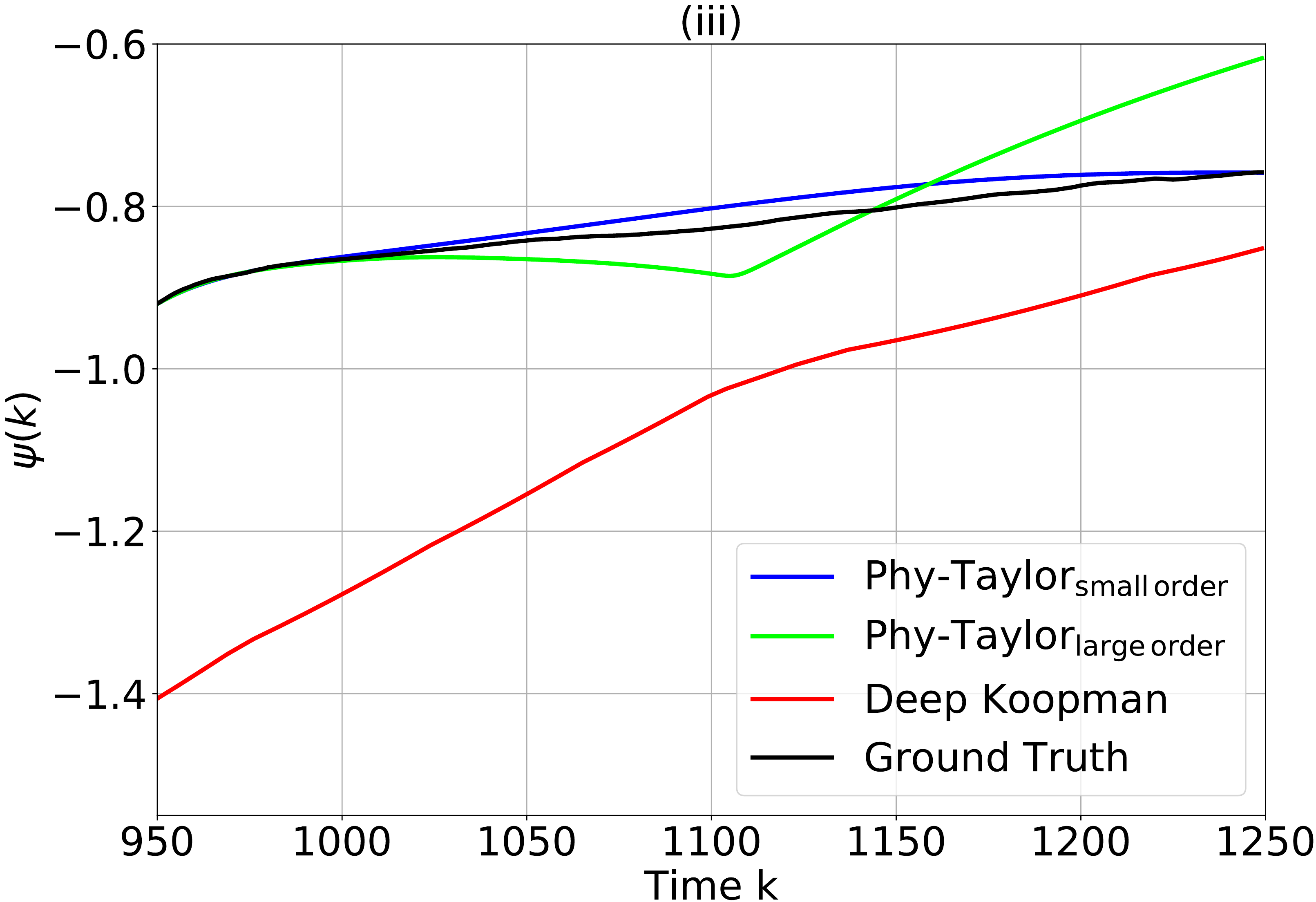}}
\subfigure{\includegraphics[scale=0.178]{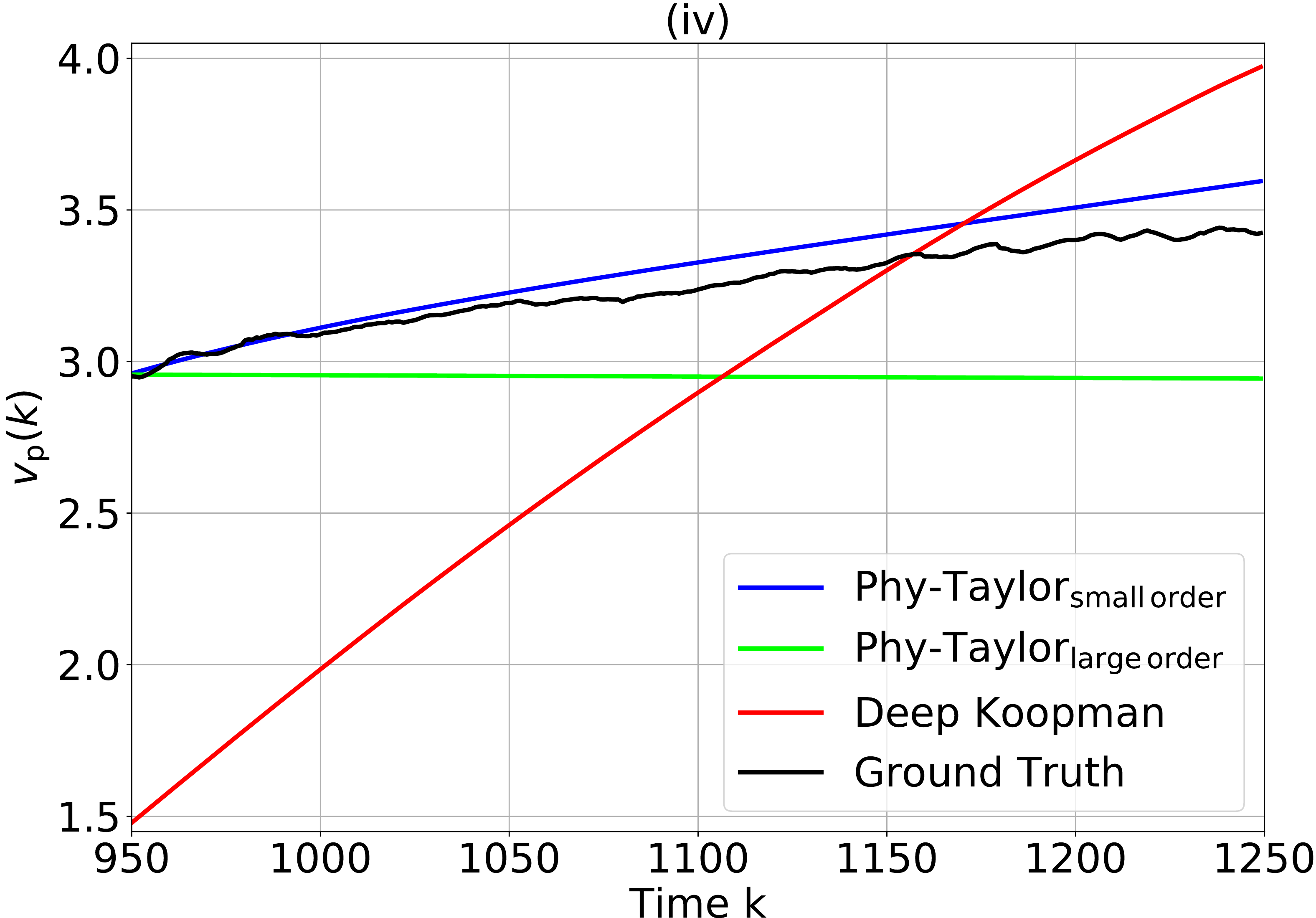}}
\subfigure{\includegraphics[scale=0.178]{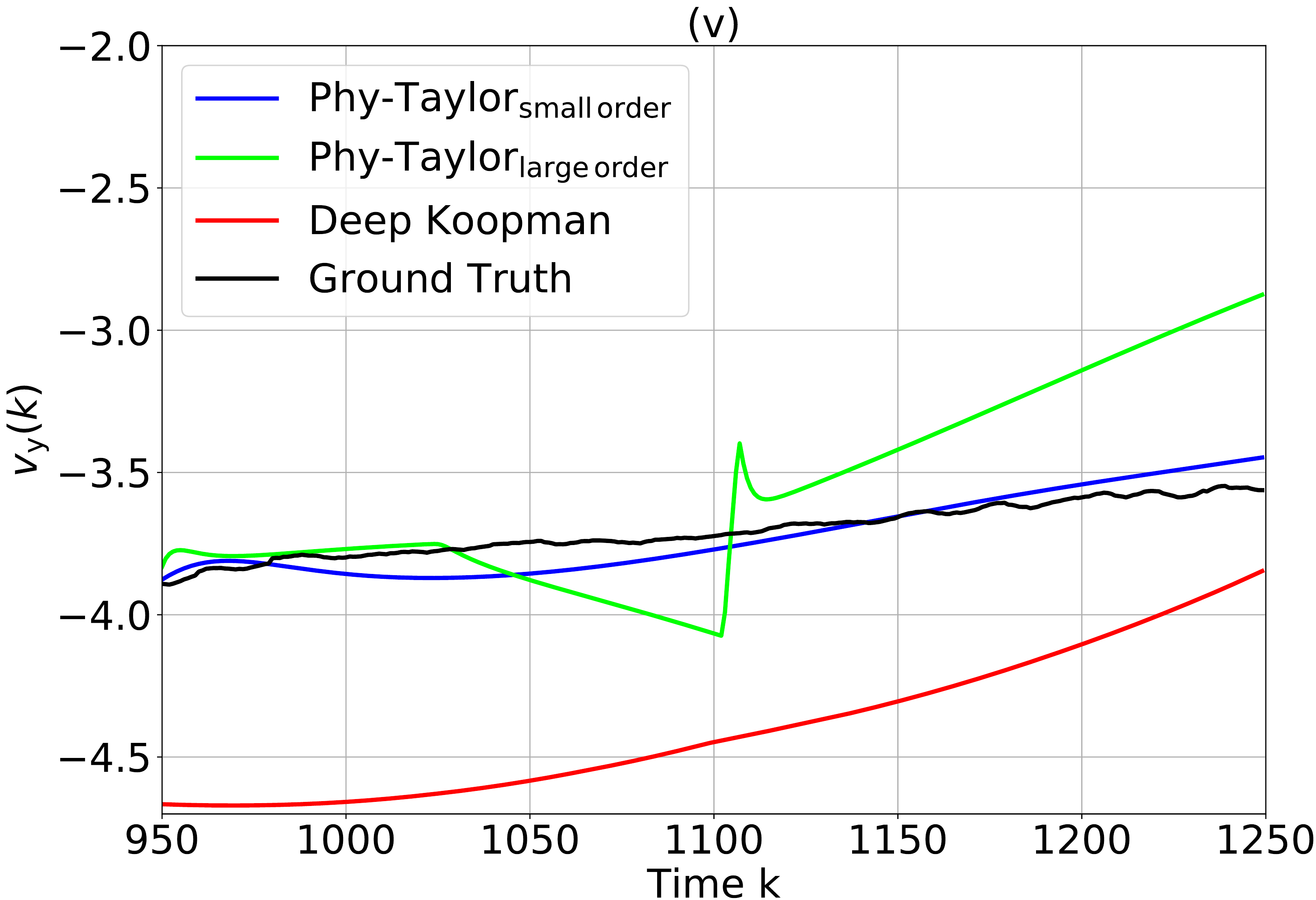}}
\subfigure{\includegraphics[scale=0.178]{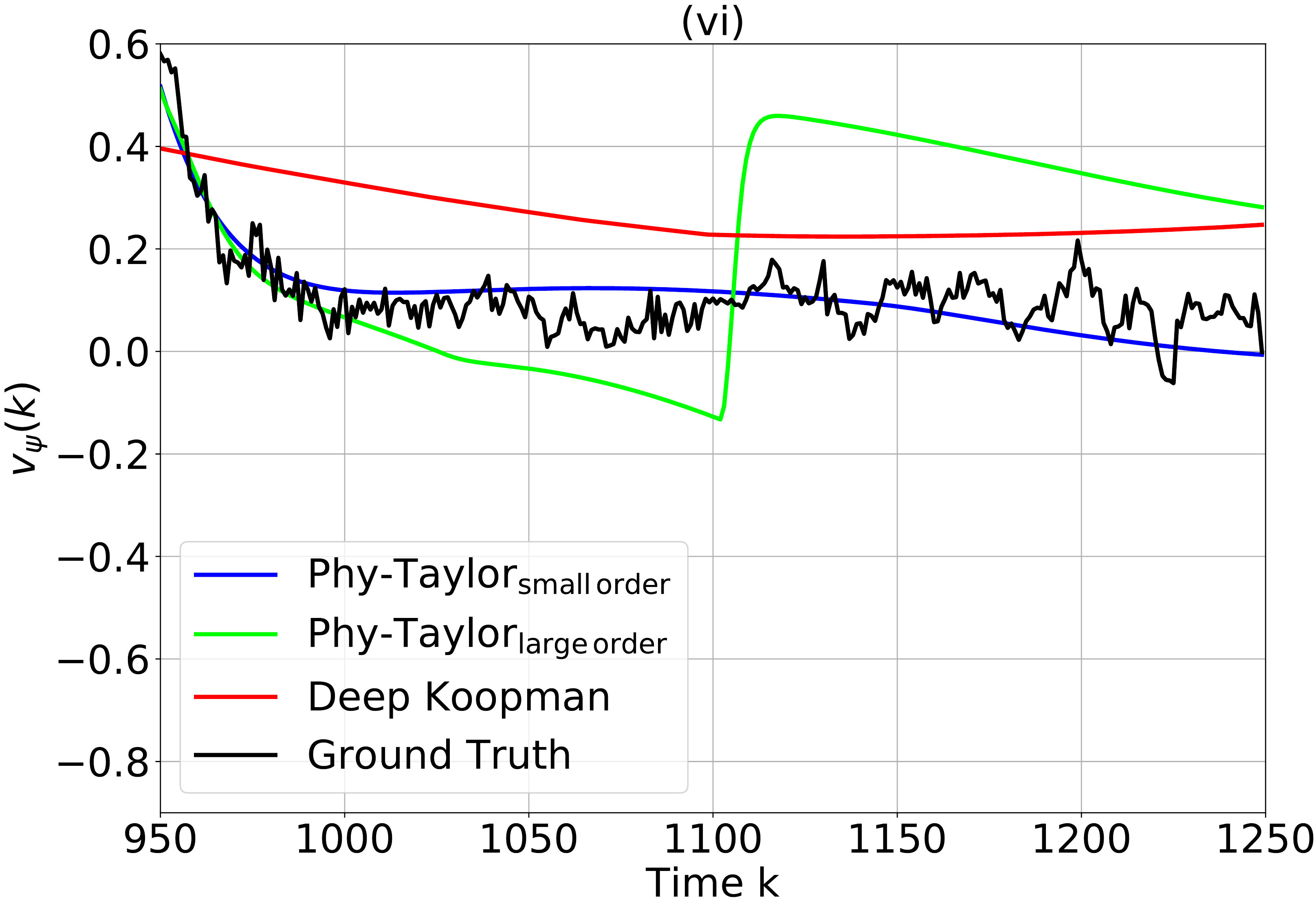}}
\caption{\textbf{Training and Testing}. (a)-(c): The trajectories of averaged training loss (5 random seeds) of different models described in Table \ref{taboo}. (i)-(vi): Ground truth and predicted trajectories via trained models.} 
\label{trvala}
\vspace{-0.4cm}
\end{figure*}

The trajectories of training loss are presented in Figure \ref{trvala} (a)--(c). The (training loss, validation loss) of trained DPhN$_{\text{large order}}$, DPhN$_{\text{small order}}$, Phy-Taylor$_{\text{large order}}$ and Phy-Taylor$_{\text{small order}}$ are (0.00389, 0.00375), (0.000344, 0.000351), (0.000222, 0.000238) and (0.000915, 0.000916), respectively. To perform the testing, we consider the long-horizon prediction of system trajectories, given the same initial conditions. The prediction error is measured by the mean squared error: $e = \frac{1}{\kappa }\sum\limits_{t = k + 1}^{k + \kappa } {\frac{1}{6}} \left\| {\underline{\bf{x}} \left( t \right) - {\bf{x}}\left( t \right)} \right\|~\text{with}~\underline {\bf{x}} \left( k \right) = {\bf{x}}\left( k \right)$, where $\underline {\bf{x}} \left( t \right) $ is the prediction of ground truth ${\bf{x}} \left( t \right)$ at time $t$. The prediction errors over the horizon $\kappa = 300$ and initial time $k = 950$ are summarized in Table \ref{taboo}. The ground-truth trajectories and predicted ones from Deep Koopman and the Phy-Taylor models are presented in Figure \ref{trvala} (i)--(vi). Observing from Table \ref{taboo} and Figure \ref{trvala}, we can conclude:
\begin{itemize}
\vspace{-0.1in}
\item Figure \ref{trvala} (i)--(vi) and Figure \ref{trvala} (a)--(b): the physics-guided NN editing can significantly accelerate model training, reduce validation and training loss as well as improve model accuracy (viewed from long-horizon prediction of trajectory).
\vspace{-0.1in}
\item Figure \ref{trvala} (i)--(vi) and Figure \ref{trvala} (c): with physics-guided NN editing, the cascade PhN with small augmentation orders can further significantly reduce training loss, and increase model accuracy. This can be due to the further removed spurious correlations or NN links contradicting with physics law, via the cascade architecture.
\vspace{-0.1in}
\item Figure \ref{trvala} (i)--(vi) and Table \ref{taboo}: compared with the fully-connected DPhN models, i.e., DPhN$_{\text{large order}}$ and DPhN$_{\text{small order}}$, the seminal Deep Koopman strikingly increases model accuracy, viewed from the perspective of long-horizon prediction of trajectory. Conversely, compared to Deep Koopman, the Phy-Taylor models (both Phy-Taylor$_{\text{large order}}$ and Phy-Taylor$_{\text{small order}}$) notably reduce the model learning parameters (weights and bias) and further remarkably increase model accuracy simultaneously. 
\end{itemize}

\subsubsection{Self-Correcting Phy-Taylor} 
This experiment demonstrates the effectiveness of self-correcting Phy-Taylor in guaranteeing vehicle's safe driving in the environment shown in Figure \ref{env}. The architecture of self-correcting Phy-Taylor is presented in Figure \ref{capsp}. Its real-time input vector is 
${\mathbf{x}}(k) = [w(k); ~\mathrm{p}(k); ~\mathrm{y}(k); ~\psi(k); ~v_{\mathrm{p}}(k); ~v_{\mathrm{y}}(k); v_{\psi}(k)]$, 
where $w(k)$ is the average of four wheels' velocities. The mediate output $\mathbf{u}(k) = \left[\theta(k);~ \gamma(k) \right]$ denotes the vector of control commands, where $\theta(k) \in [-0.156, ~0.156]$ is the throttle command and $\gamma(k) \in [-0.6, ~0.6]$ is the steering command. The considered safety-metric vector in Equation \eqref{mkbzkm} is 
\begin{align}
\mathbf{s}({\mathbf{x}}(k),\mathbf{u}(k),\tau) = \sum\limits_{t = k + 1}^{k + \tau } {\left[ {{{\left( {{v_{\mathrm{p}}}(t) - \mathrm{v}} \right)}^2};~~{{\left( {{v_{\mathrm{p}}}(t) - r \cdot w(k)} \right)}^2}} \right]}  \in {\mathbb{R}^2}, \label{hho} 
\end{align}
\begin{wrapfigure}{r}{0.80\textwidth}
\vspace{-0.9cm}
  \begin{center}
    \includegraphics[width=0.80\textwidth]{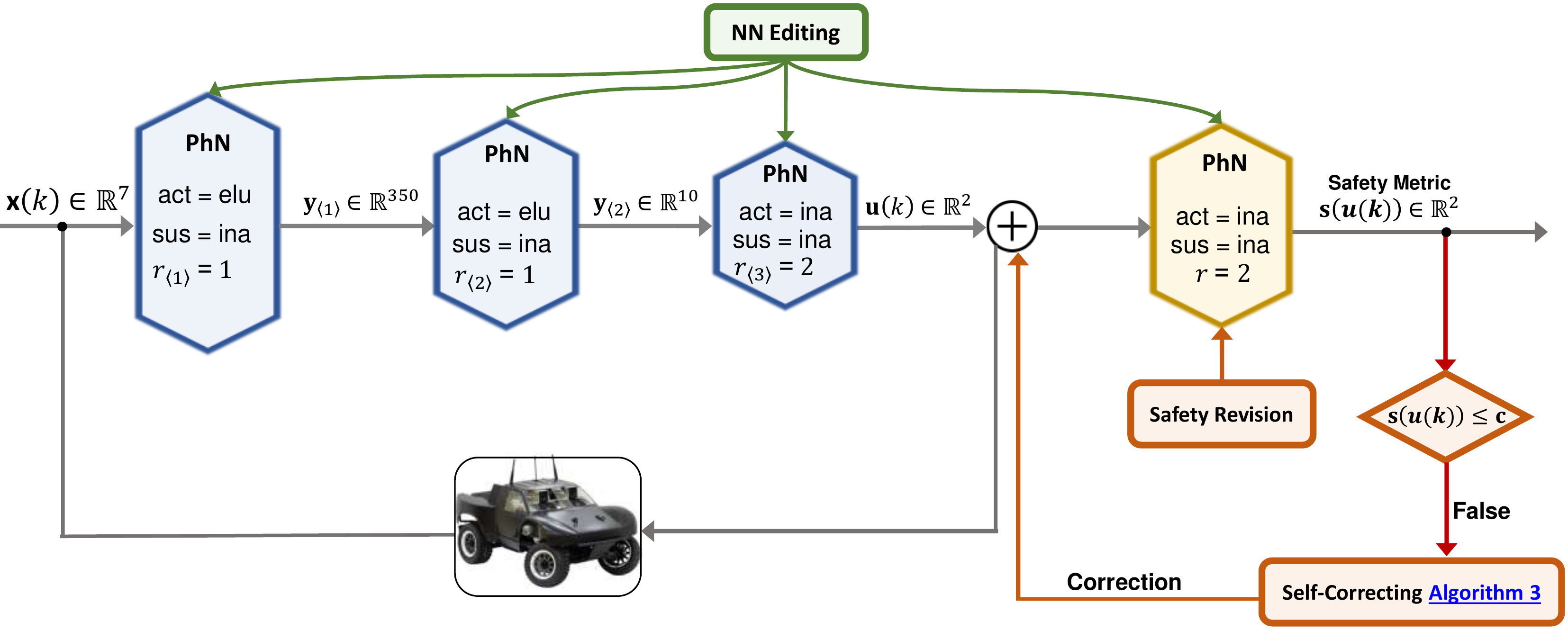}
  \end{center}
  \vspace{-0.0cm}
  \caption{Self-correcting Phy-Taylor for safe control of autonomous vehicle.}
  \vspace{-0.0cm}
  \label{capsp}
\end{wrapfigure}
where $\mathrm{v}$ and $r$ denote the reference of longitudinal velocity and the wheel radius, respectively. The safety metric \eqref{hho} together with the driving scenario in Figure \ref{env} indicate the objective of safe control command is to simultaneously steer vehicle's longitudinal velocity to reference $\mathrm{v}$, and constrain the slip (i.e., $(v_{\mathrm{x}}(t) - r \cdot w(k))^2$) to prevent slipping and sliding. The hyperparameters in the training loss function \eqref{tranloss} are set to $\alpha = \beta = 1$.

The testing results of trained model are presented in Figure \ref{inff} (a)--(d) (blue curves), which in conjunction with the ground truth (orange curves) demonstrate the trustfulness of trained model. We next output the learned safety relationships for off-line verification and necessary revision:
\begin{subequations}
\begin{align}
[{\bf{s}}({\bf{u}}(k))]_1 &=  0.00111007 + {\left[\! \begin{array}{l}
\theta(k)\\
\gamma(k)
\end{array} \!\right]^\top}   \left[ {\begin{array}{*{20}{c}}
{-0.04581441}&{0.00100625}\\
{0.00100625}&{0.00342825}
\end{array}} \right] \left[ \begin{array}{l}
\theta(k)\\
\gamma(k)
\end{array} \right], \label{choo1}\\
[{\bf{s}}({\bf{u}}(k))]_2 &=  0.14376973 - {\left[ \begin{array}{l}
\theta(k)\\
\gamma(k)
\end{array} \right]^\top}\left[ {\begin{array}{*{20}{c}}
{6.06750536}&{ 0.02701398}\\
{0.02701398}&{0.00601609}
\end{array}} \right]\left[\begin{array}{l}
\theta(k)\\
\gamma(k)
\end{array} \right].\label{choo2}
\end{align}\label{choo}
\end{subequations}
 
The safety metrics \eqref{hho} of ground truth are always non-negative. We thus need to verify that given the ranges of control commands (i.e., $\theta(k) \in [-0.156, 0.156]$, $\gamma(k) \in [-0.6, 0.6]$, $\forall k \in \mathbb{N}$), if both 
$[{\bf{s}}({\bf{u}}(k))]_1$ and $[{\bf{s}}({\bf{u}}(k))]_2$ in Equation \eqref{choo} can always be non-negative. If a violation occurs, we will make revisions to the relationships. We can verify from \eqref{choo} that the non-negativity constraint does not always hold, such as $\theta(k) = 0.156$ and $\gamma(k) = 0$. Therefore, the revision of safety relationship is needed before working on the self-correcting procedure. The regulated safety relationships (revisions are highlighted in \textcolor[rgb]{1.00,0.00,0.00}{red} color) are presented below. 
\begin{subequations}
\begin{align}
[{\bf{s}}({\bf{u}}(k))]_1 &=  \underbrace{0.00\textcolor[rgb]{1.00,0.00,0.00}{02}1007}_{{[\mathbf{b}]_1}} + {\left[ \begin{array}{l}
\theta \left( k \right)\\
\gamma \left( k \right)
\end{array} \right]^\top}   \underbrace{\left[ {\begin{array}{*{20}{c}}
{\textcolor[rgb]{1.00,0.00,0.00}{0.0018}1441}&{0.00100625}\\
{0.00100625}&{0.00342825}
\end{array}} \right]}_{{\triangleq {\bf{P}}_1}}    \left[ \begin{array}{l}
\theta \left( k \right)\\
\gamma \left( k \right)
\end{array} \right], \label{rchoo1}\\
[{\bf{s}}({\bf{u}}(k))]_2 &=  \underbrace{0.14376973}_{{[\mathbf{b}]_2}} - {\left[ \begin{array}{l}
\theta \left( k \right)\\
\gamma \left( k \right)
\end{array} \right]^\top}\underbrace{\left[ {\begin{array}{*{20}{c}}
{\textcolor[rgb]{1.00,0.00,0.00}{5.90769724}}&{ \textcolor[rgb]{1.00,0.00,0.00}{0.01201398}}\\
{\textcolor[rgb]{1.00,0.00,0.00}{0.01201398}}&{0.00601609}
\end{array}} \right]}_{\triangleq {\bf{P}}_2}\left[ \begin{array}{l}
\theta \left( k \right)\\
\gamma \left( k \right)
\end{array} \right],\label{rchoo2}
\end{align}\label{rchoo}
\end{subequations}
\!\!which satisfy $\left[ {\bf{s}}({\bf{u}}(k)) \right]_1 \ge 0$ and $\left[ {\bf{s}}({\bf{u}}(k)) \right]_2 \ge 0$, for any $\theta(k) \in [-0.156, 0.156]$ and $\gamma(k) \in [-0.6, 0.6]$, and can be demonstrated by the green curves in Figure \ref{inff} (c) and (d). 

%\footnotesize

\begin{algorithm} \footnotesize{ 
\caption{Self-Correcting Procedure for Safe Control Commands}  \label{ALG4}
\KwIn{Real-time control-command vector ${\bf{u}}(k) = [\theta(k);~\gamma(k)]$, safety bounds $[\mathbf{c}]_1$ and $[\mathbf{c}]_2$, and learned matrices $\mathbf{P}_1$ and $\mathbf{P}_2$ and bias $[\mathbf{b}]_1$ and $[\mathbf{b}]_2$ defined in Equation \eqref{rchoo}.}
Update original safety relationship with off-line verified and revised one: ${\bf{s}}(\bf{u}(k)) \leftarrow \eqref{rchoo}$; \label{ALG4-1}\\
\eIf{$[{\bf{s}}({\bf{u}}(k))]_1 > [\mathbf{c}]_1$ or $[{\bf{s}}({\bf{u}}(k))]_2 > [\mathbf{c}]_2$ \label{ALG4-2}}
{\eIf{$[{\bf{s}}({\bf{u}}(k))]_{i} \ge [\mathbf{c}]_i$,  $i \in \{1,2\}$ \label{ALG4-3}}
{Update safety metric: $[\widehat{\mathbf{c}}]_{i} \leftarrow [\mathbf{c}]_i, i \in \{1,2\}$; \label{ALG4-4}}
{Maintain safety metric: $[\widehat{\mathbf{c}}]_{i} \leftarrow [{\bf{s}}({\bf{u}}(k))]_{i}, i \in \{1,2\}$; \label{ALG4-6}}
Compute orthogonal matrix $\mathbf{P}_1$ and eigenvalues $\lambda_1$ and $\lambda_1$ according to \eqref{rch1}; \label{ALG4-8}\\
Compute matrix: $\mathbf{S} \leftarrow \mathbf{Q}_1 \cdot \mathbf{P}_2  \cdot \mathbf{Q}_1$; \label{ALG4-9}\\
Compute $\widehat{\theta}(k)$ and $\widehat{\gamma}(k)$ according to \eqref{rch43}; \label{ALG4-10}\\
Correct real-time control commands: \begin{align}
\hspace{-1cm}\theta(k) \leftarrow \mathop {\arg \min }\limits_{\left\{ {\widehat \theta(k), - \widehat \theta(k)} \right\}} \left\{ {|{\theta(k) - \widehat \theta (k)}|,| {\theta(k) + \widehat \theta(k)}|} \right\}, ~~\gamma(k) \leftarrow \mathop {\arg \min }\limits_{\left\{ {\widehat \gamma(k), - \widehat \gamma(k)} \right\}} \left\{ {| {\gamma(k) - \widehat \gamma(k)}|, | {\gamma(k) + \widehat \gamma(k)}|} \right\}.\nonumber
\end{align}\label{ALG4-12}}
{Maintain real-time control commands: $\theta(k) \leftarrow \theta(k)$ and $\gamma(k) \leftarrow \gamma(k)$.}}
\end{algorithm}

\begin{figure*}
\centering
\vspace{-0.10cm}
\subfigure{\includegraphics[scale=0.35]{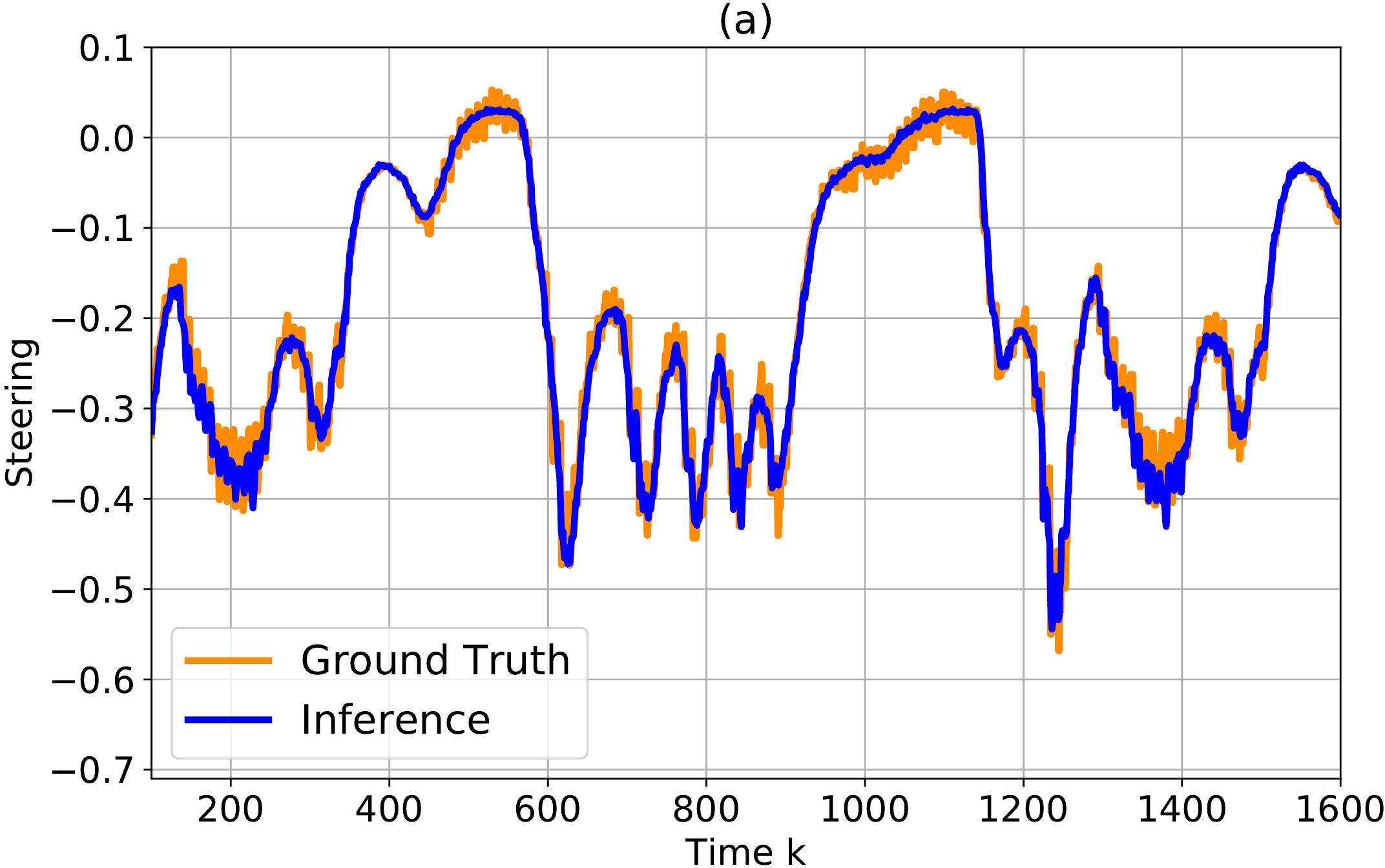}}
\subfigure{\includegraphics[scale=0.350]{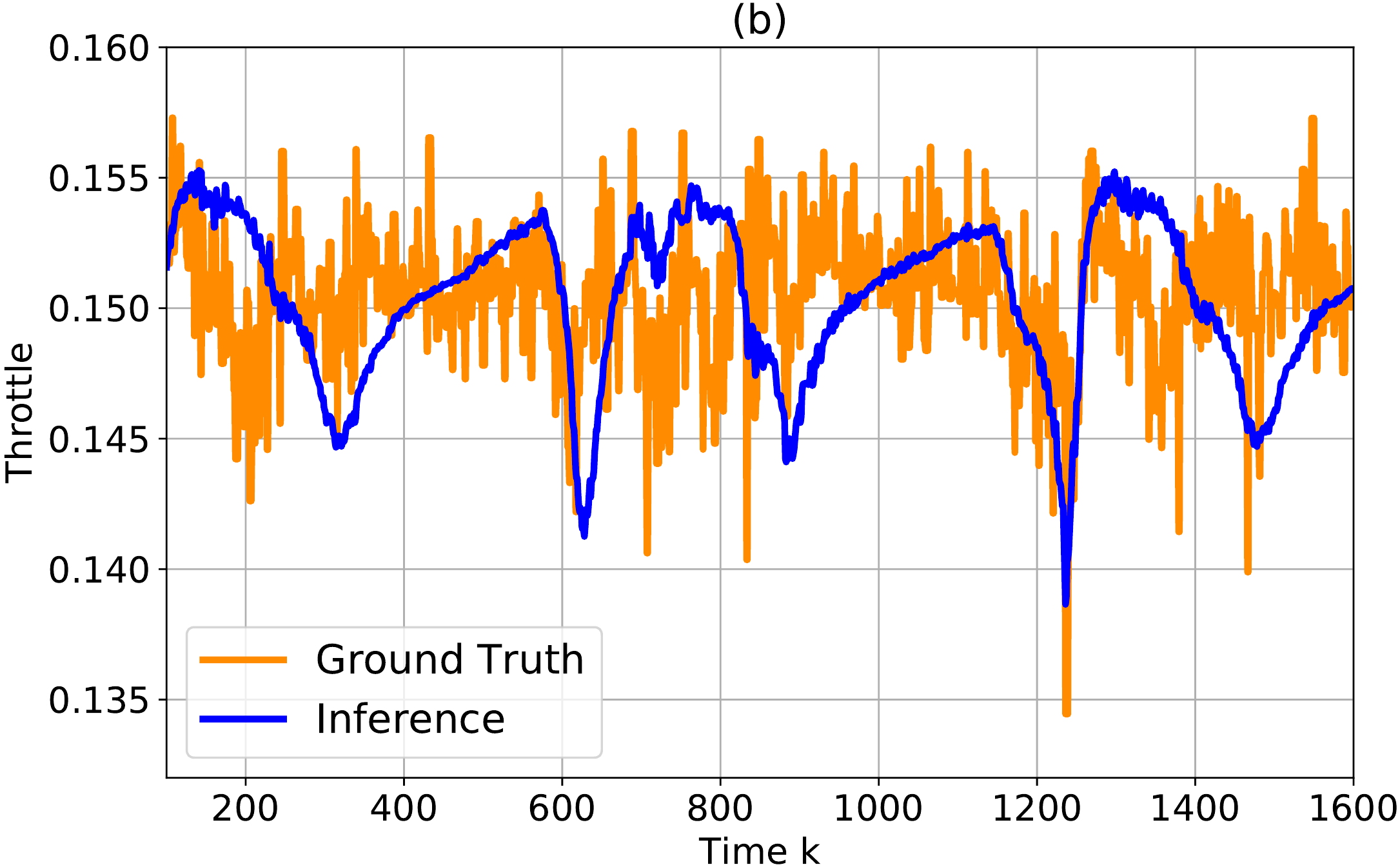}}
\vspace{-0.10cm}
\subfigure{\includegraphics[scale=0.35]{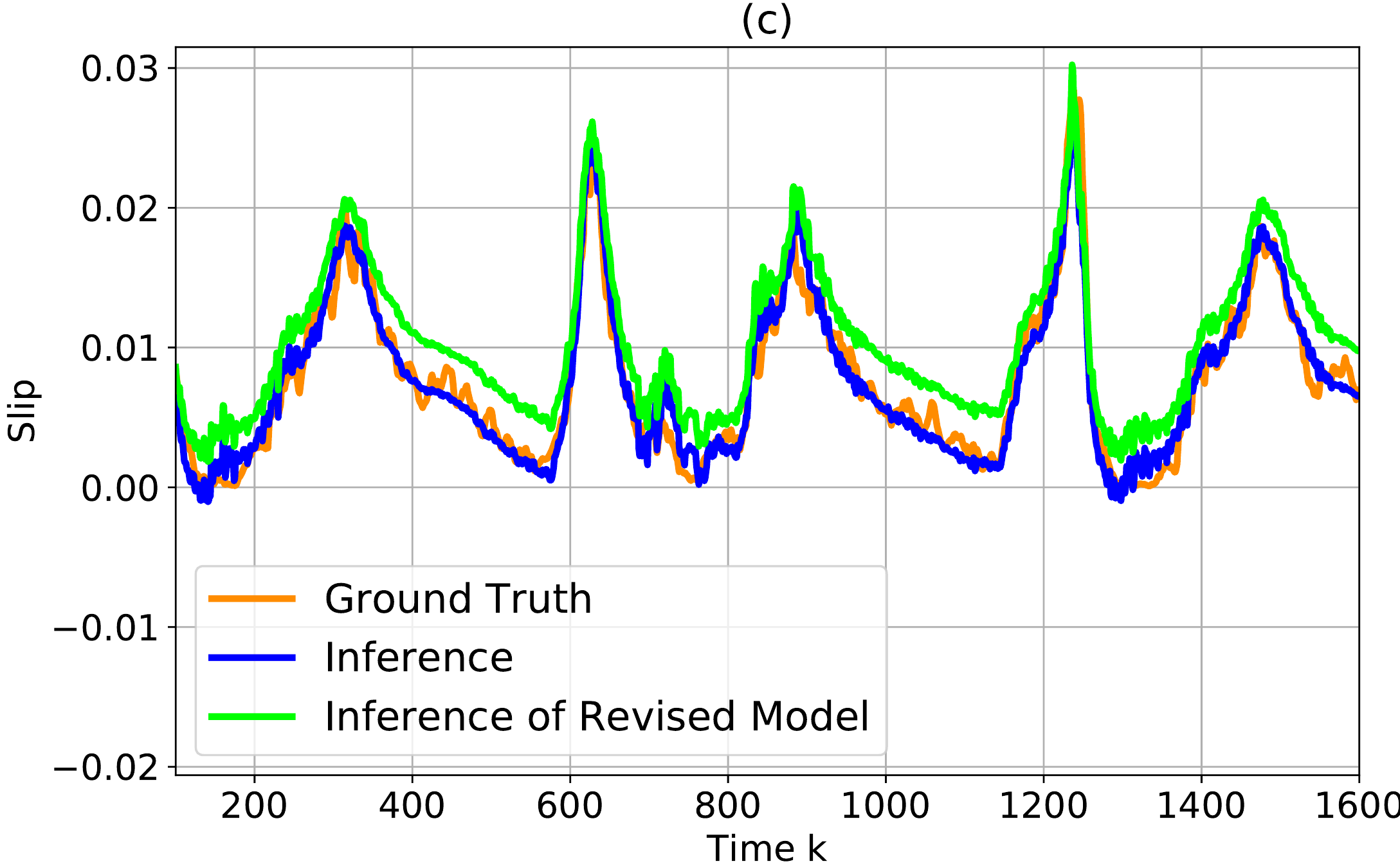}}
\subfigure{\includegraphics[scale=0.35]{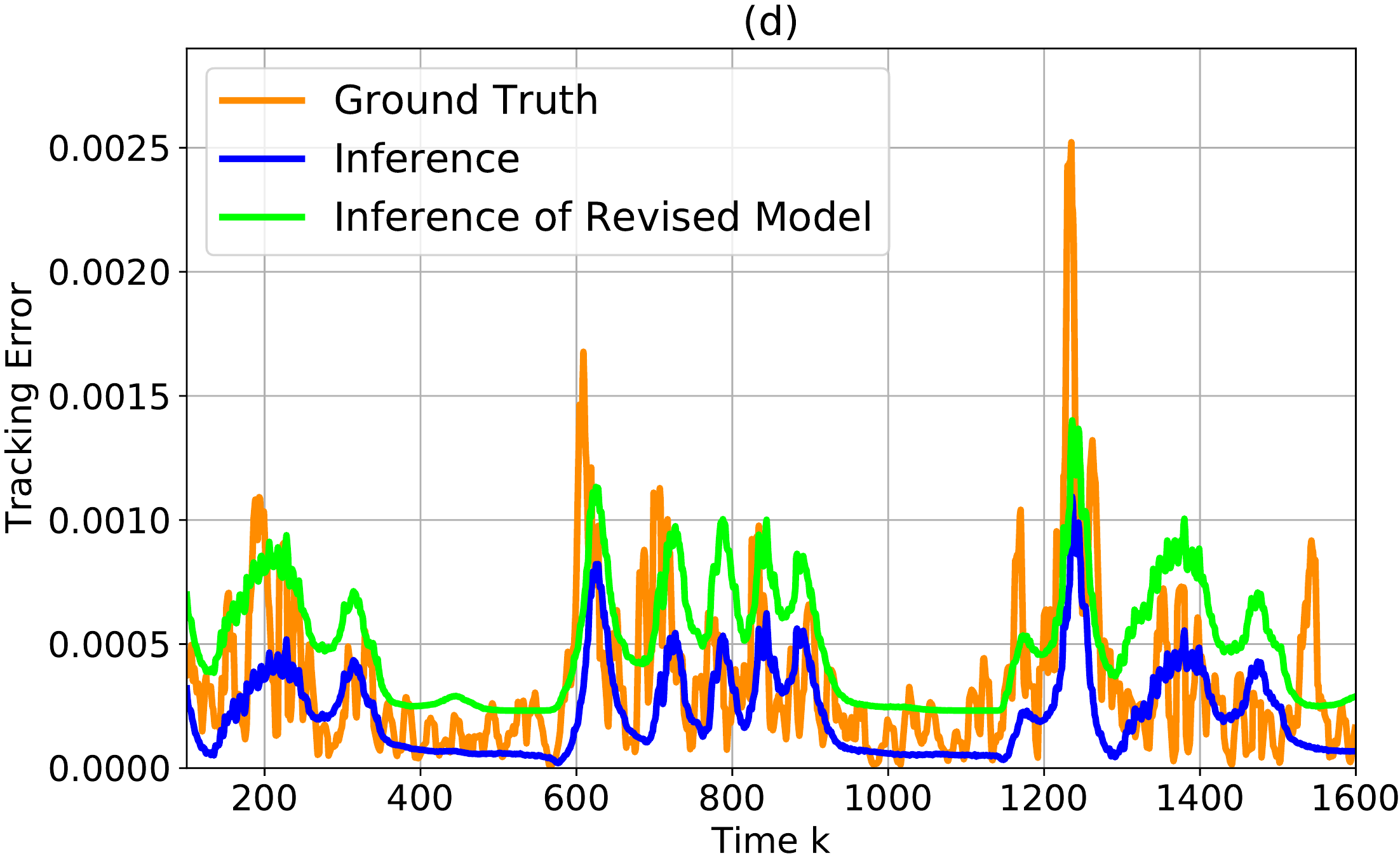}}
\vspace{-0.10cm}
\subfigure{\includegraphics[scale=0.245]{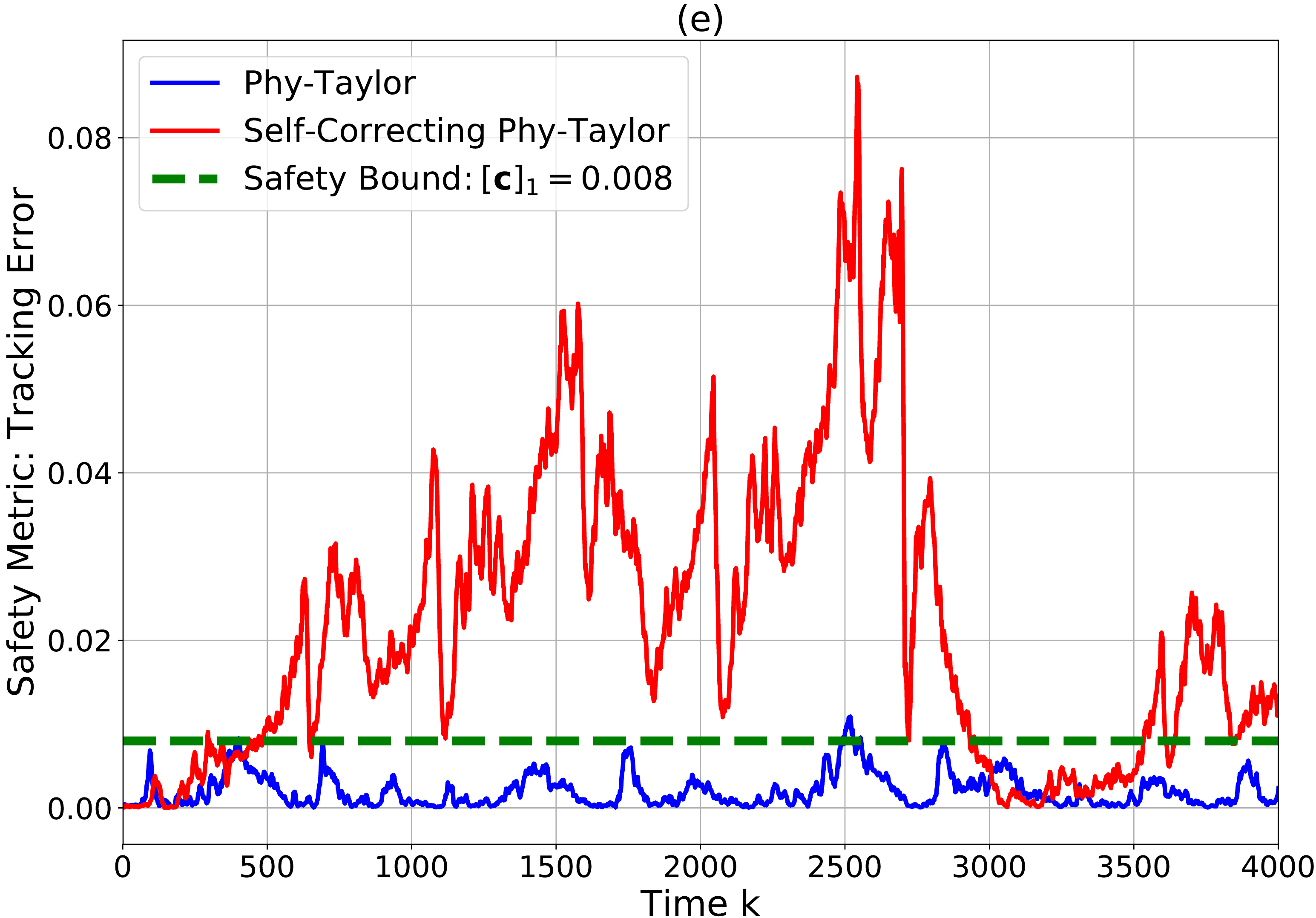}}
\subfigure{\includegraphics[scale=0.245]{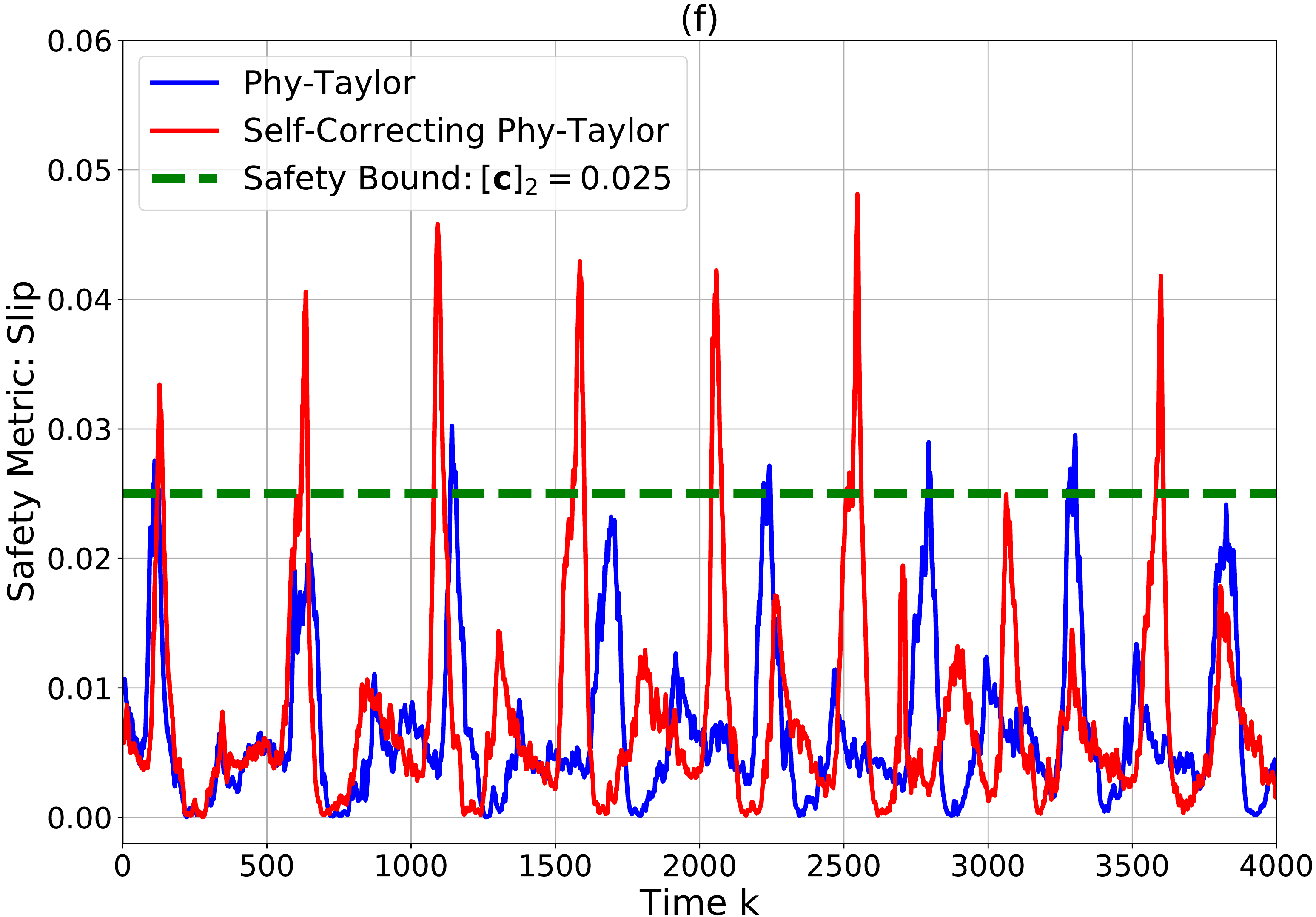}}
\vspace{-0.10cm}
\subfigure{\includegraphics[scale=0.245]{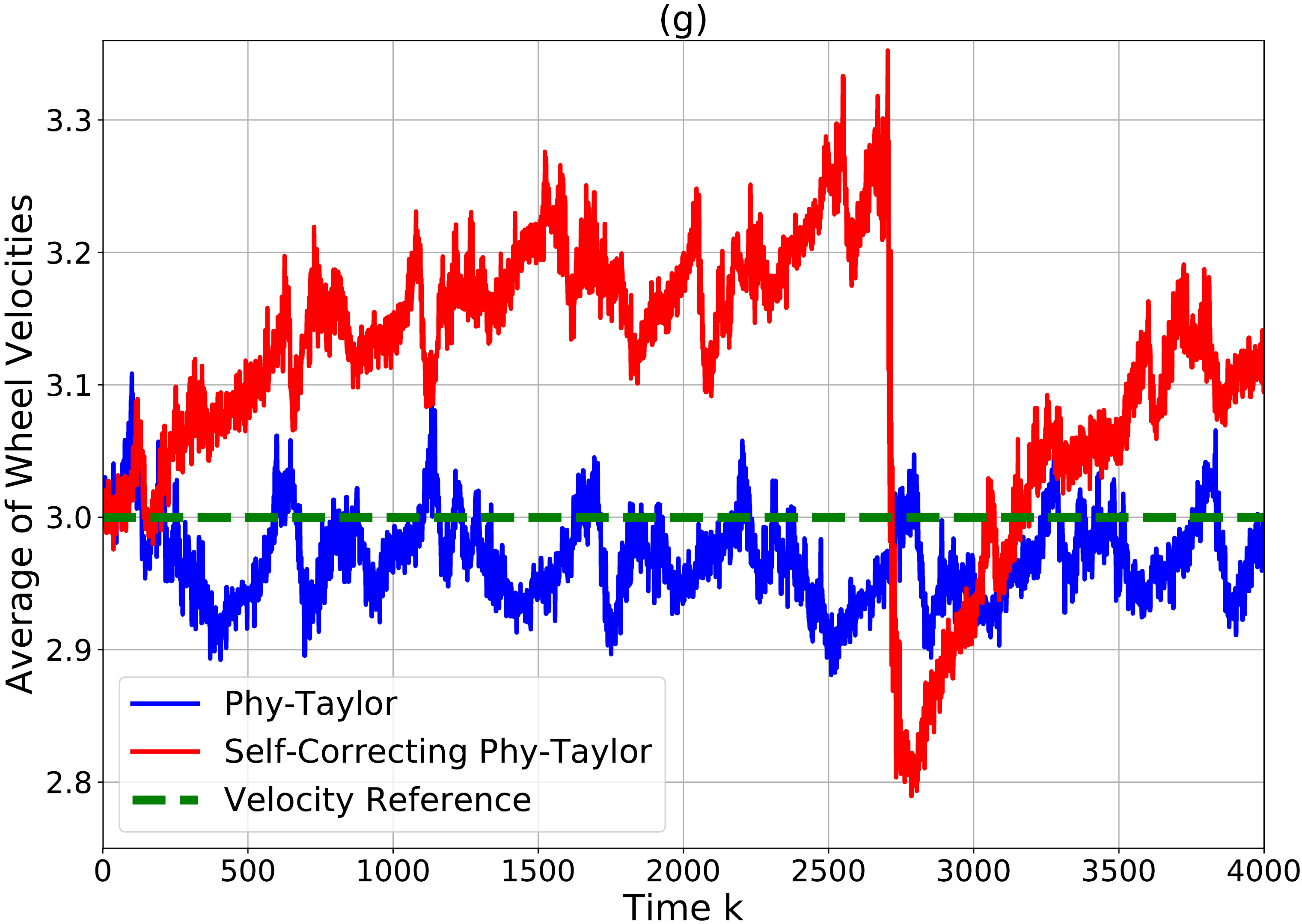}}
\subfigure{\includegraphics[scale=0.245]{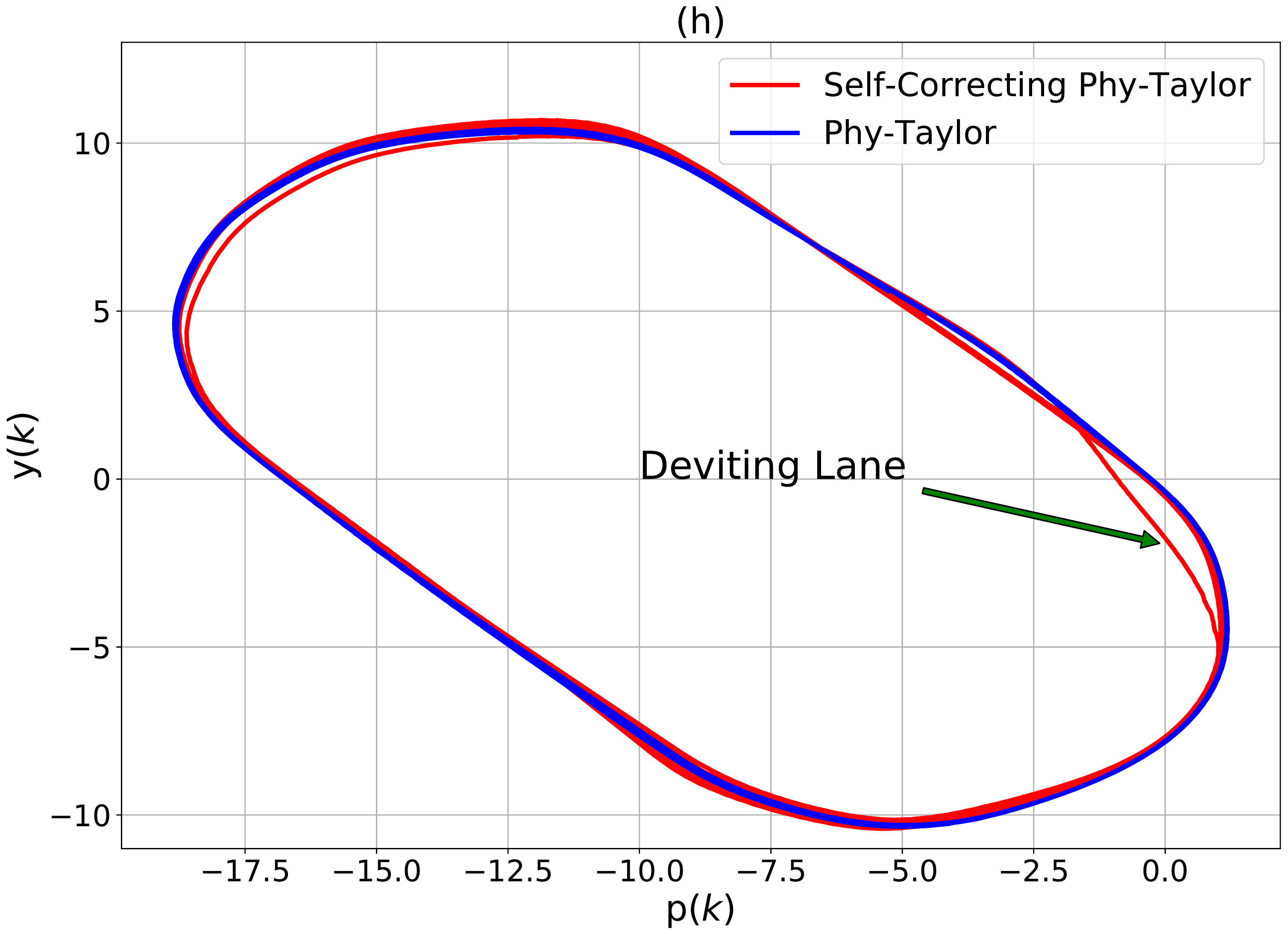}}
\caption{(a)--(b): Ground truth and inference of control commands. (c)--(d): Ground truth, original inference and inference (according to the revised safety relationships) of safety metrics (tracking error ${( {{v_{\mathrm{p}}}(t) - \mathrm{v}})}^2$ and wheel slip $(v_{\mathrm{p}}(t) - r \cdot w(k))^2$).  {(e)}: Safety metric: tracking error. {(f)}: Safety metric: wheel slip. (g): Average of wheel velocities. (h): Vehicle's driving curve (phase plot).}
\label{inff}
\end{figure*}

With the revised safety relationships \eqref{rchoo} at hand, we are ready to develop the self-correcting procedure. Considering the two matrices ${\bf{P}}_1$ and ${\bf{P}}_2$ defined in Equation \eqref{rchoo} are symmetric, we have  
\begin{align}
{\bf{P}}_1 &=  \underbrace{\left[ {\begin{array}{*{20}{c}}
  { - 0.934}&{ - 0.3572} \\
  { - 0.3572}&{0.934}
\end{array}} \right]}_{\triangleq {\bf{Q}}_1} \left[ {\begin{array}{*{20}{c}}
  {\underbrace{0.0008}_{\triangleq {\lambda}_{1}}}&0 \\
  0&{\underbrace{0.0038}_{\triangleq {\lambda}_{2}}}
\end{array}} \right] \underbrace{\left[ {\begin{array}{*{20}{c}}
  { - 0.934}&{ - 0.3572} \\
  { - 0.3572}&{0.934}
\end{array}} \right]}_{= {\bf{Q}}^{\top}_1}, \label{rch1}\\
{\bf{P}}_2 &= {\bf{Q}}_1 \cdot {\bf{Q}}_1 \cdot {\bf{P}}_2  \cdot {\bf{Q}}_1 \cdot {\bf{Q}}_1, \label{rch2}
\end{align}
based on which, we further define:
\begin{align}
\widehat{\mathbf{u}}(k) \triangleq {\bf{Q}}_1 \left[ \begin{array}{l}
\theta \left( k \right)\\
\gamma \left( k \right)
\end{array} \right], ~~~~~~~~~~\bf{S} \triangleq {\bf{Q}}_1 \cdot {\bf{P}}_2  \cdot {\bf{Q}}_1 = \left[ {\begin{array}{*{20}{c}}
{{s_{11}}}&{{s_{12}}}\\
{{s_{12}}}&{{s_{22}}}
\end{array}} \right]. \label{rch3}
\end{align}
We let $[\widehat{\mathbf{c}}]_1$ and $[\widehat{\mathbf{c}}]_2$ denote the two assigned safety metrics. According to the derivations appearing in Supplementary Information \ref{SI10}, the control commands included in the safety formulas $[{\bf{s}}({\bf{u}}(k))]_1 = [\widehat{\mathbf{c}}]_1$ and $\left[ {\bf{s}}({\bf{u}}(k)) \right]_2 = [\widehat{\mathbf{c}}]_2$ are obtained as 
\begin{align}
\left[ \begin{array}{l}
 \pm \widehat{\theta}(k)\\
 \pm \widehat{\gamma}(k)
\end{array} \right] \triangleq  {\mathbf{Q}_1}\left[ \begin{array}{l}
 \pm \sqrt {\frac{{{[\widehat{\mathbf{c}}]_1} - {[\mathbf{b}]_1}}}{{{\lambda _1}}} - \frac{{{\lambda _2}}}{{{\lambda _1}}}\frac{{\sqrt {\varpi _2^2 - 4{\varpi _1}{\varpi _3}}  - {\varpi _2}}}{{2{\varpi _1}}}} \\
 \pm \sqrt {\frac{{\sqrt {\varpi _2^2 - 4{\varpi _1}{\varpi _3}}  - {\varpi _2}}}{{2{\varpi _1}}}}
\end{array} \right],  \label{rch43}
\end{align}
where 
\begin{align}
{\varpi _1} &\triangleq {\left( {\frac{{{\lambda _2}}}{{{\lambda _1}}}} \right)^2}s_{11}^2 + s_{22}^2 + \frac{{\left( {4s_{12}^2 - 2{s_{11}}{s_{22}}} \right){\lambda _2}}}{{{\lambda _1}}}, \label{pko1}\\
{\varpi _2} &\triangleq \frac{{2( {{[\widehat{\mathbf{c}}]_1} - {[\mathbf{b}]_1}}){s_{11}}{s_{22}} - 4( {{[\widehat{\mathbf{c}}]_1} - {[\mathbf{b}]_1}})s_{12}^2 + 2{\lambda _2}( {{[\mathbf{b}]_2} - {[\widehat{\mathbf{c}}]_2}}){s_{11}}}}{{{\lambda _1}}} \nonumber\\
&\hspace{7.5cm} - \frac{{2( {{[\widehat{\mathbf{c}}]_1} - {[\mathbf{b}]_1}}){\lambda _2}s_{11}^2}}{{\lambda _1^2}} - 2( {{[\mathbf{b}]_2} - {[\widehat{\mathbf{c}}]_2}}){s_{22}},\label{pko2}\\
{\varpi _3} &\triangleq {\left( {{[\mathbf{b}]_2} - {[\widehat{\mathbf{c}}]_2}} \right)^2} + {\left( {\frac{{{[\widehat{\mathbf{c}}]_1} - {[\mathbf{b}]_1}}}{{{\lambda _1}}}} \right)^2}s_{11}^2 - \frac{{2\left( {{[\widehat{\mathbf{c}}]_1} - {[\mathbf{b}]_1}} \right)\left( {{[\mathbf{b}]_2} - {[\widehat{\mathbf{c}}]_2}} \right){s_{11}}}}{{{\lambda _1}}}.\label{pko3}
\end{align}

The solution \eqref{rch43} has paved the way to delivering the self-correcting procedure: Algorithm \ref{ALG4}. The algorithm can be summarized as if the real-time safety metric $[{\bf{s}}({\bf{u}}(k))]_1$
or $[{\bf{s}}({\bf{u}}(k))]_2$ is larger than the corresponding safety bound $[\mathbf{c}]_1$ or $[\mathbf{c}]_2$, the real-time safety metric will be updated with the corresponding safety bound (indicated by Line \ref{ALG4-4} of Algorithm \ref{ALG4}). The corrected control commands are then computed according to \eqref{rch43} (see Lines \ref{ALG4-8}-\ref{ALG4-10}). The solutions however are not unique. To address the problem, the Line \ref{ALG4-12} of Algorithm \ref{ALG4} picks up the control commands that are most close to current ones.

Under the control of Phy-Taylor, with and without the self-correcting procedure, the system performances are presented in Figure \ref{inff} (e)--(h). We can observe from the figure that the self-correcting procedure can significantly enhance the safe assurance of velocity regulation. The demonstration video of implementing the self-correcting Phy-Taylor in AutoRally is available at \url{https://ymao578.github.io/pubs/taylorcontrol2.mp4}.

\subsection{Coupled Pendulums} \label{expe}
\begin{wrapfigure}{r}{0.30\textwidth}
\vspace{-0.9cm}
  \begin{center}
    \includegraphics[width=0.30\textwidth]{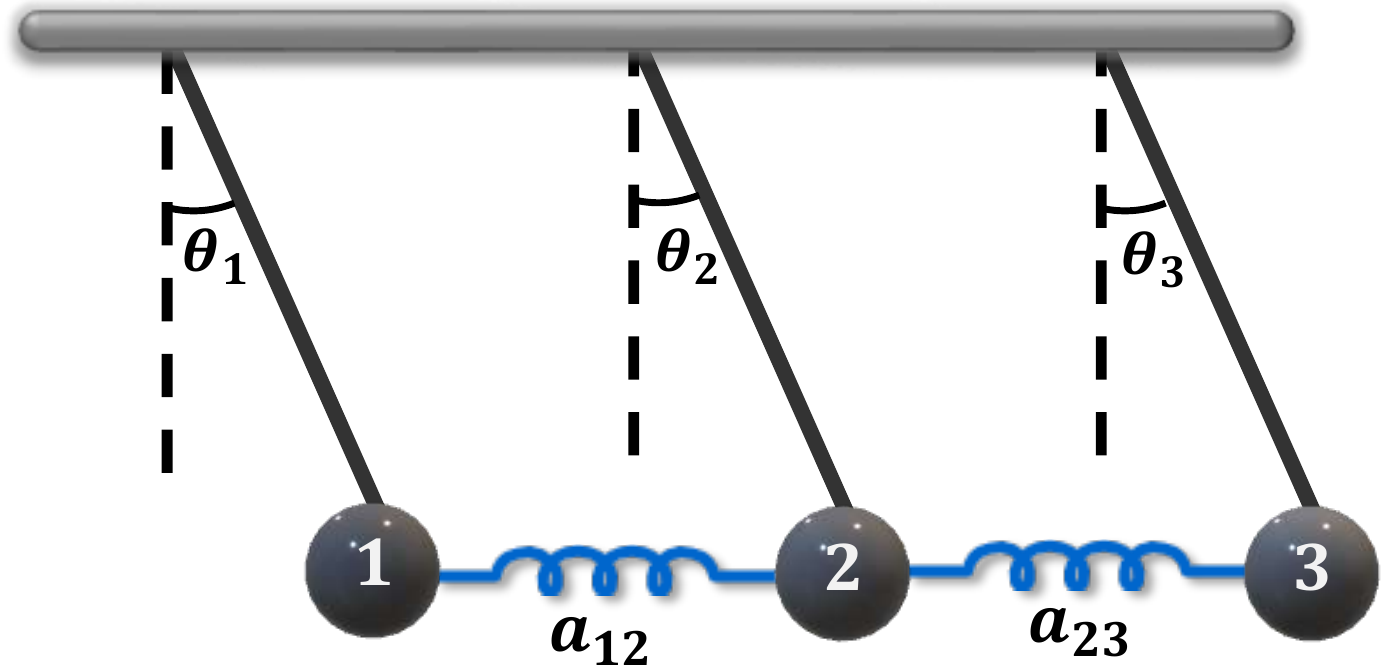}
  \end{center}
  \vspace{-0.3cm}
  \caption{Mechanical analog of three coupled pendulums.}
  \vspace{-0.3cm}
  \label{df}
\end{wrapfigure}
The second experiment demonstrates the proposed approach to learn the dynamics of three coupled pendulums, whose mechanical analog is shown in Figure \ref{df}. Each pendulum is characterized by its phase angle $\theta_i$ and velocity $\dot \theta _i$. The physical topology is described by $a_{12} = a_{21} = a_{23} = a_{32} = 1$ and $a_{13} = a_{31} = 0$. We let ${\upsilon _i} \triangleq {\dot \theta _i}$. Referring to Figure \ref{df}, the available physical knowledge for NN editing are summarized in Table \ref{tab11}.  

\begin{table} \scriptsize{
\caption{Models with Different Degrees of Embedded Physical Knowledge}
\centering
\begin{tabular}{l cccc c c}
\toprule
 & \multicolumn{4}{c}{Available Physical Knowledge}  \\
\cmidrule(lr){2-5}
Model ID     & \makecell{Physics Law: \\ $\upsilon = \dot \theta$}    & \makecell{Sampling Period: \\$T$}  & \makecell{Coupling Topology: \\$1 \leftrightsquigarrow 2 \leftrightsquigarrow 3$} & \makecell{Force Dependency}  & Training Loss  & \makecell{Out-of-Distribution:\\Prediction Error $e$}\\
\midrule
Phy-Taylor-1         &  $\surd$   &    $\surd$       &    $\surd$      &     $\surd$    &     $1.75146\!\cdot\!10^{-6}$   &     $0.06486147$\\
Phy-Taylor-2       &  $\surd$   &    $\times$     &     $\surd$     &     $\times$   &         $2.42426\!\cdot\!10^{-6}$   &      $1.46647886$\\
FDNN         &  $\times$  &    $\times$      &    $\times$    &     $\times$   &     $6.63564\cdot10^{-7}$   &      $3.65772883$\\
\bottomrule
\end{tabular}
\label{tab11}}
\end{table}

\begin{figure*}[!t]
\centering
\subfigure{\includegraphics[scale=0.48]{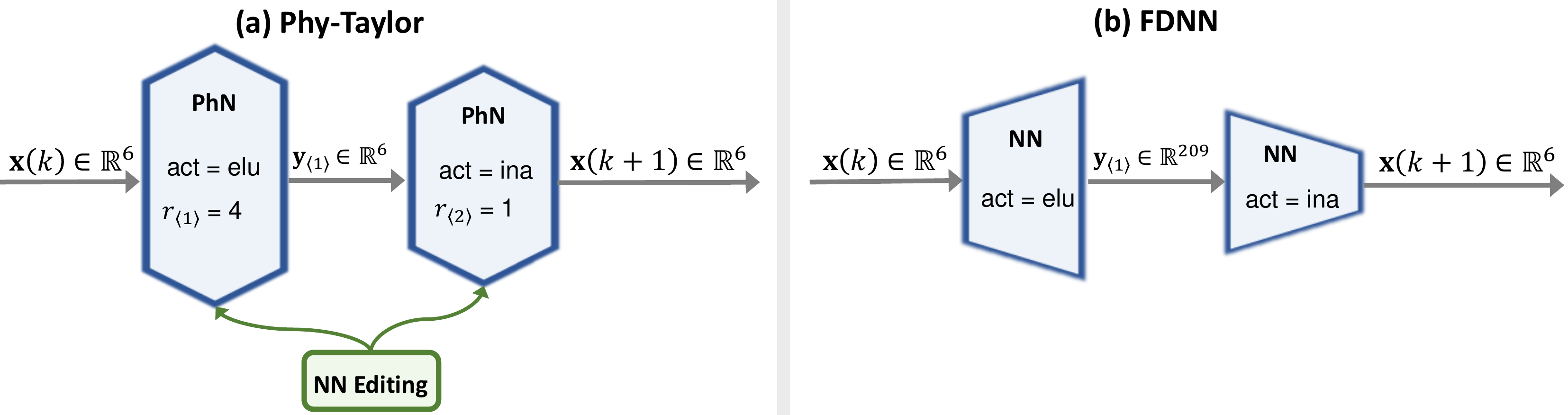}}
\subfigure{\includegraphics[scale=0.24]{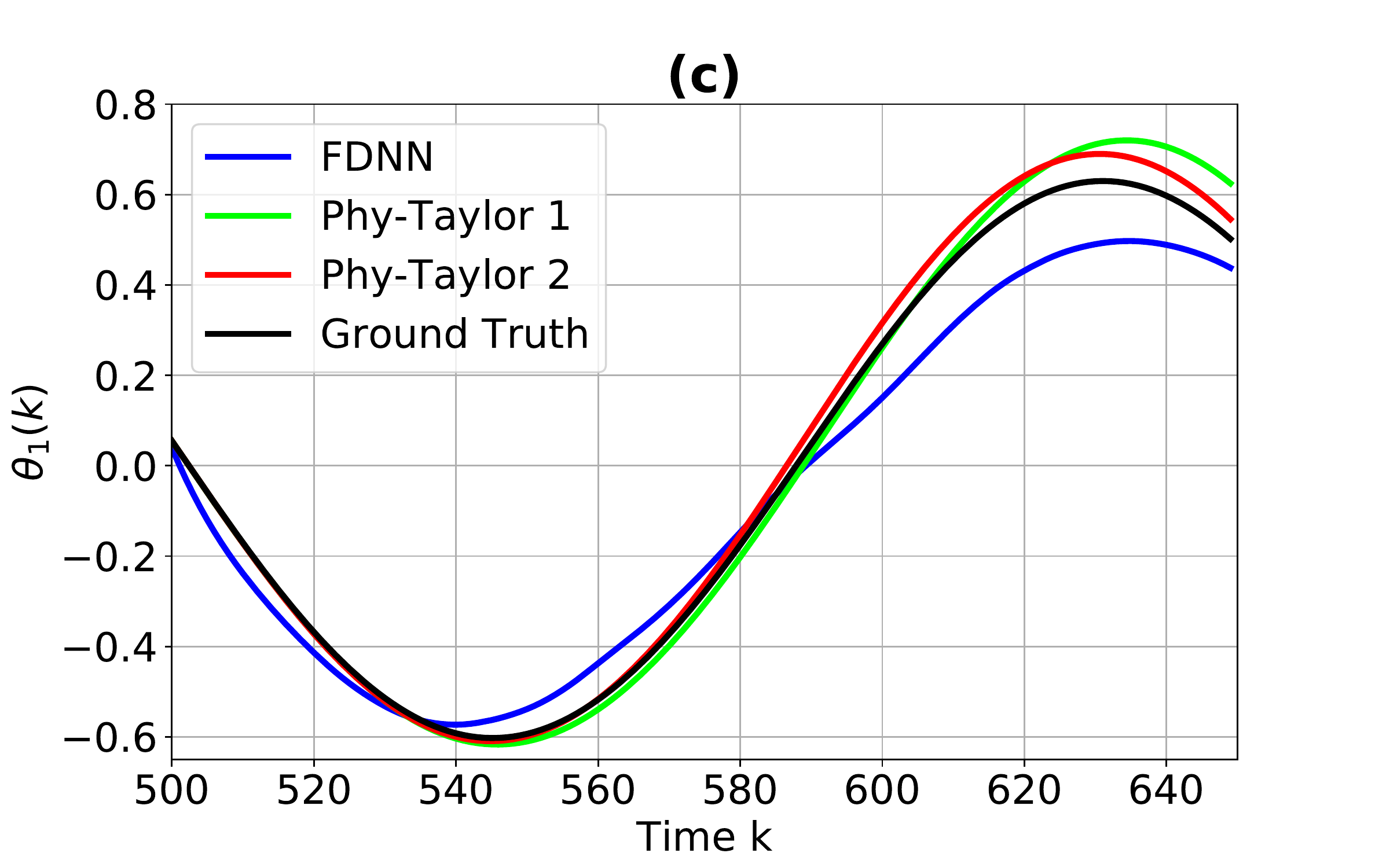}}
\subfigure{\includegraphics[scale=0.24]{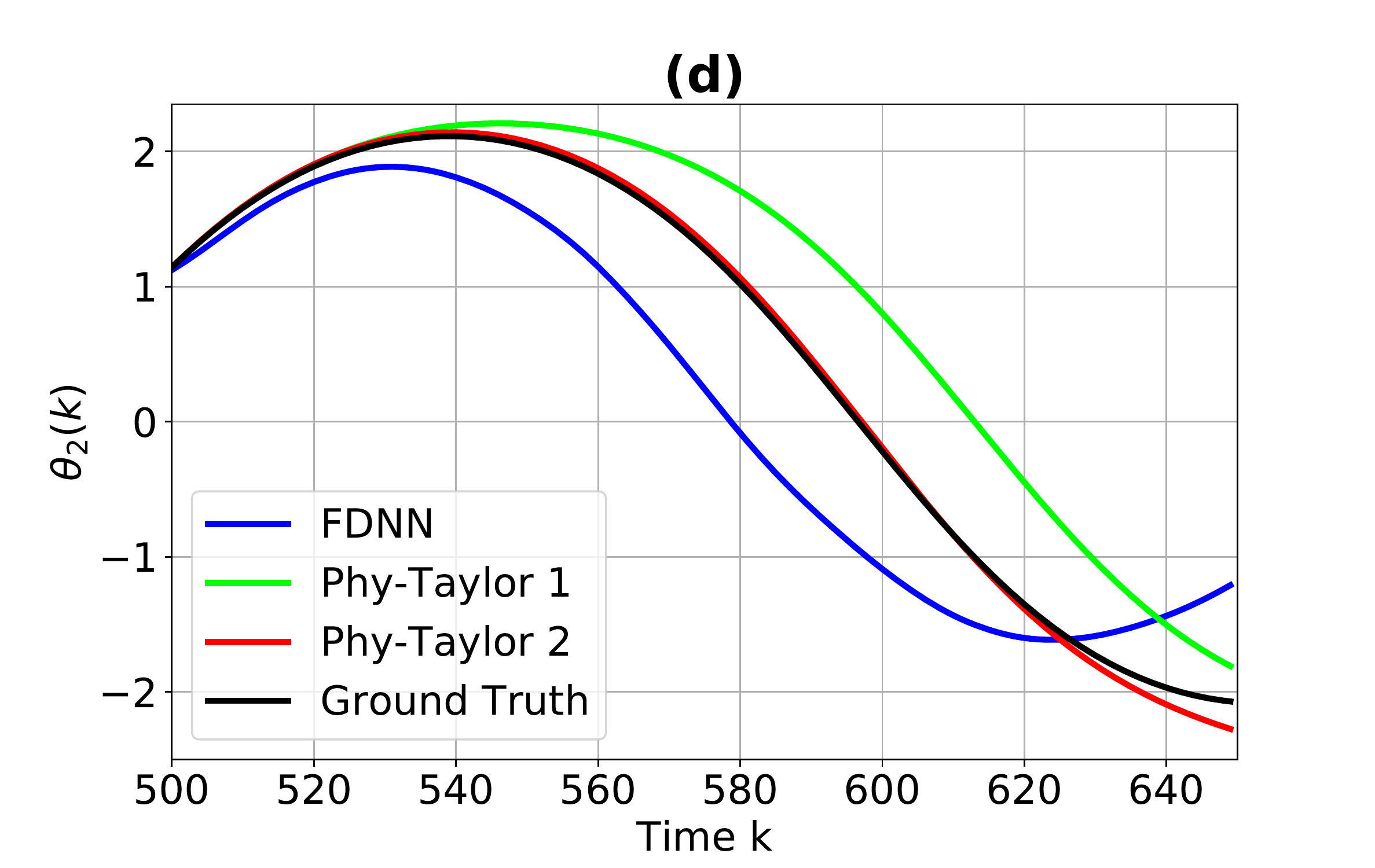}}
\subfigure{\includegraphics[scale=0.24]{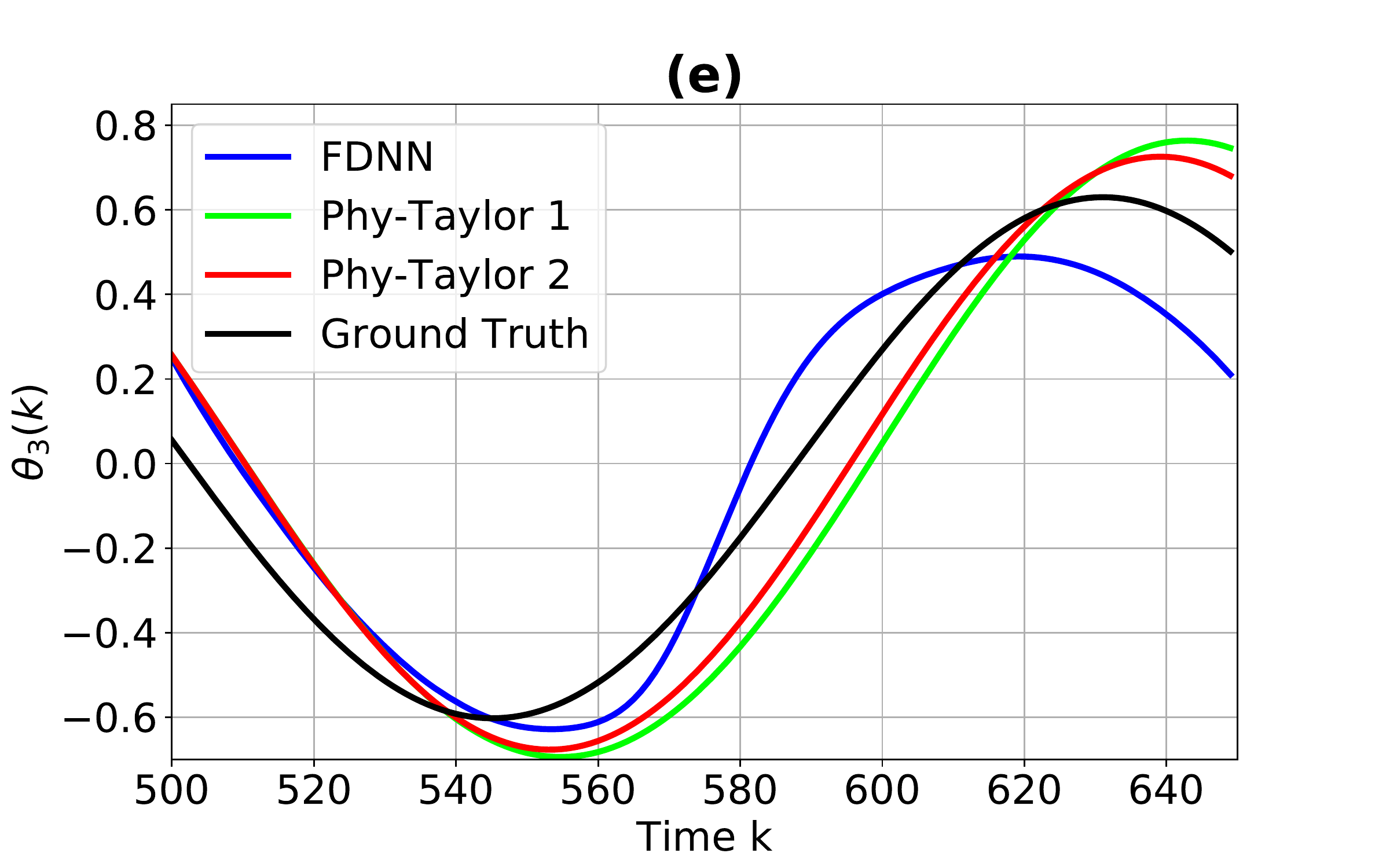}}
\subfigure{\includegraphics[scale=0.24]{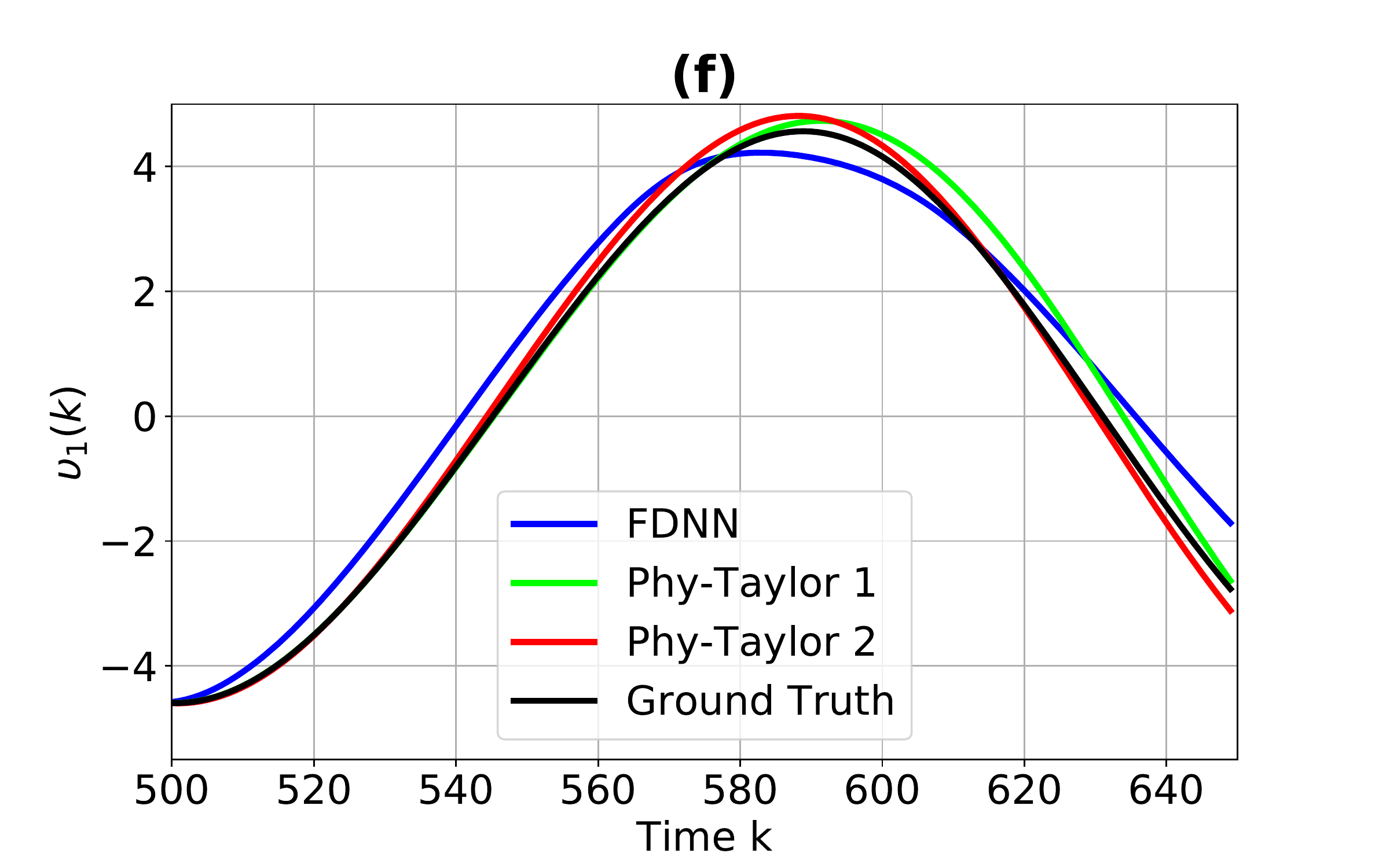}}
\subfigure{\includegraphics[scale=0.24]{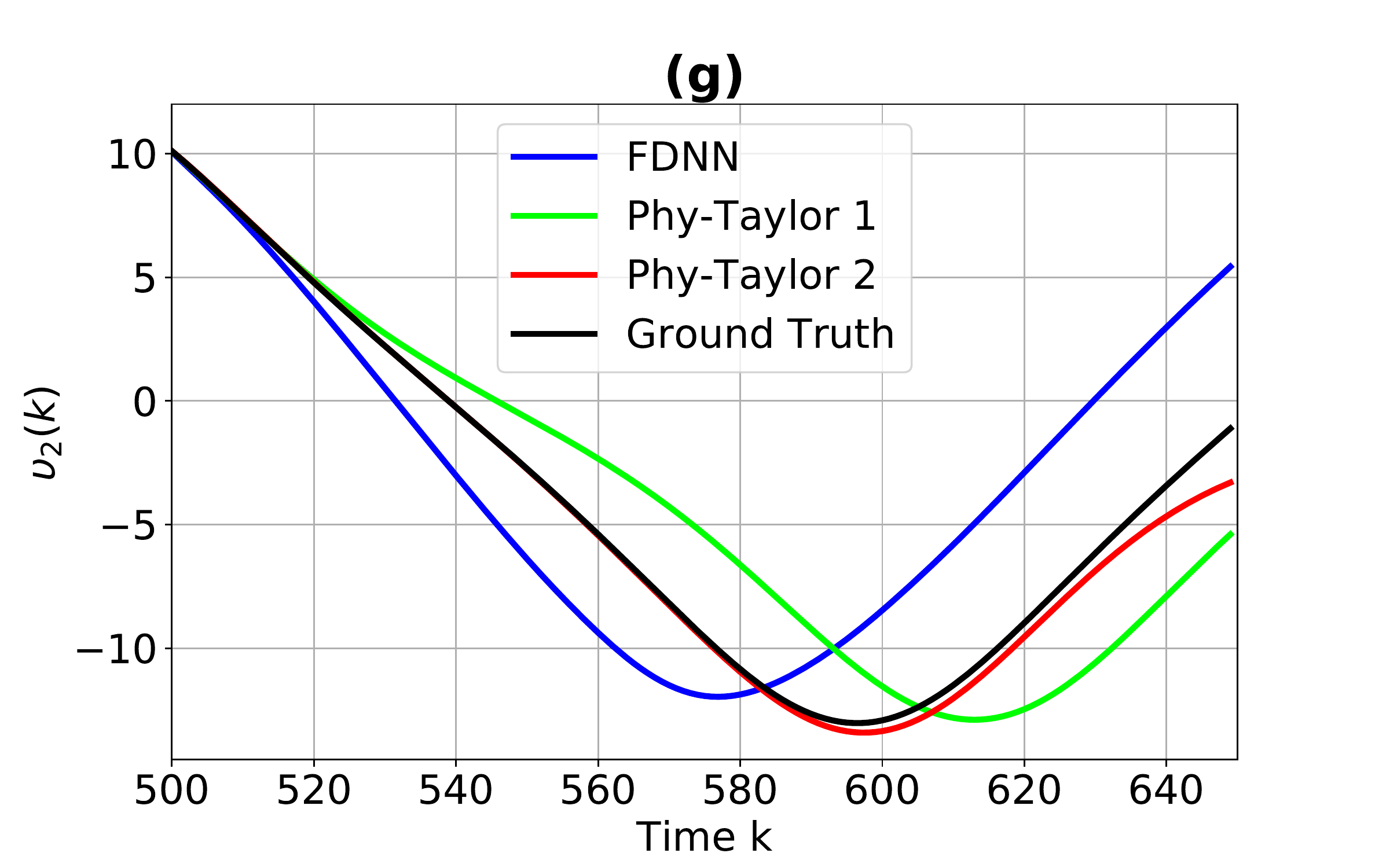}}
\subfigure{\includegraphics[scale=0.24]{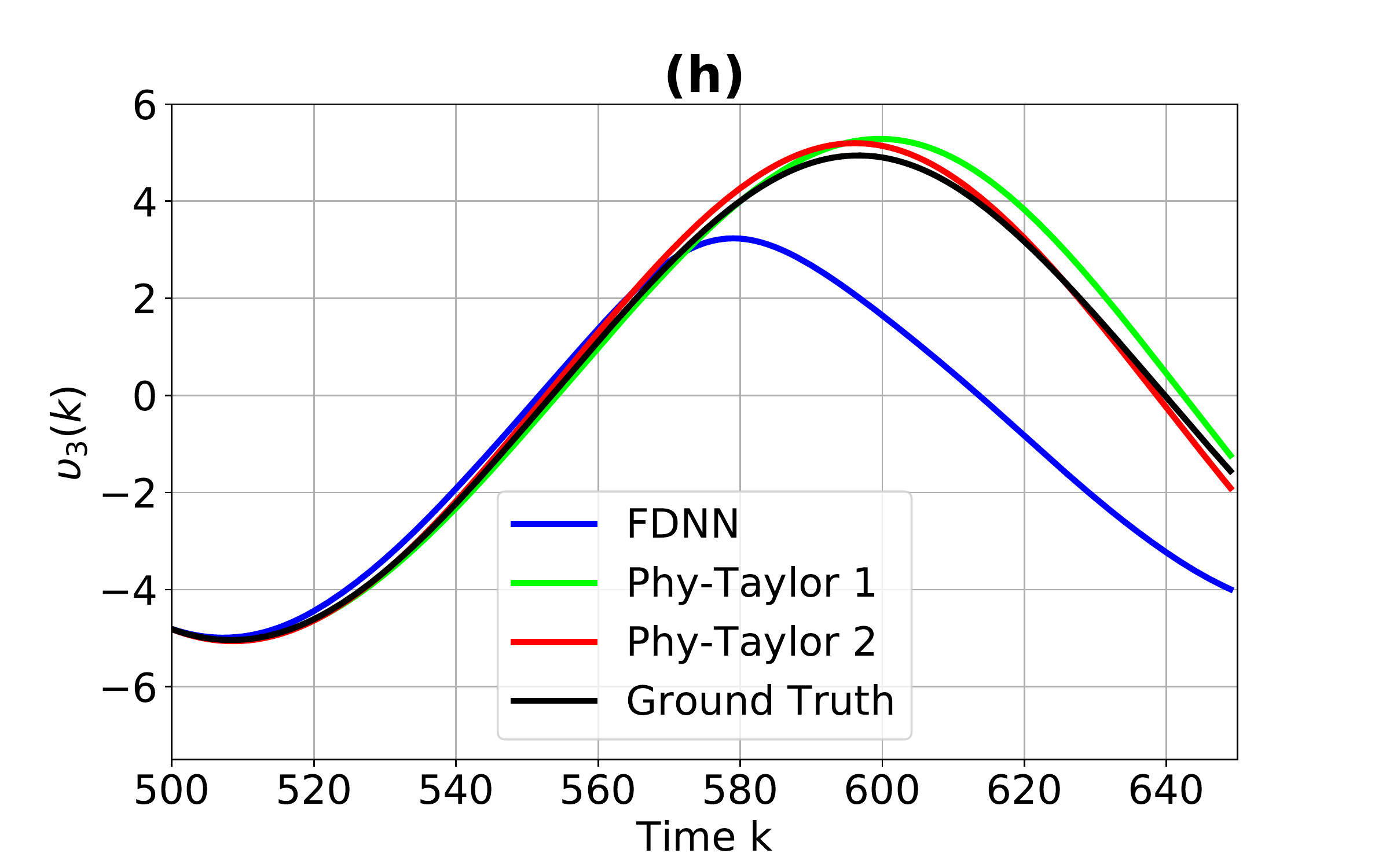}}
\caption{(i): Phy-Taylor architecture.  (ii): FDNN architecture. (a)--(f): Ground truth and predicted trajectories.}
\label{gfd}
\end{figure*}

We consider the dynamics learning via Phy-Taylor and fully-connected DNN (FDNN), whose architectures are shown in Figure \ref{gfd} (a) and (b). The inputs of both networks are identical as $\mathbf{x}(k)$ $=$ [${\theta _1}( {k})$; ${\theta _2}( {k})$; ${\theta _3}( {k})$; ${\upsilon _1}( {k})$; ${\upsilon _2}( {k})$; ${\upsilon _3}( {k})$]. For a fair comparison, we let each layer of FDNN has the same number of nodes as the Phy-Taylor (after augmentation), which suggests the network configuration in Figure \ref{gfd} (b).

We aim to demonstrate the robustness of Phy-Taylor owning to NN editing. To achieve this, we consider the testing data that is out-of-distribution. Specifically, the initial conditions of phase angles for generating training data are ${\theta _1}(0), {\theta _2}(0), {\theta _3}(0) \in [-1,1]$, while the initial conditions for generating testing data are outside the range: ${\theta _1}(0), {\theta _2}(0), {\theta _3}(0) \in [-1.5,-1)$. The training loss (mean square error) of the considered models with different degrees of embedded physical knowledge are summarized in Table \ref{tab11}. To review the robustness of trained models, we consider the prediction of long-horizon trajectory, given the same initial input. The prediction error is measured by $e = \frac{1}{6}\sum\limits_{i = 1}^3 {\frac{1}{\tau }\left( {\sum\limits_{t = k + 1}^{k + \tau } {\left( {{{\widehat \theta }_i}(t) - {\theta _i}(t)} \right)^2} + \sum\limits_{t = k + 1}^{k + \tau } {\left( {{{\widehat \upsilon }_i}(t) - {\upsilon _i}(t)} \right)^2} } \right)}$, where ${{\widehat \theta }_i}(t)$ and ${{\widehat \upsilon }_i}(t)$ denote the predicted angle and angular speed of $i$-th pendulum at time $t$, respectively. The prediction errors are summarized in Table \ref{tab11}, which, in conjunction with ground-truth and predicted trajectories in Figure \ref{gfd} (c)--(h), demonstrate that (i) the NN editing can significantly enhance the model robustness, and (ii) the more embedded physical knowledge can lead to stronger robustness, viewed from the perspective of long-horizon (out-of-distribution) predictions of trajectories.

\subsection{US Illinois Climate} \label{clima}
In final example, we consider a climate system without any available physical knowledge for NN editing, which degrades Phy-Taylor to deep PhN. The dataset is the hourly climate normals in Illinois state, including five stations\footnote{\tiny{\hspace{-0.0cm} NOAA: \hspace{-0.0cm} \url{https://www.ncdc.noaa.gov/cdo-web/datasets/NORMAL_HLY/locations/FIPS:17/detail}}}. The period of record is 01/01/2010--12/31/2010. The locations of considered stations are shown in Figure \ref{location}, where $S_{1}$-$S_{4}$ denote stations GHCND:USW00094846, GHCND:USW00094822, GHCND:USW00014842 and GHCND:USW00093822, respectively. The input of  deep PhN is
\begin{align}
\mathbf{x}(k) = \left[ {{x_1}(k);~{x_2}(k);~{x_3}(k);~{x_4}(k);~{{\dot x}_1}(k);~{{\dot x}_2}(k);~} \right. {{\dot x}_3}(k); ~{{\dot x}_4}(k);~{{\ddot x}_1}(k);~{{\ddot x}_2}(k);~{{\ddot x}_3}(k);~\left. {{{\ddot x}_4}(k)} \right], \nonumber
\end{align}
\begin{wrapfigure}{r}{0.285\textwidth}
\vspace{-0.8cm}
  \begin{center}
    \includegraphics[width=0.285\textwidth]{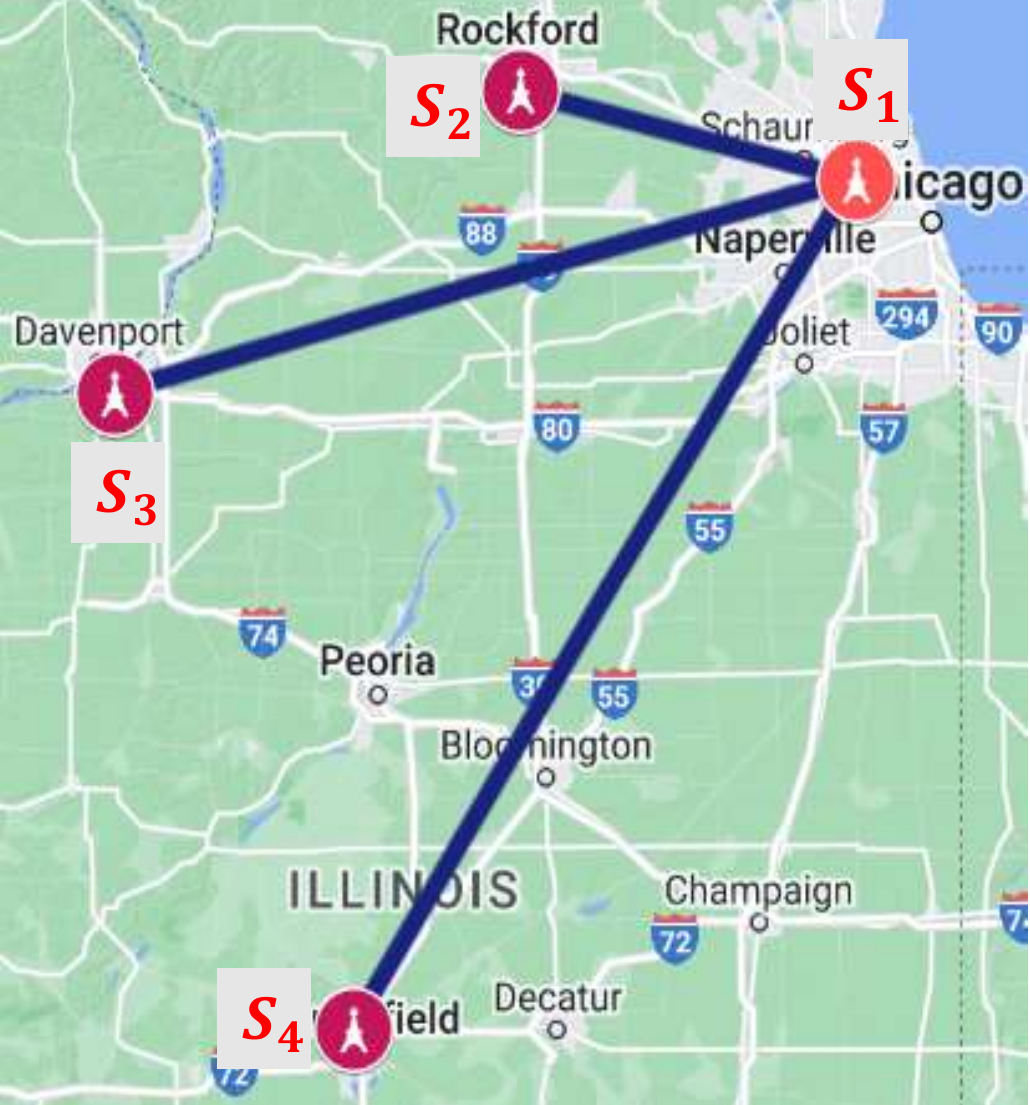}
  \end{center}
  \vspace{-0.75cm}
  \caption{Station locations.}
  \vspace{-0.0cm}
  \label{location}
\end{wrapfigure}
where ${{\dot x}_i}\left( k \right) = {{x}_i}\left( k \right) - {{x}_i}\left( k-1 \right)$ and ${{\ddot x}_i}\left( k \right) = {\dot{x}_i}\left( k \right) - {\dot{x}_i}\left( k-1 \right) =
({{x}_i}\left( k \right) - {{x}_i}\left( k-1 \right)) - ({{x}_i}\left( k-1 \right) - {{x}_i}\left( k-2 \right))$, $i = 1,2,3,4$, and the $x_{1}$, $x_{2}$, $x_{3}$ and $x_{4}$ denote the dew point mean, the heat index mean, the wind chill mean and the average wind speed, respectively. The training loss function is 
\begin{align}
\mathcal{L} = {\left\| {{[\widehat{\mathbf{y}}]_1} - {x_2}\left( {k + 1} \right)} \right\|} + {\left\| {{[\widehat{\mathbf{y}}]_2} - {x_4}\left( {k + 1} \right)} \right\|}, \nonumber
\end{align}
which indicates the network models are to predict the heat index mean and the average wind speed for next hour. 

\subsubsection{Phy-Augmentation} 
\begin{wrapfigure}{r}{0.60\textwidth}
\vspace{-0.9cm}
  \begin{center}
    \includegraphics[width=0.60\textwidth]{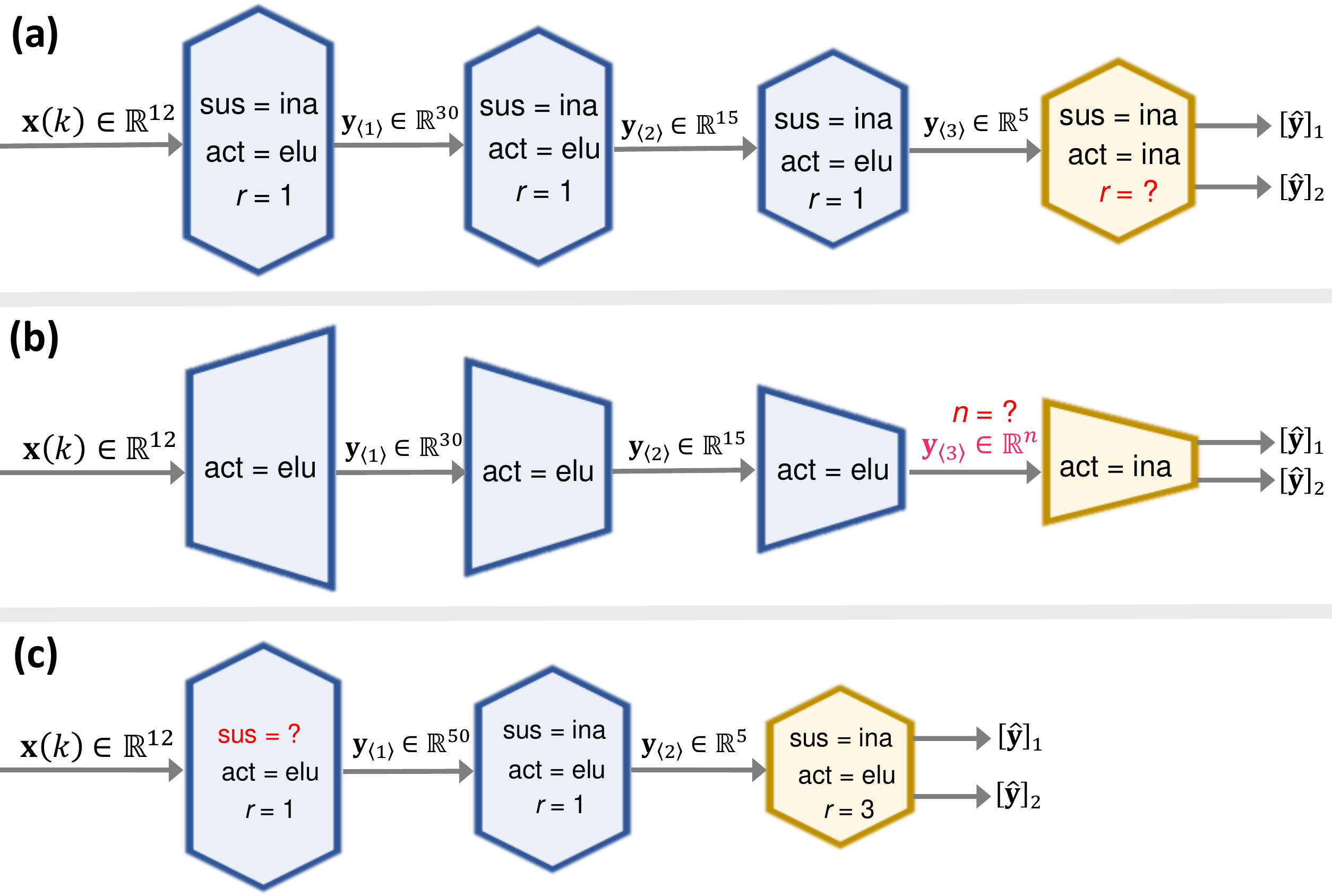}
  \end{center}
  \vspace{-0.45cm}
  \caption{Architectures of deep PhN and classical DNN.}
  \vspace{-0.5cm}
  \label{et}
\end{wrapfigure}
We use the data of station $S_1$ to study the influence of augmentation order $r$ from the perspective of training loss. The considered network model is shown in Figure \ref{et} (a). 
The trajectories of training loss under two different augmentation orders are presented in Figure \ref{ghj} (a). It is straightforward to observe from the figure that the input augmentation significantly speeds up the training process and reduces the training loss. The intuitive explanation is the input augmentation enlarges node number. 

\noindent To validate the statement, we perform the comparisons with the classical DNN, whose structure is given in Figure \ref{et} (b). To guarantee the comparisons are carried in the fair settings, we let the input dimension (i.e., $n$) of the final layer of DNN in Figure \ref{et} (b) equates to the output dimension of input augmentation of PhN in Figure \ref{et} (a). According to \eqref{cpeq1}, we let $n = {\rm{len}}(\mathfrak{m}(\mathbf{x},r = 5)) = 253$. The comparison results are presented in Figure \ref{ghj} (b), which indicates that given the same number of nodes, the deep PhN still has much faster convergence speed and smaller training loss of mean. The phenomenon can be explained by the fact that Phy-Augmentation well captures physical features.
\begin{figure*}
\centering
\subfigure{\includegraphics[scale=0.285]{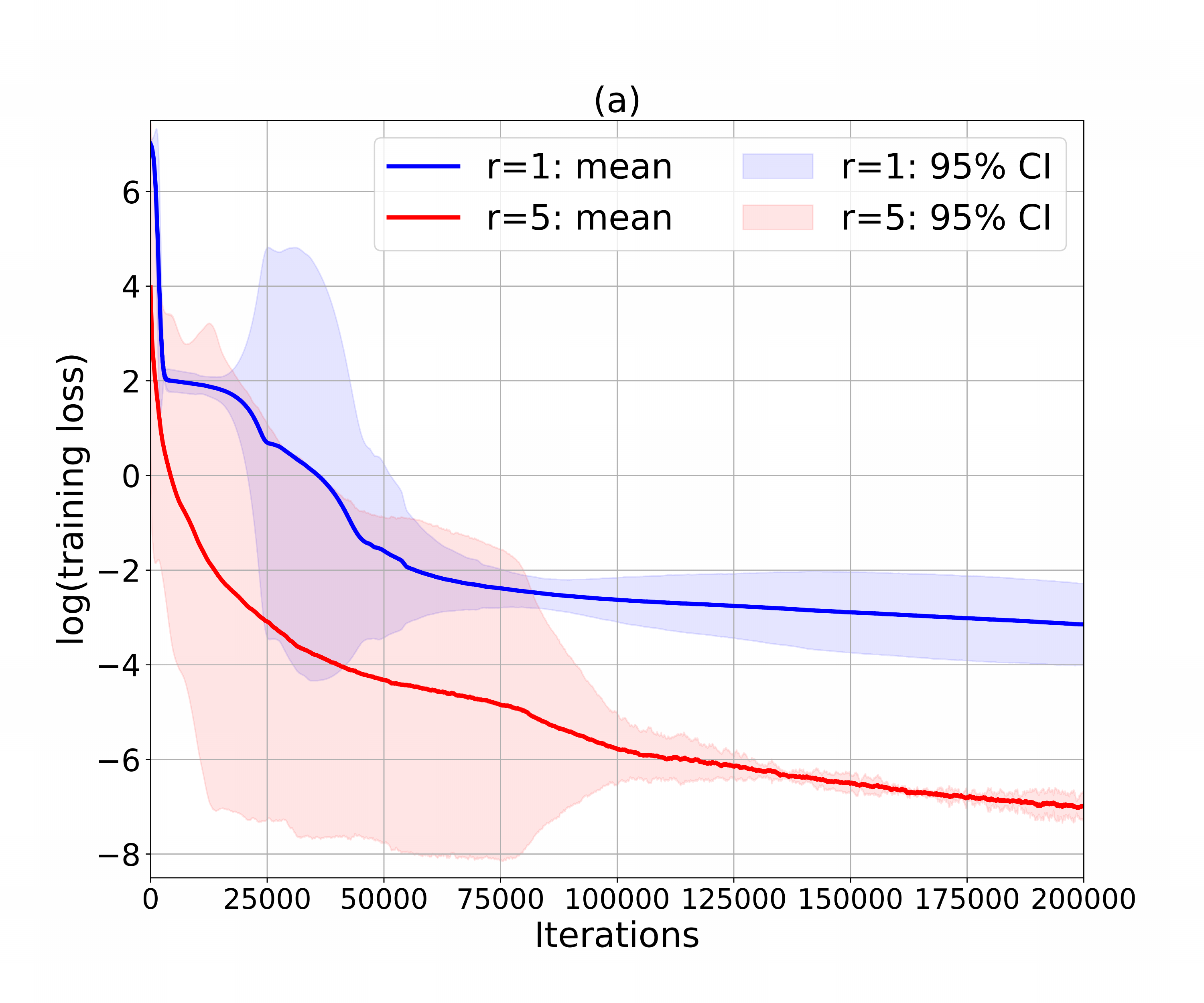}}
\subfigure{\includegraphics[scale=0.285]{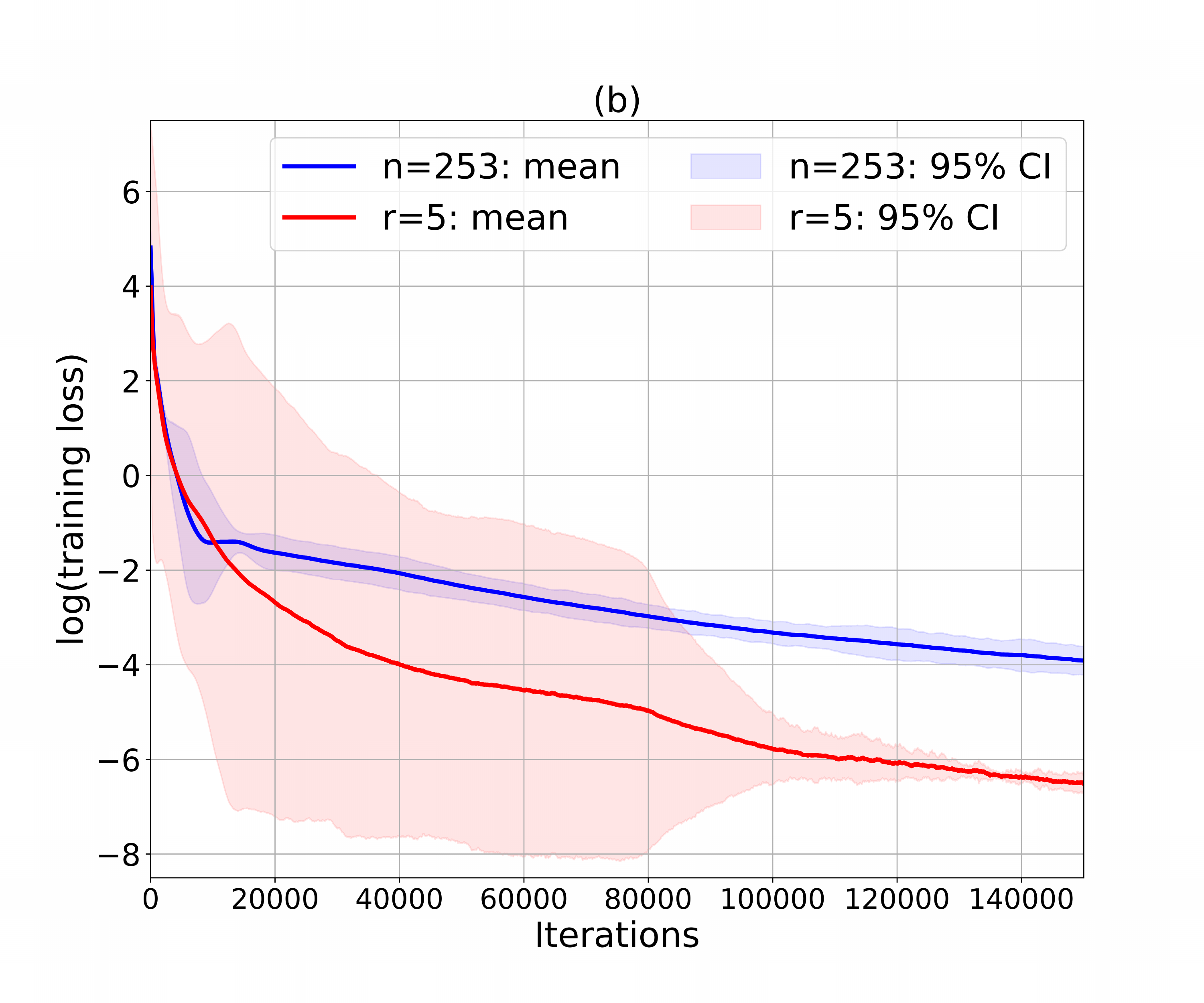}}
\caption{Deep PhN v.s. classical DNN: log of training loss (5 random seeds).}
\label{ghj}
\end{figure*}

\subsubsection{Large Noise} 
To test the robustness of deep PhN faced with large noise, we consider the network structure in Figure \ref{et} (c). To introduce noisy datasets for training, we use the inputs $\mathbf{x}(k)$ from stations $S_{2}$--$S_{4}$, while the dataset of outputs $[\widehat{\mathbf{y}}]_1$ and $[\widehat{\mathbf{y}}]_2$ is from the station $S_{1}$. This setting means we will use station $S_{1}$'s neighbors' inputs to predict its heat index mean and average wind speed. For the suppressor of deep PhN in \eqref{compb}, we let $\beta = 90$ and $\alpha = -1$. The trajectories of training loss in Figure \ref{ddo} together with station locations in Figure \ref{location} show that the training loss decreases as the distance with station $S_1$ increases. Considering the fact that noise of training (input) datasets can increase as distance with (truth) station $S_1$ increases, the result demonstrates the superiority of suppressor in mitigating the influence of large noise in augmented input features. 
\begin{figure*}
\centering
\subfigure{\includegraphics[scale=0.235]{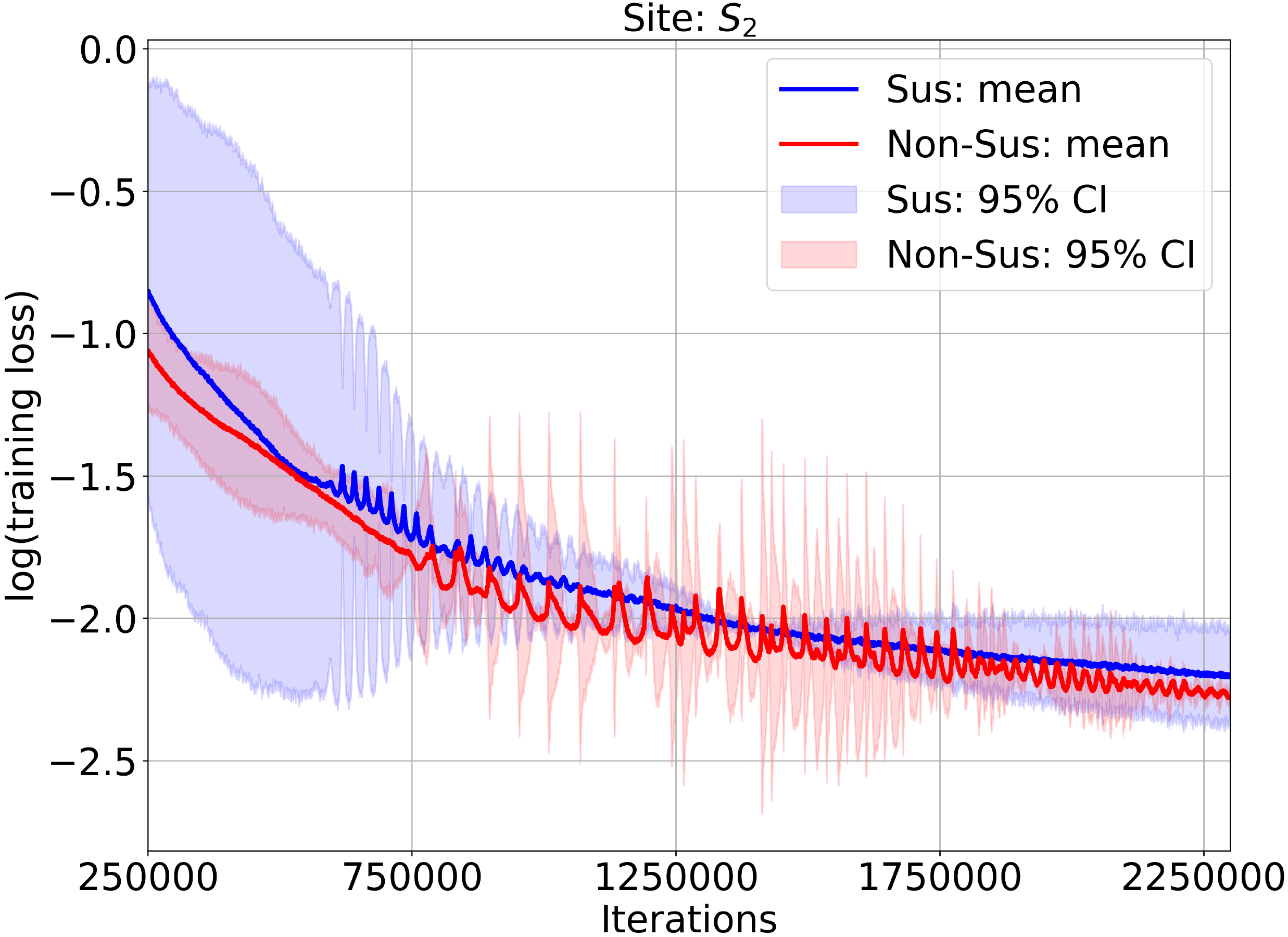}}
\subfigure{\includegraphics[scale=0.235]{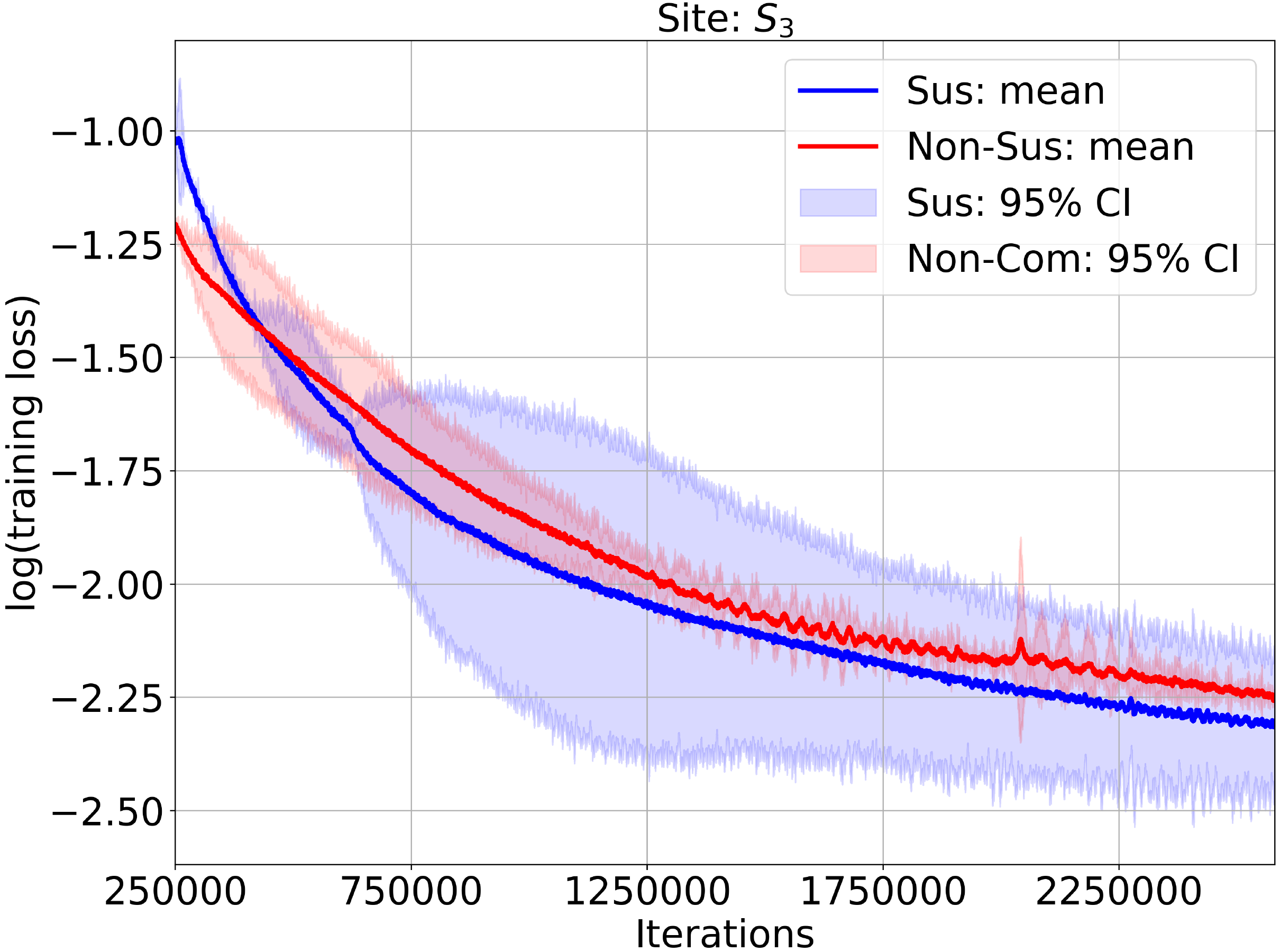}}
\subfigure{\includegraphics[scale=0.235]{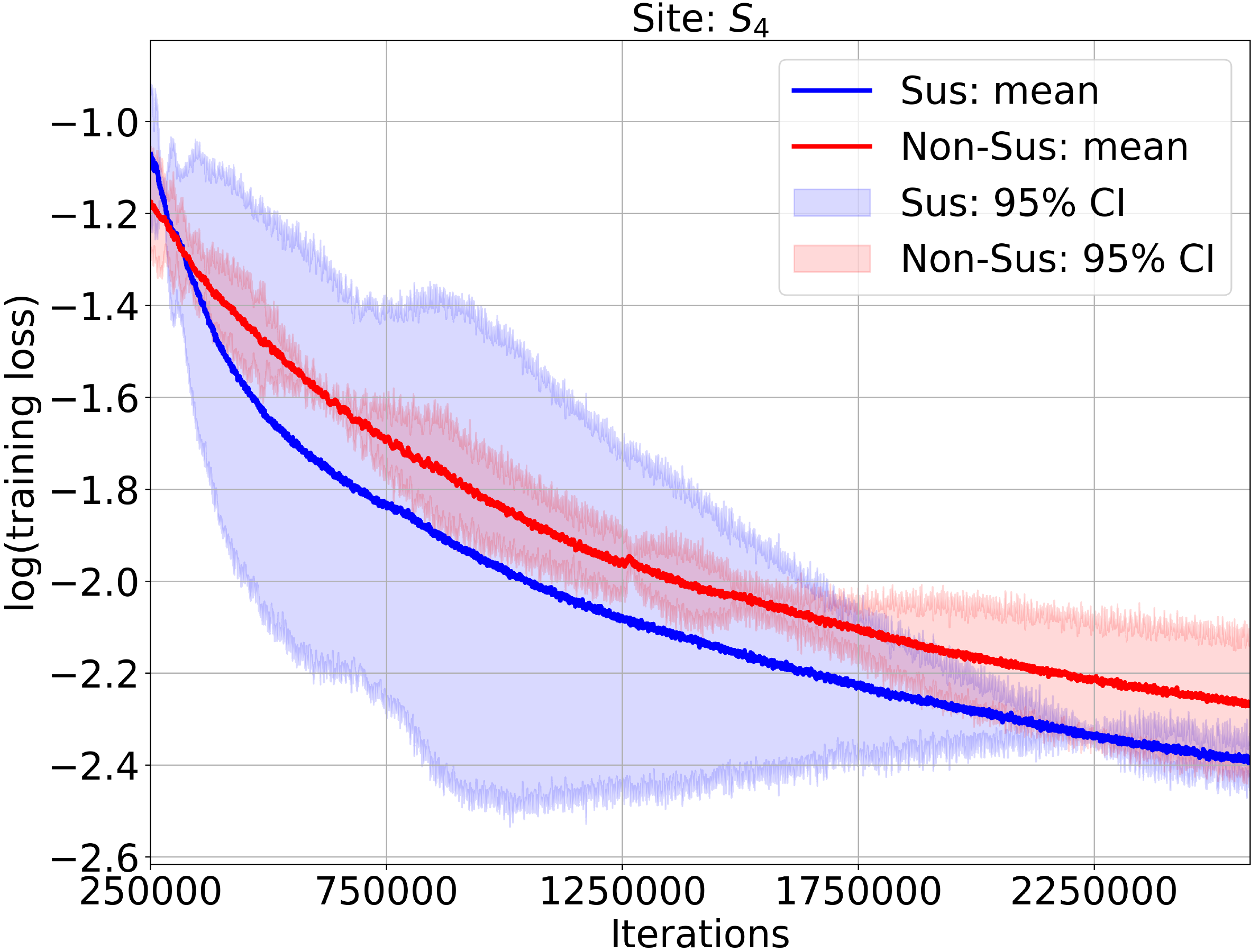}}
\caption{Training loss with and without suppressor function (5 random seeds).}
\label{ddo}
\end{figure*}

\section{Discussion}
In this article, we have proposed a physics-model-based deep neural network framework, called Phy-Taylor, that addresses the challenge the purely data-driven deep DNNs are facing: the violation of inferred relations with physics laws. The Phy-Taylor framework introduces two contributions: the deep physics-compatible neural networks (PhNs) and a physics-guided neural network (NN) editing mechanism, aiming at ensuring strict compliance with prior physical knowledge. The PhN aims to directly capture nonlinearities inspired by physical quantities, such as kinetic energy and aerodynamic drag force. The NN editing mechanism further modifies network links and activation functions consistently with physical knowledge to avoid spurious correlations. As an extension, we have also proposed a self-correcting Phy-Taylor framework that introduces a core capability of automatic output correction when violation of safety occurs. The self-correcting Phy-Taylor successfully addresses the dilemma of prediction horizon and computation time that nonlinear model-predictive control and control barrier function are facing in safety- and time-critical systems.

The experiments show that through physics-guided NN editing and direct capture of hard-to-learn nonlinearities, the Phy-Taylor exhibits a considerable reduction in learning parameters, a remarkably accelerated training process, and greatly enhanced model robustness and accuracy. This suggests that building on the Phy-Taylor framework, the concurrently energy-efficient, robustness and high-accurate DNNs are promising for the energy-constraint physical engineering systems (see e.g., internet of things and battery-powered drones), which constitutes a part of our future research. The current Phy-Taylor however suffers from curse of dimensionality, so it can hardly be applied to high-dimensional data, such as image and text. Overcoming the dimensionality curse problem will thus make the Phy-Taylor scalable. The Tucker Decomposition is a potential to address this problem, since it can decompose the higher-order derivatives in Taylor expansions parameterized by DNNs into small core tensors and a set of factor matrices.  
% \shao{Finally, the current Phy-Taylor only preserves the physical laws with polynomials}

\section{Methods}

\subsection{Datasets} All the datasets used in the experiments are publicly available at \url{https://waynestateprod-my.sharepoint.com/:f:/g/personal/hm9062_wayne_edu/EoXO99WN8zJEidtGRj-dISIBQPBWCnL_Ji6QOZ1uJACjug}.

The datasets in Experiment \ref{expav} are collected from the AutoRally platform \cite{goldfain2019autorally}. The vehicle therein is set to run 6 times, each running time is 30 minutes, which can generate around 1100 samples of ellipse trajectories. The datasets in Experiment \ref{expe} are generated by solving the differential equations of coupled pendulum in MATLAB using the ode45 solver, given 1000 initial conditions. The climate datasets in Experiment \ref{clima} are from the Climate Data Online--National Oceanic and Atmospheric Administration\footnote{\tiny{\hspace{-0.0cm} NOAA: \hspace{-0.0cm} \url{https://www.ncdc.noaa.gov/cdo-web/datasets/NORMAL_HLY/locations/FIPS:17/detail}}}, whose period of record is 01/01/2010--12/31/2010. The datasets in Experiments \ref{expav} and \ref{expe} are evenly distributed to 6 files: two files are preserved for testing and validating, the remaining  4 files are used for training.  For the Experiment \ref{clima}, the data over the final 100 hours is preserved for testing, the data over another 864 hours are used for validating, the remaining data are used for training. 

\subsection{Code} 
For the code, we use the Python API for the TensorFlow framework \cite{kingma2014adam} and the Adam optimizer \cite{abadi2016tensorflow} for training. The Python version is 2.7.12. The TensorFlow version is 1.4.0. Our source code is publicly available at GitHub: \url{https://github.com/ymao578/Phy-Taylor}. The source code of implementing self-correcting Phy-Taylor in the AutoRally platform is publicly available at  \url{https://waynestateprod-my.sharepoint.com/:f:/g/personal/hm9062_wayne_edu/EnDmRmmbKlJFmlQfC3qLMHwBf4KG9FRGVVo3FFVk1TrZeg?e=TQZSgr}.

\subsection{Training}
We set the batch-size to 200 for the Experiments \ref{expav} and \ref{expe}, while the  batch-size of Experiment \ref{clima} is 150. The learning rates of Experiments \ref{expav}--\ref{clima} are set to 0.0005, 0.00005 and 0.00005, respectively. In all the experiments,  each weight matrix is initialized randomly from a (truncated) normal distribution with zero mean and standard deviation, discarding and re-drawing any samples that are more than two standard deviations from the mean. We initialize each bias according to the normal distribution with zero mean and standard deviation.

\bibliography{Edited_ref}

\section{Supplementary Information} \label{sec:SI}

%\subsection{Toy Examples} \label{toy}
%\begin{ex}
%A typical example having model-known parameters is the learning of common Lyapunov function of a 2-dimensional system: 
%\begin{align}
%\mathbf{y} = {w_{11}}\left[ \mathbf{x} \right]_1^2 + {w_{12}}{\left[ \mathbf{x} \right]_1}{\left[ x \right]_2} + {w_{22}}\left[ \mathbf{x} \right]_2^2 = \underbrace{\left[ {\begin{array}{*{20}{c}}
%0 \!&\! 0 \!&\! 0 \!&\! {{w_{11}}} \!&\! {{w_{12}}} \!&\! {{w_{22}}}
%\end{array}} \right]}_{= \mathbf{A}}\underbrace{\left[ \begin{array}{l}  
%1\\
%{\left[ \mathbf{x} \right]_1}\\
%{\left[ \mathbf{x} \right]_2}\\
%\left[ \mathbf{x} \right]_1^2\\
%%{\left[ \mathbf{x} \right]_1}{\left[ x \right]_2}\\
%\left[ \mathbf{x} \right]_2^2
%\end{array} \right]}_{= \mathfrak{m}(\mathbf{x},r)} + \underbrace{0}_{ = \mathbf{f}(\mathbf{x})}. \label{defexp}
%\end{align}
%In the control community, the formula of Lyapunov function \eqref{defexp} is well-known \cite{khalil2002nonlinear}, which means the knowledge $\mathbf{f}(\mathbf{x}) \equiv 0$ is known.  According to Definition \ref{defj}, all the entries of matrix $A$ are model-known parameters, including know zeros and unknown learning parameters $w_{12}$, $w_{12}$ and $w_{22}$.  \label{ex1}
%\end{ex}

\subsection{Auxiliary Theorems} \label{Aux}
\begin{thm}
The DNR magnitude of high-order monomial $[\bar{\mathbf{x}}]_i^p[\bar{\mathbf{x}}]_j^q$, $p$, $q \in \mathbb{N}$,  is strictly increasing with respect to $|\mathrm{DNR}_i|$ and $|\mathrm{DNR}_j|$, if 
\begin{align}
\mathrm{DNR}_i, ~\mathrm{DNR}_j \in (-\infty, -1] ~~~~~\text{or}~~~~~ \mathrm{DNR}_i, ~\mathrm{DNR}_j \in [-\frac{1}{2}, 0) ~~~~~\text{or}~~~~~ \mathrm{DNR}_i, ~\mathrm{DNR}_j \in (0, \infty). \label{concon}
\end{align}\label{colthm}
\end{thm}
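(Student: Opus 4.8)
The plan is to reduce the claim to the monotonicity of a single scalar function. Following Definition~\ref{def1}, I would regard the monomial $[\bar{\mathbf{x}}]_i^p[\bar{\mathbf{x}}]_j^q$ as the sum of a \emph{true-data} part $[\mathbf{h}]_i^p[\mathbf{h}]_j^q$ (the product of the noise-free factors) and a \emph{noise} part equal to the remainder $[\bar{\mathbf{x}}]_i^p[\bar{\mathbf{x}}]_j^q - [\mathbf{h}]_i^p[\mathbf{h}]_j^q$, so that the monomial's DNR is the quotient of the former by the latter. Since $[\bar{\mathbf{x}}]_i = [\mathbf{h}]_i + [\mathbf{w}]_i$ and $\mathrm{DNR}_i = [\mathbf{h}]_i / [\mathbf{w}]_i$, one has $[\bar{\mathbf{x}}]_i = (1 + \mathrm{DNR}_i^{-1})\,[\mathbf{h}]_i$, whence the reciprocal DNR collapses to the closed form
\begin{align}
\frac{[\bar{\mathbf{x}}]_i^p[\bar{\mathbf{x}}]_j^q - [\mathbf{h}]_i^p[\mathbf{h}]_j^q}{[\mathbf{h}]_i^p[\mathbf{h}]_j^q} = \left(1 + \frac{1}{\mathrm{DNR}_i}\right)^{p}\left(1 + \frac{1}{\mathrm{DNR}_j}\right)^{q} - 1. \nonumber
\end{align}
The theorem is therefore equivalent to showing that the magnitude of the right-hand side is \emph{strictly decreasing} in $|\mathrm{DNR}_i|$ and in $|\mathrm{DNR}_j|$ on each of the three regions in~\eqref{concon}.

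By symmetry it suffices to treat one variable, so I would fix $\mathrm{DNR}_j$ and study the one-factor map $\phi(t) \triangleq |1 + 1/t|$ as a function of $|t|$. A short computation splits into exactly the three stated regimes: on $(0,\infty)$ one has $1 + 1/t > 1$ with $\phi$ strictly decreasing in $|t|$; on $[-\tfrac12,0)$ one has $1 + 1/t \le -1$, so $\phi \ge 1$ is again strictly decreasing in $|t|$; and on $(-\infty,-1]$ one has $1 + 1/t \in [0,1)$ with $\phi$ strictly \emph{increasing} in $|t|$. Writing $\Phi \triangleq \phi(\mathrm{DNR}_i)^p\,\phi(\mathrm{DNR}_j)^q = |(1+1/\mathrm{DNR}_i)^p(1+1/\mathrm{DNR}_j)^q|$, the product inherits strict monotonicity from $\phi$ in each regime (decreasing in the first two, increasing in the third), using $p,q\ge 1$.

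It remains to combine $\Phi$ with the ``$-1$'' inside the absolute value, and this is where the sign bookkeeping must be done case by case. When the product $(1+1/\mathrm{DNR}_i)^p(1+1/\mathrm{DNR}_j)^q$ is nonnegative, the target magnitude equals $\Phi - 1$ if $\Phi \ge 1$ (regions one and two with $p+q$ even) and equals $1 - \Phi$ if $\Phi < 1$ (region three); when the product is negative (region two with $p+q$ odd) it equals $\Phi + 1$. In every one of these subcases the resulting expression is strictly decreasing in $|\mathrm{DNR}_i|$ and $|\mathrm{DNR}_j|$ --- directly when $\Phi$ is decreasing and the sign pattern gives $\Phi-1$ or $\Phi+1$, and via the sign flip $1-\Phi$ when $\Phi$ is increasing --- which yields the claimed strict increase of the monomial's DNR magnitude. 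The main obstacle is precisely the middle region $[-\tfrac12,0)$: there the base $1 + 1/t$ is negative with magnitude at least one, so the sign of the product toggles with the parity of $p+q$ and interacts nontrivially with the ``$-1$'', and one must inspect the boundary value $t=-\tfrac12$ (and $t=-1$ in region three, where a factor vanishes) to secure \emph{strict} monotonicity rather than mere monotonicity.
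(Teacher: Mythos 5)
Your proof is correct and follows essentially the same route as the paper's: the same decomposition of the monomial into true data $[\mathbf{h}]_i^p[\mathbf{h}]_j^q$ plus remainder noise, the same closed-form reduction of the monomial's DNR to $\bigl[(1+1/\mathrm{DNR}_i)^p(1+1/\mathrm{DNR}_j)^q - 1\bigr]^{-1}$, and the same three-region case analysis with the parity-of-$(p+q)$ split in the middle region. Working with the reciprocal (noise-to-data) ratio and packaging the one-factor monotonicity into $\phi(t)=|1+1/t|$ is only a presentational difference; the boundary points you flag ($\mathrm{DNR}=-\tfrac{1}{2}$ where the target can vanish, $\mathrm{DNR}=-1$ where a factor vanishes) are glossed over in the paper's proof as well.
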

\begin{proof}
In view of definition \ref{def1}, the true data can be equivalently expressed as $[{\mathbf{h}}]_i = \mathrm{DNR}_i \cdot [{\mathbf{w}}]_i$, according to which we have $[\bar{\mathbf{x}}]_i = (1 + \mathrm{DNR}_i){[{\mathbf{w}}]_i}$ such that 
\begin{align}
[\bar{\mathbf{x}}]_i^p[\bar{\mathbf{x}}]_j^q = {( {1 + \mathrm{DNR}_i})^p}{( {1 + \mathrm{DNR}_j} )^q}[\mathbf{w}]_i^p[\mathbf{w}]_j^q, ~~~~~~\text{and}~~~[\mathbf{h}]_i^p[\mathbf{h}]_j^q = \mathrm{DNR}_i^p \cdot \mathrm{DNR}_j^q \cdot [\mathbf{w}]_i^p [\mathbf{w}]_j^q. \label{reformula}
\end{align}
We note the true data of high-order monomial $[\bar{\mathbf{x}}]_i^p[\bar{\mathbf{x}}]_j^q$ is $[{\mathbf{h}}]_i^p[{\mathbf{h}}]_j^q$, the corresponding noise can thus be derived from the formula~\eqref{reformula} as  
\begin{align}
[\bar{\mathbf{x}}]_i^p[\bar{\mathbf{x}}]_j^q - [\mathbf{h}]_i^p[\mathbf{h}]_j^q = \left[{( {1 + \mathrm{DNR}_i})^p}{( {1 + \mathrm{DNR}_j} )^q} -  \mathrm{DNR}_i^p \cdot \mathrm{DNR}_j^q\right] [\mathbf{w}]_i^p[\mathbf{w}]_j^q, \nonumber
\end{align}
which, in conjunction with the second formula in Equation \eqref{reformula}, lead to 
\begin{align}
\left| \mathrm{DNR}^{p+q}_{ij} \right| \triangleq \left|\frac{{[\mathbf{h}]_i^p[\mathbf{h}]_j^q}}{{[\bar{\mathbf{x}}]_i^p[\bar{\mathbf{x}}]_j^q - [\mathbf{h}]_i^p[\mathbf{h}]_j^q}}\right| = \left|\frac{1}{{{{\left( {1 + \frac{1}{{\mathrm{DNR}_i}}} \right)}^p}{{\left( {1 + \frac{1}{{\mathrm{DNR}_j}}} \right)}^q} - 1}}\right|, ~~~~~p,~q \in \mathbb{N}. \label{dnrfinal}
\end{align}

We can straightforwardly verify from formula \eqref{dnrfinal} that if $\mathrm{DNR}_i, ~\mathrm{DNR}_j \in (0, \infty)$, we have 
\begin{align}
\left| \mathrm{DNR}^{p+q}_{ij} \right| = \frac{1}{{{{\left( {1 + \frac{1}{{|\mathrm{DNR}_i|}}} \right)}^p}{{\left( {1 + \frac{1}{{|\mathrm{DNR}_j|}}} \right)}^q} - 1}}, \label{dnrfinal0}
\end{align}
which implies $|\mathrm{DNR}^{p+q}_{ij}|$ is strictly increasing with respect to $|\mathrm{DNR}_i|$ and $|\mathrm{DNR}_j|$ under this condition. 

The condition $\mathrm{DNR}_i, ~\mathrm{DNR}_j \in (-\infty, -1]$ means that
\begin{align}
\frac{{1}}{{\mathrm{DNR}_j}} \in [-1, 0),~~~~~~~~\frac{{1}}{{\mathrm{DNR}_j}} \in [-1, 0),~~~~~~~~1 + \frac{{1}}{{\mathrm{DNR}_i}} \in [0, 1), ~~~~~~~~1 + \frac{{1}}{{\mathrm{DNR}_j}} \in [0, 1), \label{dnrpp1}
\end{align}
considering which, the formula \eqref{dnrfinal} equivalently transforms to 
\begin{align}
\left| \mathrm{DNR}^{p+q}_{ij} \right| = \frac{1}{{{{1 - \left( {1 + \frac{1}{{\mathrm{DNR}_i}}} \right)}^p}{{\left( {1 + \frac{1}{{\mathrm{DNR}_j}}} \right)}^q}}} = \frac{1}{{{{1 - \left( {1 - \frac{1}{{|\mathrm{DNR}_i|}}} \right)}^p}{{\left( {1 - \frac{1}{{|\mathrm{DNR}_j|}}} \right)}^q}}}, \label{dnrfina2}
\end{align}
which reveals that $|\mathrm{DNR}^{p+q}_{ij}|$ is strictly increasing with respect to $|\mathrm{DNR}_i|$ and $|\mathrm{DNR}_j|$. 

The condition $\mathrm{DNR}_i, ~\mathrm{DNR}_j \in [-\frac{1}{2}, 0)$ means
\begin{align}
\frac{{1}}{{\mathrm{DNR}_j}} \in (-\infty, -2], ~~\frac{{1}}{{\mathrm{DNR}_j}} \in (-\infty, -2], ~~1 + \frac{{1}}{{\mathrm{DNR}_i}} \in (-\infty, -1], ~~1 + \frac{{1}}{{\mathrm{DNR}_j}} \in (-\infty, -1], \label{dnrpp2}
\end{align}
in light of which, the formula \eqref{dnrfinal} can equivalently express as 
\begin{itemize}
  \item if $p + q$ is even,
  \begin{align}
\left| \mathrm{DNR}^{m+n}_{ij} \right| = \frac{1}{{{{\left| {\frac{1}{{|\mathrm{DNR}_i|}}-1} \right|}^p}{{\left| {\frac{1}{{|\mathrm{DNR}_j|}}-1} \right|}^q }} - 1}, \label{dnrfina3}
\end{align}
  \item if $p + q$ is odd, 
  \begin{align}
\left| \mathrm{DNR}^{p+q}_{ij} \right| = \frac{1}{{{{1 + \left| {\frac{1}{{|\mathrm{DNR}_i|}}-1} \right|}^p}{{\left| {\frac{1}{{|\mathrm{DNR}_j|}}-1} \right|}^q }}}. \label{dnrfina4}
\end{align}
\end{itemize}
We note both the functions \eqref{dnrfina3} and \eqref{dnrfina4} imply $|\mathrm{DNR}^{m+n}_{ij}|$ is strictly increasing with respect to $|\mathrm{DNR}_i|$ and $|\mathrm{DNR}_j|$, which completes the proof.  
\end{proof}

\begin{thm}\cite{stanley1986enumerative}  For any pair of positive integers $n$ and $k$, the number of $n$-tuples of non-negative integers whose sum is $r$ is equal to the number of multisets of cardinality $n - 1$ taken from a set of size $n + r - 1$, i.e., $\left( \begin{array}{l}
n + r - 1\\
n - 1
\end{array} \right) = \frac{{\left( {n + r - 1} \right)!}}{{\left( {n - 1} \right)!r!}}$. \label{ath3}
\end{thm}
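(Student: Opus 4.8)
The plan is to establish this purely combinatorial identity (the classical \emph{stars-and-bars} count) by exhibiting an explicit bijection that reduces the number of $n$-tuples to a single binomial coefficient, and then expanding that coefficient into the stated factorial form. First I would fix the object to be counted: let $\mathcal{T}$ be the set of $n$-tuples $(x_1, x_2, \ldots, x_n)$ of non-negative integers with $\sum_{i=1}^n x_i = r$. The goal is to prove $|\mathcal{T}| = \binom{n+r-1}{n-1}$ and then to rewrite the right-hand side as $\frac{(n+r-1)!}{(n-1)!\,r!}$.

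The core step is the bijection. Let $\mathcal{S}$ be the set of linear arrangements of $r$ identical ``stars'' and $n-1$ identical ``bars'', each arrangement having total length $n+r-1$. Given $(x_1, \ldots, x_n) \in \mathcal{T}$, I map it to the arrangement consisting of $x_1$ stars, then a bar, then $x_2$ stars, then a bar, and so on, ending with $x_n$ stars after the $(n-1)$-th bar. Since the $x_i$ are non-negative and sum to $r$, this arrangement contains exactly $r$ stars and $n-1$ bars, so it lies in $\mathcal{S}$. Conversely, from any arrangement in $\mathcal{S}$ one recovers a tuple by reading off the number of stars in each of the $n$ gaps delimited by the bars (including the regions before the first bar and after the last); this is well defined precisely because $n-1$ bars cut the line into $n$ gaps whose star-counts are non-negative and sum to $r$. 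The two maps are mutually inverse, hence $|\mathcal{T}| = |\mathcal{S}|$.

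Next I would count $\mathcal{S}$ directly: an element is determined by choosing which $n-1$ of the $n+r-1$ positions carry bars (equivalently, which $r$ carry stars), so $|\mathcal{S}| = \binom{n+r-1}{n-1} = \binom{n+r-1}{r}$ --- which is exactly the stated count of $(n-1)$-element selections from a set of size $n+r-1$. Expanding gives $\binom{n+r-1}{n-1} = \frac{(n+r-1)!}{(n-1)!\,(n+r-1-(n-1))!} = \frac{(n+r-1)!}{(n-1)!\,r!}$, which is the claimed formula. To connect with the multiset viewpoint, I would note that a tuple in $\mathcal{T}$ is the same data as a multiset of size $r$ drawn from $n$ types (type $i$ appearing with multiplicity $x_i$), and the star-bar encoding is precisely the standard bijection underlying the multiset count $\binom{n+r-1}{r}$.

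Since the result is elementary and is quoted from \cite{stanley1986enumerative}, there is no genuine obstacle; the only point requiring care is verifying that the encoding is a true bijection, in particular handling the degenerate cases --- tuples with zero entries, which produce adjacent bars or leading/trailing bars in the arrangement, and the boundary value $r=0$, giving the empty star configuration --- so that no arrangement is missed or double-counted.
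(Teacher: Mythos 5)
Your proof is correct, and there is nothing in the paper to compare it against: the paper states this result as an auxiliary theorem quoted directly from Stanley's \emph{Enumerative Combinatorics} and gives no proof of its own. Your stars-and-bars bijection, together with the careful treatment of tuples containing zeros and the case $r=0$, is the canonical argument behind the cited identity, and the expansion $\binom{n+r-1}{n-1}=\frac{(n+r-1)!}{(n-1)!\,r!}$ is handled correctly. One point worth noting: the paper's phrasing (``multisets of cardinality $n-1$ taken from a set of size $n+r-1$,'' and the stray ``$k$'' in place of $r$) is garbled --- taken literally, that multiset count would be $\binom{(n+r-1)+(n-1)-1}{n-1}$, not $\binom{n+r-1}{n-1}$; your proof implicitly repairs this by working with the correct object, namely $(n-1)$-element \emph{subsets} of an $(n+r-1)$-element set (equivalently, multisets of cardinality $r$ from $n$ types), which is the reading consistent with the stated formula and with how the paper uses the result in proving its Theorem 5.
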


\begin{thm}
The space complexity of Phy-Augmentation, i.e., the dimension of terminal output generated by Algorithm~\ref{ALG1}, in PhN layer \eqref{sTNO}  is 
\begin{align}
\mathrm{len}(\mathfrak{m}({\mathbf{x}},r)) = \sum\limits_{s = 1}^r {\frac{{\left( {n + s - 1} \right)!}}{{\left( {n - 1} \right)!s!}}} + 1. \label{cpeq1}
\end{align}\label{thm3}
\end{thm}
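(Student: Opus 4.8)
The plan is to reduce the statement to a purely combinatorial count and then invoke Theorem~\ref{ath3}. Write $n = \mathrm{len}(\mathbf{x})$. Each monomial in the components of $\mathbf{x}$ of total degree $s$ corresponds bijectively to an exponent tuple $\alpha = [\alpha_1;~\ldots;~\alpha_n]$ of non-negative integers with $\sum_{i=1}^{n}\alpha_i = s$, via $\mathbf{x}^\alpha = \prod_{i=1}^{n}[\mathbf{x}]_i^{\alpha_i}$. Hence the theorem follows once I establish that the vector $\mathfrak{m}(\mathbf{x},r)$ returned by Algorithm~\ref{ALG1} contains the constant entry $1$ together with, for every $s \in \{1,\ldots,r\}$, each degree-$s$ monomial exactly once.

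The core step is a lemma asserting this completeness and non-redundancy, which I would prove by induction on the degree $s$. The base case $s = 1$ holds because the initialization sets $\mathfrak{m}(\mathbf{x},r) \leftarrow \mathbf{x}$, i.e. the $n$ distinct degree-one monomials $[\mathbf{x}]_1, \ldots, [\mathbf{x}]_n$ with no repetition. For the inductive step I would track the index vector $\mathbf{i}$, whose purpose is to impose a canonical ordering on the factors of each monomial: a degree-$s$ monomial is formed from a unique degree-$(s-1)$ predecessor by appending a factor $[\mathbf{x}]_i$ whose index is no smaller than the largest index already present. The sub-vector slice $[\mathbf{x}]_{[\,[\mathbf{i}]_i\,:\,\mathrm{len}(\mathbf{x})\,]}$ together with the update $[\mathbf{i}]_i \leftarrow \mathrm{len}(\mathbf{x})$ in the inner loop is exactly what enforces this ordering, so that every degree-$s$ monomial is produced and none is produced twice.

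Granting the lemma, the count is immediate. By the bijection above and Theorem~\ref{ath3} applied with sum $s$, the number of degree-$s$ monomials equals the number of $n$-tuples of non-negative integers summing to $s$, namely $\frac{(n+s-1)!}{(n-1)!\,s!}$. Summing over $s = 1, \ldots, r$ counts all non-constant entries of $\mathfrak{m}(\mathbf{x},r)$, and the final stacking with $1$ contributes the additive term $+1$, giving
\begin{align}
\mathrm{len}(\mathfrak{m}(\mathbf{x},r)) = \sum_{s=1}^{r}\frac{(n+s-1)!}{(n-1)!\,s!} + 1. \nonumber
\end{align}
The main obstacle is the inductive bookkeeping in the lemma, namely verifying rigorously that the index-vector updates of Algorithm~\ref{ALG1} enumerate every degree-$s$ monomial while never repeating one; the monomial/exponent-tuple bijection and the binomial evaluation are routine once Theorem~\ref{ath3} is in hand.
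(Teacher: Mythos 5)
Your proposal is correct and takes essentially the same route as the paper's proof: both reduce the count to the number of $n$-tuples of non-negative integers summing to $s$ via the monomial/exponent-tuple correspondence, invoke Theorem~\ref{ath3} for each degree $s \in \{1,\ldots,r\}$, and add $1$ for the constant entry stacked in Line~\ref{alg1-16}. The only difference is one of detail: where the paper simply asserts that Lines~\ref{alg1-4}--\ref{alg1-13} generate all non-missing, non-redundant monomials of $\bigl(\sum_{i=1}^{n}[\overline{\mathbf{x}}]_i\bigr)^s$, you sketch an inductive justification of that enumeration claim, which is if anything more careful than the paper's treatment.
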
 
\begin{proof}
We denote the output from Line \ref{ALG1-102} of Algorithm \ref{ALG1} by $\overline{\mathbf{x}}$. Let us first consider the case $r = 1$. In this case, the Algorithm~\ref{ALG1} skips the Lines \ref{alg1-3}--\ref{alg1-14} and arrives at $\mathfrak{m}(\mathbf{x},r) = [1; ~ \overline{\mathbf{x}}]$ in Line \ref{alg1-16}. Noticing from  Line \ref{ALG1-102} that  $\overline{\mathbf{x}} \in \mathbb{R}^{n}$, we obtain $\text{len}(\mathfrak{m}(\mathbf{x},r))  = n + 1$, which verifies the correctness of \eqref{cpeq1} with $r = 1$. 

We next consider the case $r \ge 2$. Given the input dimension $n$ and an order $s \in \{2, \ldots, r-1, r\}$, the Lines \ref{alg1-4}--\ref{alg1-13} of Algorithm \ref{ALG1} are to generate all the non-missing and non-redundant monomials included in ${\left( {\sum\limits_{i = 1}^n {{[\overline{\mathbf{x}}]_i}} } \right)^s}$. The problem of the number of generated monomials via Algorithm \ref{ALG1} is equivalent to the problem that for any pair of positive integers $n$ and $s$, the number of $n$-tuples of non-negative integers (whose sum is $s$) is equal to the number of multisets of cardinality $n - 1$ taken from a set of size $n + s - 1$. Additionally, we note that the vector generated in Line \ref{alg1-12} of Algorithm \ref{ALG1}, denoted by $\widetilde{\mathfrak{m}}(\mathbf{x},s)$, stacks all the generated monomials. According to the auxiliary Theorem \ref{ath3} in Supplementary Information \ref{Aux}, we then have $\text{len}(\widetilde{\mathfrak{m}}(\mathbf{x},s)) = \frac{{\left( {n + s - 1} \right)!}}{{\left( {n - 1} \right)!s!}}$. Finally, we note that the Lines \ref{alg1-3}, and \ref{alg1-16} of Algorithm \ref{ALG1} imply that the generated vector $\mathfrak{m}(\mathbf{x},r)$ stack the $1$ with $\widetilde{\mathfrak{m}}(\mathbf{x},s)$ over $s \in \{1, \ldots, r-1, r\}$, respectively. We thus can obtain \eqref{cpeq1}.
\end{proof}

\subsection{Proof of Theorem \ref{th2}} \label{SI02}
We note the $[\widetilde{\mathbf{h}}]_i$ given in Equation \eqref{cohgq3} can be written as  
\begin{align}
[\widetilde{\mathbf{h}}]_i = \begin{cases}
		[\mathbf{h}]_i, & \text{if}~[\mathbf{h}]_i + [\mathbf{w}]_i < 0\\
		[\mathbf{h}]_i, & \text{if}~[\mathbf{h}]_i + [\mathbf{w}]_i \ge 0 ~\text{and}~[\mathbf{w}]_i < 0 \\
		[\mathbf{h}]_i \cdot \kappa_i + \rho_i, & \text{if}~[\mathbf{h}]_i + [\mathbf{w}]_i \ge 0 ~\text{and}~[\mathbf{w}]_i > 0
	\end{cases}, \label{pth32}
\end{align}
subtracting $[{\mathbf{h}}]_i$ from which yields 
\begin{align}
\left| [\widetilde{\mathbf{h}}]_i - [{\mathbf{h}}]_i\right| &= \begin{cases}
		0, & \text{if}~[\mathbf{h}]_i + [\mathbf{w}]_i < 0\\
		0, & \text{if}~[\mathbf{h}]_i + [\mathbf{w}]_i \ge 0 ~\text{and}~[\mathbf{w}]_i < 0\\
		\left| {[\mathbf{h}]_i \cdot \left( {\kappa_i  - 1} \right) + \rho_i } \right|, & \text{if}~[\mathbf{h}]_i + [\mathbf{w}]_i \ge 0 ~\text{and}~[\mathbf{w}]_i > 0 \\
	\end{cases}. \label{pth31}
\end{align}

Referring to the output $\chi([\bar{\bf{x}}]_i)$ of suppressor in Equation \eqref{compbadd}, we can conclude the $[\widetilde{\mathbf{h}}]_i$ given in Equation \eqref{pth32} is the true data of suppressor output. Subtracting the $[\widetilde{\mathbf{h}}]_i$ from the 
$\chi([\bar{\bf{x}}]_i)$ results in the noise $[\widetilde{\mathbf{w}}]_i$ of suppressor output, given in Equation \eqref{cohgq3}.

To prove the property \eqref{ckko}, we consider the following three cases:
\begin{itemize}
  \item If  $[\mathbf{h}]_i + [\mathbf{w}]_i < 0$, we obtain from the the first item of $[\widetilde{\mathbf{h}}]_i$ in Equation \eqref{pth32} and $[\widetilde{\mathbf{w}}]_i$ in Equation \eqref{cohgq3} that $\mathrm{DNR}_{i} = \frac{[\widetilde{\mathbf{h}}]_i}{[\widetilde{\mathbf{w}}]_i} = -1$.
  \item If $[\mathbf{h}]_i + [\mathbf{w}]_i \ge 0$ and $[\mathbf{w}]_i < 0$, we have $[\mathbf{h}]_i > 0$ and $[\mathbf{h}]_i > -[\mathbf{w}]_i| > 0$. We then obtain from the second item of $[\widetilde{\mathbf{h}}]_i$ in Equation \eqref{pth32} and $[\widetilde{\mathbf{w}}]_i$ in Equation \eqref{cohgq3} that $\mathrm{DNR}_{i} = \frac{[\widetilde{\mathbf{h}}]_i}{[\widetilde{\mathbf{w}}]_i} = \frac{[{\mathbf{h}}]_i}{[{\mathbf{w}}]_i} < -1$. 
  \item If $[\mathbf{h}]_i + [\mathbf{w}]_i \ge 0$ and $[\mathbf{w}]_i > 0$, we obtain from the third item of $[\widetilde{\mathbf{h}}]_i$ in Equation \eqref{pth32} and $[\widetilde{\mathbf{w}}]_i$ in Equation \eqref{cohgq3} that $\mathrm{DNR}_{i} = \frac{[\widetilde{\mathbf{h}}]_i}{[\widetilde{\mathbf{w}}]_i} = \frac{{[\mathbf{h}]_i \cdot \kappa_i  + \rho_i}}{{[\mathbf{w}]_i \cdot \kappa_i}}$. Recalling $[\widetilde{\mathbf{w}}]_i > 0$, if $\kappa_i > 0$, the $\frac{{[\mathbf{h}]_i \cdot \kappa_i  + \rho_i}}{{[\mathbf{w}]_i \cdot \kappa_i}} \le -1$ is equivalent to
  \begin{align}
\rho_i  \le  - ([\mathbf{h}]_i + [\mathbf{w}]_i)\kappa_i  < 0, ~~~\text{with}~~\kappa_i > 0,~~[\mathbf{h}]_i + [\mathbf{w}]_i \ge 0. \label{pu1}
\end{align}
If $\kappa_i < 0$, the $\frac{{[\mathbf{h}]_i \cdot \kappa_i  + \rho_i}}{{[\mathbf{w}]_i \cdot \kappa_i}} \le -1$ is equivalent to
\begin{align}
\rho_i  \ge  - \left( {[\mathbf{w}]_i + [\mathbf{h}]_i} \right) \kappa_i  \ge 0, ~~~\text{with}~ \kappa_i < 0,~~ [\mathbf{h}]_i + [\mathbf{w}]_i \ge 0.   \label{pu2}
\end{align}
\end{itemize}
We finally conclude from Equations \eqref{pu1} and \eqref{pu2} that $\mathrm{DNR}_{i} = \frac{[\widetilde{\mathbf{h}}]_i}{[\widetilde{\mathbf{w}}]_i} \in (-\infty, -1]$ under the condition \eqref{compb}. Then, according to Theorem \ref{colthm}, we arrive in the property \eqref{ckko}, which completes the proof.

\subsection{Proof of Theorem \ref{thmmm2}} \label{SI03}
Let us first consider the first PhN layer, i.e., the case $t = 1$. The Line \ref{alg2-3} of Algorithm~\ref{ALG2} means that the knowledge matrix $\mathbf{K}_{\left\langle 1 \right\rangle }$ includes all the known model-substructure parameters, whose corresponding entries in the masking matrix $\mathbf{M}_{\left\langle 1 \right\rangle }$ (generated in the Line \ref{alg2-4} of Algorithm~\ref{ALG2}) are frozen to be zeros. Consequently, both $\mathbf{M}_{\left\langle 1 \right\rangle } \odot \mathbf{A}$ and $\mathbf{U}_{\left\langle 1 \right\rangle } = \mathbf{M}_{\left\langle 1 \right\rangle } \odot \mathbf{W}_{\left\langle 1 \right\rangle }$ excludes all the known model-substructure parameters (included in $\mathbf{K}_{\left\langle 1 \right\rangle }$). With the consideration of Definition \ref{defj}, we thus conclude that $\mathbf{M}_{\left\langle 1 \right\rangle } \odot \mathbf{A} \cdot \mathfrak{m}(\mathbf{x},r_{\left\langle 1 \right\rangle}) + \mathbf{f}(\mathbf{x})$ in the ground-truth model \eqref{exppf1} and $\mathbf{a}_{\left\langle 1 \right\rangle } \odot \text{act}\left( \mathbf{U}_{\left\langle 1 \right\rangle} \cdot {\mathfrak{m}}\left(\mathbf{x},  r_{\left\langle 1 \right\rangle}  \right) \right)$ in the output computation \eqref{exppf2} are independent of the term $\mathbf{K}_{\left\langle 1 \right\rangle } \cdot \mathfrak{m}(\mathbf{x}, r_{\left\langle 1 \right\rangle})$.  Moreover, the activation-masking vector (generated in Line \ref{alg2-5} of Algorithm \ref{ALG2}) indicates that the activation function corresponding to the output's $i$-th entry is inactive, if the all the entries in the $i$-th row of masking matrix are zeros (implying all the entries in the $i$-th row of weight matrix are known model-substructure parameters). Finally, we arrive in the conclusion that the input/output (i.e., $\mathbf{x}$/$\mathbf{y}_{\left\langle 1 \right\rangle }$) of the first PhN layer strictly complies with the available physical knowledge pertaining to the ground truth \eqref{exppf1}, i.e., if the $[\mathbf{A}]_{i,j}$ is a known model-substructure parameter, the $\frac{{\partial {[\mathbf{y}_{\left\langle 1 \right\rangle }]_{i}}}}{{\partial [{\mathfrak{m}}\left( {\mathbf{x},r} \right)]_j}} \equiv \frac{{\partial {[\mathbf{y}]_i}}}{{\partial {[\mathfrak{m}}( {\mathbf{x},r})]_j}} \equiv {[\mathbf{A}]_{i,j}}$ always holds.

We next consider the remaining PhN layers. Considering the Line \ref{alg2-12} Algorithm \ref{ALG2}, we have
\begin{subequations}  
 \begin{align}
\hspace{-0.05cm}[\mathbf{y}_{\left\langle p \right\rangle }]_{1:\text{len}(\mathbf{y})} &= [\mathbf{K}_{\left\langle p \right\rangle } \cdot \mathfrak{m}(\mathbf{y}_{\left\langle p-1 \right\rangle }, r_{\left\langle p \right\rangle})]_{1:\text{len}(\mathbf{y})} +  [\mathbf{a}_{\left\langle p \right\rangle } \odot \text{act}\!\left( {\mathbf{U}_{\left\langle p \right\rangle } \cdot {\mathfrak{m}}( {\mathbf{y}_{\left\langle p-1 \right\rangle },  r_{\left\langle p \right\rangle } })} \right)]_{1:\text{len}(\mathbf{y})} \nonumber\\ 
&= \bigid_{\text{len}(\mathbf{y})} \cdot [\mathfrak{m}(\mathbf{y}_{\left\langle p-1 \right\rangle }, r_{\left\langle p \right\rangle})]_{2:(\text{len}(\mathbf{y}) + 1)}  +  [\mathbf{a}_{\left\langle p \right\rangle } \odot \text{act}\!\left( {\mathbf{U}_{\left\langle p \right\rangle } \cdot {\mathfrak{m}}( {\mathbf{y}_{\left\langle p-1 \right\rangle },  r_{\left\langle p \right\rangle } })}  \right)]_{1:\text{len}(\mathbf{y})} \label{pf41}\\
& = \bigid_{\text{len}(\mathbf{y})} \cdot [\mathbf{y}_{\left\langle p-1 \right\rangle }]_{1:\text{len}(\mathbf{y})} +  [\mathbf{a}_{\left\langle p \right\rangle } \odot \text{act}\!\left( {\mathbf{U}_{\left\langle p \right\rangle } \cdot {\mathfrak{m}}( {\mathbf{y}_{\left\langle p-1 \right\rangle },  r_{\left\langle p \right\rangle } })}  \right)]_{1:\text{len}(\mathbf{y})} \label{pf42}\\
& =  [\mathbf{y}_{\left\langle p-1 \right\rangle }]_{1:\text{len}(\mathbf{y})} +  [\mathbf{a}_{\left\langle p \right\rangle } \odot \text{act}\!\left( {\mathbf{U}_{\left\langle p \right\rangle } \cdot {\mathfrak{m}}( {\mathbf{y}_{\left\langle p-1 \right\rangle },  r_{\left\langle p \right\rangle } })}  \right)]_{1:\text{len}(\mathbf{y})} \nonumber\\
&= [\mathbf{K}_{\left\langle p-1 \right\rangle } \cdot \mathfrak{m}(\mathbf{y}_{\left\langle p-2 \right\rangle }, r_{\left\langle p-1 \right\rangle})]_{1:\text{len}(\mathbf{y})} +  [\mathbf{a}_{\left\langle p \right\rangle } \odot \text{act}\!\left( {\mathbf{U}_{\left\langle p \right\rangle } \cdot {\mathfrak{m}}( {\mathbf{y}_{\left\langle p-1 \right\rangle },  r_{\left\langle p \right\rangle } })}  \right)]_{1:\text{len}(\mathbf{y})} \nonumber\\
&= \bigid_{\text{len}(\mathbf{y})} \cdot [\mathfrak{m}(\mathbf{y}_{\left\langle p-2 \right\rangle }, r_{\left\langle p-1 \right\rangle})]_{2:(\text{len}(\mathbf{y})+1)} +  [\mathbf{a}_{\left\langle p \right\rangle } \odot \text{act}\!\left( {\mathbf{U}_{\left\langle p \right\rangle } \cdot {\mathfrak{m}}( {\mathbf{y}_{\left\langle p-1 \right\rangle },  r_{\left\langle p \right\rangle } })}  \right)]_{1:\text{len}(\mathbf{y})} \nonumber\\
&= \bigid_{\text{len}(\mathbf{y})} \cdot [\mathbf{y}_{\left\langle p-2 \right\rangle }]_{1 :\text{len}(\mathbf{y})} +  [\mathbf{a}_{\left\langle p \right\rangle } \odot \text{act}\!\left( {\mathbf{U}_{\left\langle p \right\rangle } \cdot {\mathfrak{m}}( {\mathbf{y}_{\left\langle p-1 \right\rangle },  r_{\left\langle p \right\rangle } })}  \right)]_{1:\text{len}(\mathbf{y})} \nonumber\\
&=  [\mathbf{y}_{\left\langle p-2 \right\rangle }]_{1 :\text{len}(\mathbf{y})} +  [\mathbf{a}_{\left\langle p \right\rangle } \odot \text{act}\!\left( {\mathbf{U}_{\left\langle p \right\rangle } \cdot \breve{\mathfrak{m}}( {\mathbf{y}_{\left\langle p-1 \right\rangle },  r_{\left\langle p \right\rangle } })}  \right)]_{1:\text{len}(\mathbf{y})} \nonumber\\
& = \ldots \nonumber\\
& = [\mathbf{y}_{\left\langle 1 \right\rangle }]_{1:\text{len}(\mathbf{y})} +  [\mathbf{a}_{\left\langle p \right\rangle } \odot \text{act}\!\left( {\mathbf{U}_{\left\langle p \right\rangle } \cdot {\mathfrak{m}}( {\mathbf{y}_{\left\langle p-1 \right\rangle },  r_{\left\langle p \right\rangle } })}  \right)]_{1:\text{len}(\mathbf{y})} \nonumber\\
& = [\mathbf{K}_{\left\langle 1 \right\rangle } \cdot \mathfrak{m}(\mathbf{x}, r_{\left\langle 1 \right\rangle})]_{1:\text{len}(\mathbf{y})}+  [\mathbf{a}_{\left\langle p \right\rangle } \odot \text{act}\!\left( {\mathbf{U}_{\left\langle p \right\rangle } \cdot {\mathfrak{m}}( {\mathbf{y}_{\left\langle p-1 \right\rangle },  r_{\left\langle p \right\rangle } })}  \right)]_{1:\text{len}(\mathbf{y})}, \nonumber\\
&  = \mathbf{K}_{\left\langle 1 \right\rangle } \cdot \mathfrak{m}(\mathbf{x}, r_{\left\langle 1 \right\rangle}) +  [\mathbf{a}_{\left\langle p \right\rangle } \odot \text{act}\!\left( {\mathbf{U}_{\left\langle p \right\rangle } \cdot {\mathfrak{m}}( {\mathbf{y}_{\left\langle p-1 \right\rangle },  r_{\left\langle p \right\rangle } })}  \right)]_{1:\text{len}(\mathbf{y})}, \label{pf46}
 \end{align}
\end{subequations}
where \eqref{pf41} and \eqref{pf42} are obtained from their previous steps via considering the structure of block matrix $\mathbf{K}_{\left\langle t \right\rangle }$ (generated in Line \ref{alg2-7} of Algorithm \ref{ALG2}) and the formula of augmented monomials: $\mathfrak{m}(\mathbf{x},r) = \left[1;~\mathbf{x};~[\mathfrak{m}( {\mathbf{x},r})]_{(\text{len}(\mathbf{x})+2) : \text{len}(\mathfrak{m}(\mathbf{x},r))} \right]$ (generated via Algorithm \ref{ALG1}). The remaining iterative steps follow the same path. 

The training loss function is to push the terminal output of Algorithm \ref{ALG2} (i.e., $\widehat{\mathbf{y}} =  \mathbf{y}_{\left\langle p \right\rangle }$) to approximate the real output $\mathbf{y}$, which in light of \eqref{pf46} yields 
 \begin{align}
\widehat{\mathbf{y}} &=  \mathbf{K}_{\left\langle 1 \right\rangle } \cdot \mathfrak{m}(\mathbf{x}, r_{\left\langle 1 \right\rangle}) +  [\mathbf{a}_{\left\langle p \right\rangle } \odot \text{act}\!\left( {\mathbf{U}_{\left\langle p \right\rangle } \cdot {\mathfrak{m}}( {\mathbf{y}_{\left\langle p-1 \right\rangle },  r_{\left\langle p \right\rangle } })}  \right)]_{1:\text{len}(\mathbf{y})} \nonumber\\
&=  \mathbf{K}_{\left\langle 1 \right\rangle } \cdot \mathfrak{m}(\mathbf{x}, r_{\left\langle 1 \right\rangle}) +  \mathbf{a}_{\left\langle p \right\rangle } \odot \text{act}\!\left( {\mathbf{U}_{\left\langle p \right\rangle } \cdot {\mathfrak{m}}( {\mathbf{y}_{\left\langle p-1 \right\rangle },  r_{\left\langle p \right\rangle } })}  \right), \label{pf5}
 \end{align}
where \eqref{pf5} from its previous step is obtained via considering the fact $\text{len}(\widehat{\mathbf{y}}) = \text{len}(\mathbf{y}) = \text{len}(\mathbf{y}_{\left\langle p \right\rangle })$. Meanwhile, the condition of generating weight-masking matrix in Line \ref{alg2-8} of Algorithm \ref{ALG2} removes all the node-representations' connections with the known model-substructure parameters included in $\mathbf{K}_{\left\langle 1 \right\rangle }$. Therefore, we can conclude that in the terminal output computation \eqref{pf5}, the term  $\mathbf{a}_{\left\langle p \right\rangle } \odot \text{act}\!\left( {\mathbf{U}_{\left\langle p \right\rangle } \cdot {\mathfrak{m}}( {\mathbf{y}_{\left\langle p-1 \right\rangle },  r_{\left\langle p \right\rangle } })}  \right)$ does not have influence on the computing of knowledge term $\mathbf{K}_{\left\langle 1 \right\rangle } \cdot \mathfrak{m}(\mathbf{x}, r_{\left\langle 1 \right\rangle})$. Thus,  the Algorithm \ref{ALG2} strictly embeds and preserves the available knowledge pertaining to the physics model of ground truth \eqref{eq:lobja}, or equivalently the \eqref{exppf1}.

\subsection{Proof of Theorem \ref{cor}} \label{SI06}
Due to Theorem \ref{thm3} in Supplementary Information \ref{Aux}, the number of augmented monomials of $d$ cascading PhNs \eqref{cko} is obtained as 
\begin{align}
\sum\limits_{p = 1}^d {\text{len}(\mathfrak{m}({\mathbf{x}},r_{{\left\langle p \right\rangle }}))}  & =
\underbrace{\sum\limits_{s = 1}^{{r_{\left\langle 1 \right\rangle }}} {\frac{{\left( {n + s - 1} \right)!}}{{\left( {n - 1} \right)s!}}} + 1}_{\text{the first PhN}} + \underbrace{\sum\limits_{v = 1}^{d - 1} {\sum\limits_{s = 1}^{{r_{{\left\langle v+1 \right\rangle }}}} {\frac{{\left( {{n_{\left\langle v \right\rangle }} + s - 1} \right)!}}{{\left( {{n_{\left\langle v \right\rangle }} - 1} \right)!s!}}} } + d-1}_{\text{the remaining PhNs}}. \label{ccff1}
\end{align}
The condition \eqref{mcc} implies that $r > r_{{\left\langle 1 \right\rangle}}$, which in conjunction with \eqref{cpeq1}, lead to 
\begin{align}
\text{len}(\mathfrak{m}({\mathbf{x}},r)) = \sum\limits_{s = 1}^{{r_{\left\langle 1 \right\rangle }}} {\frac{{\left( {n + s - 1} \right)!}}{{\left( {n - 1} \right)!s!}}}  + \sum\limits_{s = {r_{\left\langle 1 \right\rangle }} + 1}^r {\frac{{\left( {n + s - 1} \right)!}}{{\left( {n - 1} \right)!s!}}} + 1. \label{ccff2}
\end{align}
Subtracting \eqref{ccff1} from \eqref{ccff2} yields \eqref{cffc}. 

\subsection{Derivations of Solution (\ref{rch43})} \label{SI10}
With the consideration of \eqref{rchoo}--\eqref{rch3}, the safety formulas: $\left[ {\bf{s}}({\bf{u}}(k)) \right]_1 = [\widehat{\mathbf{c}}]_1$ and $\left[ {\bf{s}}({\bf{u}}(k)) \right]_2 = [\widehat{\mathbf{c}}]_2$ can be rewritten as 
\begin{align}
{\lambda _1}[\widehat{\mathbf{u}}(k)]_1^2 + {\lambda _2}[\widehat{\mathbf{u}}(k)]_2^2 &= {[\widehat{\mathbf{c}}]_1} - {[\mathbf{b}]_1}, \label{rch41}\\
{\left[\begin{array}{l}
\theta (k)\\
\gamma (k)
\end{array} \right]^\top} 
 {\mathbf{P}}_2  \left[ \begin{array}{l}
\theta (k)\\
\gamma (k)
\end{array} \right] &= {s_{11}}[\widehat{\mathbf{u}}(k)]_1^2 + 2{s_{12}}{[\widehat{\mathbf{u}}}(k)]_1{[\widehat{\mathbf{u}}}(k)]_2 + {s_{22}}[\widehat{\mathbf{u}}(k)]_2^2 =  [\mathbf{b}]_2 - {[\widehat{\mathbf{c}}]_2},  \label{rch42}
\end{align}

We now define:  
\begin{align}
\overline{\mu}_1 \triangleq \frac{{{[\widehat{\mathbf{c}}]_1} - {[\mathbf{b}]_1}}}{{{\lambda _1}}}, ~~~~~\overline{\lambda} \triangleq \frac{{{\lambda _2}}}{{{\lambda _1}}}, ~~~~~\bar b \triangleq {[\mathbf{b}]_2} - {[\widehat{\mathbf{c}}]_2}. \label{defop}
\end{align}
leveraging which, the \eqref{rch41} is rewritten as 
\begin{align}
[\widehat{\mathbf{u}}(k)]_1^2 = \overline{\mu}_1 - \overline{\lambda}[\widehat{\mathbf{u}}(k)]_2^2, \label{ppko1}
\end{align}
and we can obtain from \eqref{rch42} that 
\begin{align}
\hspace{-0.7cm} 4s_{12}^2[\widehat{\mathbf{u}}(k)]_2^2[\widehat{\mathbf{u}}(k)]_1^2 = {{\overline{b}}^2} + s_{11}^2[\widehat{\mathbf{u}}(k)]_1^4 + s_{22}^2[\widehat{\mathbf{u}}(k)]_2^4 - 2\overline{b}{s_{11}}[\widehat{\mathbf{u}}(k)]_1^2 - 2\overline{b}{s_{22}}[\widehat{\mathbf{u}}(k)]_2^2 + 2{s_{11}}{s_{22}}[\widehat{\mathbf{u}}( k)]_1^2[\widehat{\mathbf{u}}(k)]_2^2, \nonumber
\end{align}
substituting \eqref{ppko1} into which yields 
\begin{align}
{\varpi _1}[\widehat{\mathbf{u}}]_2^4\left( k \right) + {\varpi_2}[\widehat{\mathbf{u}}]_2^2\left( k \right) + {\varpi _3}\left( k \right) = 0, \label{ppko2}
\end{align}
where 
\begin{align}
{\varpi _1} &\triangleq s_{11}^2{\overline{\lambda}^2} + s_{22}^2 - 2{s_{11}}{s_{22}}\overline{\lambda}  + 4s_{12}^2\overline{\lambda}, \label{ppko21}\\
{\varpi _2} &\triangleq 2\bar b {s_{11}}\overline{\lambda}  - 2s_{11}^2{\overline{\mu}_1}\overline{\lambda}  - 2\bar b{s_{22}} + 2{s_{11}}{s_{22}}{\overline{\mu} _1} - 4s_{12}^2{\overline{\mu}_1}, \label{ppko22}\\
{\varpi _3} &\triangleq {\bar b^2} + s_{11}^2\overline{\mu}_1^2 - 2\bar b{s_{11}}{\overline{\mu}_1}. \label{ppko23}
\end{align}

Considering $\widehat{u}_2^2(k) \ge 0$, the solution of \eqref{ppko2} is 
\begin{align}
[\widehat{\mathbf{u}}]_2^2(k) = \frac{{\sqrt {\varpi_2^2 - 4{\varpi _1}{\varpi _3}}  - {\varpi_2}}}{{2{\varpi_1}}}, \label{ppko3}
\end{align}
substituting which into \eqref{ppko1} yields 
\begin{align}
[\widehat{\mathbf{u}}]_1^2(k) = {{\overline{\mu} }_1} -  \overline{\lambda} \frac{{\sqrt {\varpi _2^2 - 4{\varpi_1}{\varpi _3}}  - {\varpi_2}}}{{2{\varpi _1}}}. \label{ppko4}
\end{align}
The $\widehat{\mathbf{u}}(k)$ is then straightforwardly obtained from \eqref{ppko3} and \eqref{ppko4}: 
\begin{align}
\widehat{\mathbf{u}}(k) = \left[ { \pm \sqrt {{{\overline{\mu}}_1} -  \overline{\lambda} \frac{{\sqrt {\varpi_2^2 - 4{\varpi_1}{\varpi_3}}  - {\varpi_2}}}{{2{\varpi_1}}}}; \pm \sqrt {\frac{{\sqrt {\varpi_2^2 - 4{\varpi _1}{\varpi _3}}  - {\varpi_2}}}{{2{\varpi _1}}}} } \right], \nonumber
\end{align}
which, in conjunction with \eqref{rch3} and $Q^{-1} = Q = Q^\top$, lead to 
\begin{align}
\left[ \begin{array}{l}
\theta \left( k \right)\\
\gamma \left( k \right)
\end{array} \right] = {\mathbf{Q}_1}\left[ \begin{array}{l}
 \pm \sqrt {{{\overline{\mu} }_1} - \overline{\lambda} \frac{{\sqrt {\varpi _2^2 - 4{\varpi_1}{\varpi _3}}  - {\varpi_2}}}{{2{\varpi_1}}}} \\
 \pm \sqrt {\frac{{\sqrt {\varpi _2^2 - 4{\varpi _1}{\varpi_3}}  - {\varpi _2}}}{{2{\varpi_1}}}} 
\end{array} \right]. \label{ppko5}
\end{align}
Substituting the notations defined in \eqref{defop} into \eqref{ppko5} and \eqref{ppko21}--\eqref{ppko23}  results in \eqref{rch43} and \eqref{pko1}--\eqref{pko3}, respectively.

\section*{Acknowledgements}
This work was supported in part by National Science Foundation (award number: CPS-1932529), Hong Kong Research Grants Council (RGC) Theme-based Research Scheme T22-505/19-N (P0031331, RBCR, P0031259, RBCP), and RGC Germany/HK Joint 
Research Scheme under Grant G-PolyU503/16.

\section*{Author Contributions}
Y.M., L.S., and H.S. designed research; Y.M. performed research; Y.M. led experiments, H.S. conducted experiment of deep Koopman, and Y. L. collected vehicle data and implemented self-correcting Phy-Taylor in AutoRally; Y.M., L.S., Q.W., and T.A. participated in research discussion and data analysis; Y.M. and S.H. wrote this manuscript; T.A. edited this manuscript; L.S. and T.A. led the research team. 
\end{document}